\pgfplotsset{compat=1.14}
\crefname{equation}{}{}
\Crefname{equation}{Equation}{Equations}
\crefname{figure}{Figure}{Figures}
\Crefname{proposition}{Proposition}{Propositions}
\Crefname{conjecture}{Conjecture}{Conjectures}
\newtheorem{theorem}{Theorem}[section]
\newtheorem{lemma}[theorem]{Lemma}
\newtheorem{example}[theorem]{Example}
\newtheorem{corollary}[theorem]{Corollary}
\newtheorem{proposition}[theorem]{Proposition}
\newtheorem{conjecture}[theorem]{Conjecture}
\newtheorem{definition}[theorem]{Definition}
\newcommand{\robrisk}{R_{\mathrm{rob}}}
\newcommand{\stdrisk}{R_{\mathrm{std}}}
\newcommand{\bayesrisk}{R_{\mathrm{Bay}}}
\newcommand{\bayesclass}{\hty^*_{\mathrm{Bay}}}
\newcommand{\linclass}{\hty_{\mathrm{lin}}}
\newcommand{\intclass}{\hty_{\mathrm{int}}}
\newcommand{\interior}{\textbf{int}\hspace{2pt}}
\newcommand{\norm}[1]{\left\| #1 \right\|}
\newcommand{\primetranspose}{
  \mathcode`'="8000
  \begingroup\lccode`~=`'
  \lowercase{\endgroup\def~}{^\top}
}
\newcommand{\oneortwo}[2]{%
  \makeatletter%
  \if@twocolumn%
  #2%
  \else%
  #1%
  \fi%
  \makeatother%
}
\newcommand{\blfootnote}[1]{
  \begingroup
  \renewcommand\thefootnote{}\footnote{#1}%
  \addtocounter{footnote}{-1}%
  \endgroup
}
\newcommand{\revone}[1]{#1}
\title{Provable tradeoffs in adversarially robust classification\thanks{Equal contribution from all authors.}}
\author{%
  Edgar Dobriban%
  \thanks{%
    Department of Statistics and Data Science,
    University of Pennsylvania,
    Philadelphia, PA, 19104 USA
    (emails: \{dobriban, dahong67\}@wharton.upenn.edu).
  }%
  \and
  Hamed Hassani%
  \thanks{%
    Department of Electrical and Systems Engineering,
    University of Pennsylvania,
    Philadelphia, PA, 19104 USA
    (emails: \{hassani, arobey1\}@seas.upenn.edu).
  }%
  \and
  David Hong%
  \footnotemark[1]%
  \and
  Alexander Robey%
  \footnotemark[2]%
}
\begin{document}

\maketitle

\blfootnote{%
  E. Dobriban was supported in part by NSF BIGDATA grant IIS 1837992.
  D. Hong was supported in part by
  NSF BIGDATA grant IIS 1837992,
  the Dean's Fund for Postdoctoral Research of the Wharton School,
  and NSF Mathematical Sciences Postdoctoral Research Fellowship DMS 2103353.
  The research of A. Robey and H. Hassani was supported by
  NSF HDR TRIPODS award 1934876,
  NSF award CPS-1837253,
  NSF award CIF-1910056,
  NSF CAREER award CIF-1943064,
  and the Air Force Office of Scientific Research Young Investigator Program (AFOSR-YIP) under award \#FA9550-20-1-0111.
}

\begin{abstract}

It is well known that machine learning methods can be vulnerable to adversarially-chosen perturbations of their inputs. Despite significant progress in the area, foundational open problems remain. In this paper, we address several key questions. We derive exact and approximate \emph{Bayes-optimal robust classifiers} for the important setting of two- and three-class Gaussian classification problems with arbitrary imbalance, for $\ell_2$ and $\ell_\infty$ adversaries. In contrast to classical Bayes-optimal classifiers, determining the optimal decisions here cannot be made pointwise and new theoretical approaches are needed. 
We develop and leverage new tools, including recent breakthroughs from probability theory on robust isoperimetry, which, to our knowledge, have not yet been used in the area. 
Our results reveal fundamental tradeoffs between standard and robust accuracy that grow when data is imbalanced.  We also show further results, including an analysis of classification calibration for convex losses in certain models, and finite sample rates for the robust risk. 

\end{abstract}

\textbf{Index terms.}
adversarial robustness,
Gaussian mixtures,
provable tradeoffs,
class imbalance.


\section{Introduction}

Machine learning methods, such as deep neural nets, have shown remarkable performance in numerous application domains ranging from computer vision to natural language processing \citep[see, e.g.,][]{lecun2015deep}. However, despite this documented success, it is now well-known that many of these methods are also highly vulnerable to adversarial attacks.  Indeed, it has been repeatedly shown that adversarially-chosen, imperceptible changes to the input data at test time can have undesirable effects on the predictions of models that otherwise perform well. For example, imperceptible pixel-wise changes to images are known to severely degrade the performance of state-of-the-art image classifiers~\citep{szegedy2013intriguing,biggio2013evasion}.

\emph{Adversarial training} methods \citep[e.g.,][]{goodfellow2014explaining,madry2017towards,wong2017provable,moosavi2016deepfool,zhang2019theoretically}
tackle this problem by seeking models that are \emph{robust} to adversarial attacks.
A common approach is to replace the standard risk used to assess classifier performance
with a \emph{robust risk} that incorporates the possibility of small perturbations to the input.
To illustrate this approach,  consider the classification problem of assigning labels $y \in \clC$ to input vectors (e.g.\revone{,} images) $x \in \bbR^p$.
Traditional, non-adversarial training techniques seek classifiers $\hty: \bbR^p \to \clC$
that minimize the standard risk (misclassification probability)%
\footnote{To simplify the discussion,
we focus here on the 0-1 loss for which the risk corresponds to misclassification.
We consider surrogate losses in \cref{sec:landscape,sec:finite:sample}.}
\begin{equation} \label{eq:stdrisk}
  \stdrisk(\hty)
  \coloneqq
  \Pr_{x,y}\big\{ \hty(x) \neq y \big\}
  = \sum_{c \in \clC} \Pr(y=c) \Pr_{x|y=c} \big\{ \hty(x) \neq c \big\}
  = \E_x \Pr_{y|x} \big\{ \hty(x) \neq y \big\}
  .
\end{equation}
To obtain a classifier robust to $\ep$-perturbations with respect to a given norm $\|\cdot\|$,
one can minimize the corresponding robust risk
\begin{equation} \label{eq:robustrisk:l2}
  \robrisk(\hty,\ep,\|\cdot\|)
  \coloneqq
  \Pr_{x,y}
  \big\{
    \exists_{\delta : \|\delta\| \leq \ep}
    \;\;
    \hty(x+\delta) \neq y
  \big\}
  =
  \sum_{c \in \clC}
  \Pr(y=c)
  \Pr_{x|y=c}
  \big\{
    \exists_{\delta : \|\delta\| \leq \ep}
    \;\;
    \hty(x+\delta) \neq c
  \big\}
  .
\end{equation}
The robust risk penalizes errors on $(x,y)$ pairs from the data distribution, as well as on data \emph{after $\ep$-sized perturbations $\delta \in \bbR^p$}.
Furthermore, the robust risk defined in~\eqref{eq:robustrisk:l2} generalizes the standard risk \cref{eq:stdrisk}
since $\stdrisk(\hty) = \robrisk(\hty,0,\|\cdot\|)$.

While minimizing the robust risk has been shown to indeed improve robustness in practice, this approach is not without its drawbacks.  Numerous works have argued that there may be a fundamental tradeoff between robustness and standard test risk \citep[e.g.,][etc]{tsipras2018robustness,su2018robustness} and that generalization after adversarial training requires significantly more data \citep[e.g.,][etc]{schmidt2018adversarially}.  Moreover, whereas the problem of training a deep neural network typically is overparameterized, finding worst-case perturbations of data as in \eqref{eq:robustrisk:l2} is severely underparameterized and therefore this problem does not benefit from the benign optimization landscape of standard training~\cite{soltanolkotabi2018theoretical, zhang2016understanding, arpit2017closer}.  To this end, a growing body of work has sought to analyze the theoretical properties of these tradeoffs to gain a deeper understanding of the fundamental limits of adversarial robustness~\citep[e.g.,][etc]{tsipras2018robustness,DBLP:conf/icml/ZhangYJXGJ19, raghunathan2019adversarial,javanmard2020precise, chen2020more, min2020tcc:arxiv:v1}.

Despite the progress made toward uncovering the tradeoffs inherent to adversarial training, many fundamental questions remain unresolved.
What do adversarially robust classifiers
that minimize the robust risk \cref{eq:robustrisk:l2}
look like in simple settings?
How do they depend on properties of the data distribution
such as class separation and class imbalance,
as well as the choice of perturbation radius $\ep$ and norm $\|\cdot\|$?
How are they affected when surrogate losses are used
or when the classifier is trained from small numbers of samples?  


\definecolor{wong1}{RGB}{  0,  0,  0}
\definecolor{wong2}{RGB}{230,159,  0}
\definecolor{wong3}{RGB}{ 86,180,233}
\definecolor{wong4}{RGB}{  0,158,115}
\definecolor{wong5}{RGB}{240,228, 66}
\definecolor{wong6}{RGB}{  0,114,178}
\definecolor{wong7}{RGB}{213, 94,  0}
\definecolor{wong8}{RGB}{204,121,167}
\begin{figure} \centering
  \pgfmathdeclarefunction{gauss}{2}{%
    \pgfmathparse{1/(#2*sqrt(2*pi))*exp(-((x-#1)^2)/(2*#2^2))}%
  }

  \begin{tikzpicture}
  \begin{axis}[
    every axis plot post/.append style={mark=none,domain=-4.25:4.25,samples=25,smooth},
    enlargelimits = false,
    axis x line* = middle,
    axis y line  = none,
    xtick = {-1,0,1},
    xticklabels = {\small\textcolor{wong2}{$-\mu$},\small\textcolor{wong3}{$0$},\small\textcolor{wong4}{$+\mu$}},
    ymin = 0,
    ymax = 0.45,
    x = 1cm,
    y = 4cm,
    clip = false,
  ]
    \addplot[thick,wong2] {0.35*gauss(-1,1)*2}
      node[above left, pos=0.24] {\footnotesize \shortstack[l]{$\Pr(y=-1)$\\$\cdot\, p_{x|y=-1}(x)$}};
    \addplot[thick,wong3] {0.30*gauss( 0,1)*2} coordinate[pos=0.50] (zero);
    \draw[<-,very thick,wong3] ([yshift=1mm]zero) -- ++(axis direction cs:0,0.1)
      node[above] {\footnotesize $\Pr(y= 0) \cdot p_{x|y= 0}(x)$};
    \addplot[thick,wong4] {0.35*gauss( 1,1)*2}
      node[above right, pos=0.76] {\footnotesize \shortstack[l]{$\Pr(y=+1)$\\$\cdot\, p_{x|y=+1}(x)$}};
    
    \filldraw[opacity=0.03] (axis cs:-4.4,-0.12) rectangle (11.20,-0.71);

    \draw[wong2,line width=1.2pt] (axis cs:-4.25,-0.20) -- (axis cs:-0.35,-0.20);
    \draw[wong3,line width=1.2pt] (axis cs:-0.35,-0.20) -- (axis cs: 0.35,-0.20);
    \draw[wong4,line width=1.2pt] (axis cs: 0.35,-0.20) -- (axis cs: 4.25,-0.20);
    \node[anchor=west] at (axis cs: 4.25,-0.20)
      {\footnotesize $\bayesclass(x) \in \argmax_{c \in \clC} \; \Pr(y=c) \cdot p_{x|y=c}(x)$};

    \node[anchor=west] at (axis cs: 4.25,-0.35) {\scriptsize $\robrisk(\bayesclass,\ep,\|\cdot\|) =$};

    \draw[wong2,line width=1.2pt] (axis cs:-0.70,-0.45) -- (axis cs: 4.25,-0.45);
    \node[anchor=south east,yshift=-2.5pt,wong2] at (axis cs: 4.25,-0.45)
      {\scriptsize $x : \exists_{\delta : \|\delta\| \leq \ep} \; \bayesclass(x+\delta) \neq -1$};
    \draw[Red,|-,semithick] (axis cs:-0.70,{-0.45+0.04}) -- (axis cs: {-0.70+0.1},{-0.45+0.04});
    \draw[Red,|-,semithick] (axis cs:-0.35,{-0.45+0.04}) -- (axis cs: {-0.35-0.1},{-0.45+0.04});
    \node[Red] at (axis cs: {((-0.70)+(-0.35))/2},{-0.45+0.04}) {\scriptsize $\ep$};
    \node[anchor=west,wong2] at (axis cs: 4.25,-0.45)
      {\scriptsize $\quad  \Pr(y=          -1) \Pr_{x | y=          -1} \{\exists_{\delta : \|\delta\| \leq \ep} \; \bayesclass(x+\delta) \neq           -1\}$};

    \draw[wong4,line width=1.2pt] (axis cs: 0.70,-0.55) -- (axis cs:-4.25,-0.55);
    \node[anchor=south west,yshift=-2.5pt,wong4] at (axis cs:-4.25,-0.55)
      {\scriptsize $x : \exists_{\delta : \|\delta\| \leq \ep} \; \bayesclass(x+\delta) \neq +1$};
    \draw[Red,|-,semithick] (axis cs: 0.35,{-0.55+0.04}) -- (axis cs: { 0.35+0.1},{-0.55+0.04});
    \draw[Red,|-,semithick] (axis cs: 0.70,{-0.55+0.04}) -- (axis cs: { 0.70-0.1},{-0.55+0.04});
    \node[Red] at (axis cs: {(( 0.35)+( 0.70))/2},{-0.55+0.04}) {\scriptsize $\ep$};
    \node[anchor=west,wong4] at (axis cs: 4.25,-0.55)
      {\scriptsize $\quad+ \Pr(y=          +1) \Pr_{x | y=          +1} \{\exists_{\delta : \|\delta\| \leq \ep} \; \bayesclass(x+\delta) \neq           +1\}$};

    \draw[wong3,line width=1.2pt] (axis cs:-0.00,-0.65) -- (axis cs:-4.25,-0.65);
    \draw[wong3,line width=1.2pt] (axis cs: 0.00,-0.65) -- (axis cs: 4.25,-0.65);
    \node[anchor=south east,yshift=-2.5pt,wong3] at (axis cs: 4.25,-0.65)
      {\scriptsize $x : \exists_{\delta : \|\delta\| \leq \ep} \; \bayesclass(x+\delta) \neq 0$};
    \draw[Red,|-,semithick] (axis cs:-0.35,{-0.65+0.04}) -- (axis cs: {-0.35+0.1},{-0.65+0.04});
    \draw[Red,|-,semithick] (axis cs:-0.00,{-0.65+0.04}) -- (axis cs: {-0.00-0.1},{-0.65+0.04});
    \node[Red] at (axis cs: {((-0.35)+(-0.00))/2},{-0.65+0.04}) {\scriptsize $\ep$};
    \draw[Red,|-,semithick] (axis cs: 0.00,{-0.65+0.04}) -- (axis cs: { 0.00+0.1},{-0.65+0.04});
    \draw[Red,|-,semithick] (axis cs: 0.35,{-0.65+0.04}) -- (axis cs: { 0.35-0.1},{-0.65+0.04});
    \node[Red] at (axis cs: {(( 0.00)+( 0.35))/2},{-0.65+0.04}) {\scriptsize $\ep$};
    \node[anchor=west,wong3] at (axis cs: 4.25,-0.65)
      {\scriptsize $\quad+ \Pr(y=\phantom{+}0) \Pr_{x | y=\phantom{+}0} \{\exists_{\delta : \|\delta\| \leq \ep} \; \bayesclass(x+\delta) \neq \phantom{+}0\}$};

    \draw[semithick,dotted] (axis cs:-0.35,{0.11*2}) -- (axis cs:-0.35,-0.65);
    \draw[semithick,dotted] (axis cs: 0.35,{0.11*2}) -- (axis cs: 0.35,-0.65);    

    \filldraw[opacity=0.03] (axis cs:-4.4,-0.77) rectangle (11.20,-1.46);

    \draw[wong2,line width=1.2pt] (axis cs:-4.25,-0.90) -- (axis cs:-0.00,-0.90);
    \draw[wong4,line width=1.2pt] (axis cs: 0.00,-0.90) -- (axis cs: 4.25,-0.90);
    \node[anchor=west] at (axis cs: 4.25,-0.90)
      {\footnotesize $\hty(x) = \begin{cases} +1 , & \text{if } x \geq 0 , \\ -1 , & \text{otherwise} . \end{cases}$};

    \node[anchor=west] at (axis cs: 4.25,-1.10) {\scriptsize $\robrisk(\hty,\ep,\|\cdot\|) =$};

    \draw[wong2,line width=1.2pt] (axis cs:-0.35,-1.20) -- (axis cs: 4.25,-1.20);
    \node[anchor=south east,yshift=-2.5pt,wong2] at (axis cs: 4.25,-1.20)
      {\scriptsize $x : \exists_{\delta : \|\delta\| \leq \ep} \; \hty(x+\delta) \neq -1$};
    \draw[Red,|-,semithick] (axis cs:-0.35,{-1.20+0.04}) -- (axis cs: {-0.35+0.1},{-1.20+0.04});
    \draw[Red,|-,semithick] (axis cs:-0.00,{-1.20+0.04}) -- (axis cs: {-0.00-0.1},{-1.20+0.04});
    \node[Red] at (axis cs: {((-0.35)+(-0.00))/2},{-1.20+0.04}) {\scriptsize $\ep$};
    \node[anchor=west,wong2] at (axis cs: 4.25,-1.20)
      {\scriptsize $\quad  \Pr(y=          -1) \Pr_{x | y=          -1} \{\exists_{\delta : \|\delta\| \leq \ep} \; \hty(x+\delta) \neq           -1\}$};

    \draw[wong4,line width=1.2pt] (axis cs: 0.35,-1.30) -- (axis cs:-4.25,-1.30);
    \node[anchor=south west,yshift=-2.5pt,wong4] at (axis cs:-4.25,-1.30)
      {\scriptsize $x : \exists_{\delta : \|\delta\| \leq \ep} \; \hty(x+\delta) \neq +1$};
    \draw[Red,|-,semithick] (axis cs: 0.00,{-1.30+0.04}) -- (axis cs: { 0.00+0.1},{-1.30+0.04});
    \draw[Red,|-,semithick] (axis cs: 0.35,{-1.30+0.04}) -- (axis cs: { 0.35-0.1},{-1.30+0.04});
    \node[Red] at (axis cs: {(( 0.00)+( 0.35))/2},{-1.30+0.04}) {\scriptsize $\ep$};
    \node[anchor=west,wong4] at (axis cs: 4.25,-1.30)
      {\scriptsize $\quad+ \Pr(y=          +1) \Pr_{x | y=          +1} \{\exists_{\delta : \|\delta\| \leq \ep} \; \hty(x+\delta) \neq           +1\}$};

    \draw[wong3,line width=1.2pt] (axis cs:-4.25,-1.40) -- (axis cs: 4.25,-1.40);
    \node[anchor=south east,yshift=-2.5pt,wong3] at (axis cs: 4.25,-1.40)
      {\scriptsize $x : \exists_{\delta : \|\delta\| \leq \ep} \; \hty(x+\delta) \neq  0$};
    \node[anchor=west,wong3] at (axis cs: 4.25,-1.40)
      {\scriptsize $\quad+ \Pr(y=\phantom{+}0) \Pr_{x | y=\phantom{+}0} \{\exists_{\delta : \|\delta\| \leq \ep} \; \hty(x+\delta) \neq \phantom{+}0\}$};

    \draw[semithick,dotted] (axis cs: 0.00,-0.90) -- (axis cs: 0.00,-1.40);

    \node[anchor=south,rotate=90] at (axis cs:-4.3,-0.685) {\footnotesize $\robrisk(\hty,\ep,\|\cdot\|) \; < \; \robrisk(\bayesclass,\ep,\|\cdot\|)$};
  \end{axis}
  \end{tikzpicture}
  \caption{Illustration of differences between the standard and robust risk.
    The Bayes classifier $\bayesclass$ minimizes the standard risk
    by maximizing $\Pr(y=c) \cdot p_{x|y=c}(x)$ for each $x$ pointwise,
    so it assigns a nontrivial interval around $x=0$ to the zero class.
    However, it has worse robust risk than an alternative $\hty$ that drops the zero class.
    Minimizing the robust risk does not reduce to making optimal pointwise decisions.
  }
  \label{fig:illustration:threeclass}
\end{figure}

Resolving these issues is complicated by the fact that the robust risk \cref{eq:robustrisk:l2} is significantly more challenging to minimize than the standard risk \cref{eq:stdrisk}.
Indeed, in the standard, non-robust setting,
much of our understanding stems from knowing the optimal classifier, which minimizes the standard risk.
As is well known, e.g., \cite[pg. 216]{anderson1958introduction},
minimizing the standard risk
reduces to making an optimal \emph{pointwise} choice for each $x \in \bbR^p$.  In general the minimizer is given by the Bayes optimal classifier
\begin{equation}
  \bayesclass(x) \in \argmax_{c \in \clC} \; \Pr_{y|x}(y=c)
  = \argmax_{c \in \clC} \; \Pr(y=c) \cdot p_{x|y=c}(x)
  . \label{eq:bayes-opt}
\end{equation}
Unfortunately, an analogous technique has not yet been found for minimizing the robust risk and for deriving expressions for optimal robust classifiers.  This is largely because---unlike minimizing the standard risk---minimizing the robust risk does not reduce to making pointwise decisions depending on the data distribution at each given point individually.  

To elucidate the differences between minimizing the standard and robust risks, consider the simple yet fundamental setting of the classification of points drawn from Gaussian distributions.  In particular, suppose that each of three classes is distributed according to a Gaussian distribution, $\clN(-\mu,1)$, $\clN(0,1)$ and $\clN(\mu,1)$ respectively,
with respective proportions $\Pr(c=-1) = \Pr(c=+1) = 0.35$ and $\Pr(c=0) = 0.30$,
as shown in \cref{fig:illustration:threeclass}.
Deciding how to optimally classify a point like $x=0$ is trivial when minimizing the standard risk.
One simply compares $\Pr(y=c) \cdot p_{x|y=c}(0)$ across $c \in \{-1,0,1\}$
and selects the class corresponding to the largest term
to obtain the Bayes classification of $\bayesclass(0) = 0$.
To minimize the robust risk, however,
one must also consider the behavior of the classifier on the entire $\ep$-neighborhood of $x$.
In this case, it turns out that dropping the zero class altogether engenders a more robust classifier, meaning that
a classifier that assigns $\hty(0) = +1$ has smaller robust risk than the Bayes optimal classifier defined in~\eqref{eq:bayes-opt}.  In this way, minimizing the robust risk does not reduce to a problem depending on the pointwise densities like for the standard risk.

This fundamental difference between the standard and robust risks
means that new techniques are needed for deriving optimal robust classifiers.
Even for the two-class Gaussian classification setting, while it is well-known that linear classifiers minimize the standard risk,
it is not immediately clear that an analogous result holds when minimizing the robust risk.  

To address challenges of this type, in this paper we provide new insights and understanding
by deriving optimal robust classifiers
in the fundamental setting of imbalanced Gaussian distributions.
This precise characterization allows us to \revone{rigorously} investigate
the questions enumerated above,
and moreover reveals \emph{fundamental tradeoffs} that arise
between standard and robust classification.
\revone{Namely, we show that in this foundational setting,
a tradeoff between standard and robust classification arises
not merely because we have not yet managed to find
a classifier (from some appropriately chosen family)
that minimizes both risks.
Rather, no such classifier exists in general.
The tradeoff holds no matter what training methods are used,
how much computation is available,
how many training data points are available,
or what hypothesis class is chosen.}
\revone{An additional feature of our analysis
is that}
the simplicity of a Gaussian mixture
makes it easier to interpret and reason about the results,
helping build intuition about adversarially robust learning.

\noindent\textbf{Contributions.} Our contributions are as follows:

\begin{compactenum}
    \item We find the \emph{optimal robust classifiers} for the foundational setting of two- and three-class imbalanced Gaussian classification with respect to $\ell_2$ and $\ell_\infty$ norm-bounded adversaries in the \emph{imbalanced data setting}. These were previously unknown and involve new fundamental challenges.
    To tackle this problem, we develop a new proof method, carefully combining the characterization of exact Gaussian isoperimetry---i.e., the equality case of Gaussian concentration of measure---with the Neyman-Pearson lemma and Fisher's linear discriminant.\footnote{While Gaussian distributions are in some sense specific, they are also fundamental and broadly applicable. Indeed, many datasets are well-approximated by Gaussian distributions, see textbooks such as \cite{hastie2009elements,bishop2006pattern}, including even data distributions generated by GANs \citep{seddik2020random}.} 
    This introduces a \emph{new and nontrivial theoretical approach}.
    \item We use these results to identify the role of \emph{class imbalance} for tradeoffs between accuracy and robustness for two and three-class $\ell_2$ norm-bounded adversaries.
    In particular, we uncover \emph{fundamental distinctions between balanced and imbalanced classes}.
    Balanced classes experience no tradeoff with respect to the Bayes risk:
    the optimal non-robust classifier also turns out to minimize the robust risk.
    However, an unavoidable tradeoff appears in the imbalanced case:
    the boundary measure of the larger class expands and gets larger weight,
    so the optimal boundary necessarily moves.
    Thus, no classifier simultaneously minimizes both standard and robust risk;
    the optimal \emph{non-robust} classifier is not the optimal \emph{robust} classifier.

    \item We show how the optimal robust classifiers relate to previously proposed randomized classifiers. Additionally, we characterize optimal robust classifiers for data with more general covariances and with low-dimensional structure.

    \item We further show that in certain cases \emph{all robust classifiers are approximately linear},
    characterizing all approximately optimal robust classifiers. This requires a novel approach, which leverages recent breakthroughs from \emph{robust isoperimetry} \citep{cianchi2011isoperimetric,mossel2015robust} and represents one of our main technical innovations. 

    \item We uncover \emph{surprising phase-transitions arising in three-class robust classification}.
    Deriving optimal classifiers for this setting requires delicate analysis,
    but reveals how the optimal classifier can jump discontinuously
    for small changes in the problem parameters.
    This does not occur in the two-class setting
    or for non-robust classification;
    it arises when combining robustness with a third class.

    \item We provide a more comprehensive understanding by analyzing the \emph{non-convex problem of optimizing the robust risk}
      for linear classifiers.
      Specifically, 
      we characterize broad settings where classification calibration holds for convex surrogate losses, so the optimizers of surrogate losses coincide with the optimizers of the original objective.
    \item We connect our findings to empirical robust risk minimization by providing a \emph{finite-sample analysis}
      with respect to 0-1 and surrogate loss functionals,
      which also highlights the key role of geometry in convergence rates.
      This analysis does not rely on Gaussianity.
\end{compactenum}

The remainder of this paper is organized as follows.  In Section~\ref{sec:related-work}, we review related work concerning algorithms and analysis techniques for adversarial robustness.  Next, \revone{after some preliminaries in \cref{sec:notation:prelim}, Section~\ref{sec:opt:l2:twoclass} derives} $\ell_2$ robust optimal classifiers for two-class Gaussian classification.  In Section~\ref{sec:opt:l2:threeclass}, we \revone{tackle} three-class Gaussian classification.  Following this, in Section~\ref{sec:opt:linf}, we derive $\ell_\infty$ robust optimal linear classifiers for two-class Gaussian classification models.  In Section~\ref{sec:landscape}, we study the optimization landscape of the robust risk.  Finally, in Section~\ref{sec:finite:sample}, we connect the results derived in the previous sections to empirical risk minimization by providing a finite sample analysis under broader distributional assumptions.


\section{Related work} \label{sec:related-work}

Adversarial robustness is a very active and rapidly expanding area of research.  For this reason, we can only review a collection of some of the most closely related works.

\textbf{Robustness of linear models.}  Recently,  several papers have studied the robustness of linear models.  \citep{xu2009robustness} shows that certain robust support vector machines (SVM) are equivalent to regularized SVM. They also give bounds on the standard generalization error based on the regularized empirical hinge risk. \citep{xu2009robust} shows equivalences between adversarially robust regression and lasso.
\citep{megyeri2019adversarial} studies the adversarial robustness of linear models, arguing that random hyperplanes are very close to any data point and that robustness requires strong regularization.  Furthermore, the two related works of \citep{chen2020more} and \citep{min2020tcc:arxiv:v1} study Gaussian and Bernoulli models for data and analytically establish a variety of phenomena regarding robust accuracy and the generalization gap for linear models.  They conclude that more data may actually increase the generalization gap.  In this paper, we consider the distinct yet related problem of characterizing optimal classifiers in the population setting rather than determining the finite-sample generalization gap.  Finally, the recent results in \cite{bhattacharjee2020sample} analyze the sample complexity of training adversarially robust linear classifiers on separated data.  We study a similar problem, but in our case the data is not separated.

\textbf{Randomized smoothing.}  The connection between Gaussianity and robustness is one of the key ideas behind randomized smoothing \citep{cohen2019certified,salman2019provably,blum2020random,mohapatra2020rethinking}.  While these works provide an interesting and useful alternative to adversarial training, they have little theoretical overlap with this paper.  In this paper, we study the different problem of deriving classifiers that are optimal with respect to the robust risk.

\textbf{Generalized likelihood ratio testing.}
\citep{puranik2021arc} proposes another interesting and useful alternative to adversarial training.
The approach is based on the generalized likelihood ratio test (GLRT)
and can be applied to general multi-class Gaussian settings.
It remains distinct from this paper
since it does not seek to optimize the robust risk,
instead utilizing a GLRT.

\textbf{Concentration based analyses.}  Several works use various forms of concentration of measure to explain the existence of adversarial examples in high dimensions \citep{gilmer2018trb:arxiv:v3,shafahi2018are}.  Relatedly, \citep{mahloujifar2019empirically} proposes methods for empirically measuring concentration and establishing fundamental limits on intrinsic robustness.  These analyses and empirical results generally rely on concentration on the sphere $S^{p-1}$, whereas we rely on Gaussian concentration in this paper.  More related is the work of  \citep{richardson2020bayes}, which studies the adversarial robustness of Bayes-optimal classifiers in two-class Gaussian classification problems with unequal covariance matrices $\Sigma_1$ and $\Sigma_2$. For instance, when the covariance matrices are strongly asymmetric, so that the smallest eigenvalue of one class tends to zero, they show that almost all points from that class are close to the optimal decision boundary. In contrast, in the symmetric isotropic case, $\Sigma_1=\Sigma_2=\sigma^2 I_p$ and $\sigma\to 0$, they show that with high probability all points in both classes are at distance $p/2$ from the boundary.  While this is consistent with a portion of our findings, we focus on different problems, namely finding the optimal robust classifiers.

\textbf{Trade-offs in adversarial robustness.}  Many works have argued that there are inherent trade-offs between standard and robust accuracy \citep{raghunathan2019adversarial,raghunathan2020understanding,hayes2020provable}.  Among these works, several consider Gaussian models of data.   Notably,  \citep{schmidt2018adversarially} studies the two-class Gaussian classification problem $x\sim \N(y\mu, \sigma^2 I_p)$, focusing on the balanced case $\pi=1/2$, and on signal vectors $\mu$ of norm approximately $\sqrt{p}$. In this setting, unlike in ours, it is possible to construct accurate classifiers even from one training data point $(x_1,y_1)$. They show that such classifiers can have high standard accuracy, but low robust accuracy. In contrast, we attack the more challenging problem of deriving \emph{closed-form} expressions for optimal classifiers without strong assumptions on the signal strength.

\revone{In~\citep{dan2020sharp}, the authors study} a two-class Gaussian classification problem with balanced classes.  They develop sharp minimax bounds on the classification excess risk with a corresponding estimator.
In contrast, we derive optimal classifiers and study trade-offs for the more general, imbalanced-class setting.   \revone{Similarly,~\citep{fawzi2018analysis} studies robustness defined as the average of the norm of the smallest
perturbation that switches the sign of a classifier $f$. They consider labels that are a deterministic function of the datapoints, which differs from our setting.}

\revone{Another related work is that of~\citep{tsipras2018robustness}, in which the authors consider} two-class Gaussian classification where $x=y\cdot (b, \eta 1_p)+\N(0_p,\diag(0,1_{p-1}))$, and $b$ is a random sign variable with $P(b=1)=q\ge1/2$, while $\eta$ is a constant. Thus, the first variable contains the correct class $y$ with probability $q$, while the remaining ``non-robust'' features contain a weak correlation with $y$. Our models are related, but distinct from this model. Their work is closest to our results on the optimal robust classifiers for $\ell_\infty$ perturbations, which are given by soft-thresholding the mean. This will not use the non-robust features, which is consistent with \citep{tsipras2018robustness}. However, their results are different, as they emphasize the robustness-accuracy tradeoff (Theorem 2.1), while we characterize the optimal robust classifiers.

Aside from the trade-off between accuracy and adversarial robustness, a growing body of work has focused on other naturally-arising trade-offs in various problem settings.  Among such works, \citep{sanyal2020benign} studies adversarial robustness in the presence of label noise and explores its relationship to benign overfitting \citep{bartlett2020benign}.  \citep{mehrabi2021fundamental} studies trade-offs in the distributionally robust setting, where robustness is defined with respect to a family of related data distributions.  Finally, in \citep{tramer2020fundamental}, the authors analyze the trade-off between invariance and sensitivity to adversarial examples.  While each of these studies considers trade-offs in adversarially robust machine learning, the settings and results are different from our setting.

\textbf{Distribution-agnostic results.}  One notable recent direction is to study the properties of adversarial learning problems in a distribution-agnostic setting.  Among such works, \citep{cullina2018pac} introduces the ``adversarial VC-dimension'' to study the statistical properties of PAC learning in the presence of an adversary.  The authors extend the fundamental theorem of statistical learning theory to this setting, and provide sample complexity bounds for this distribution-agnostic setting.  These results gave rise to a line of work focusing specifically on the distribution-agnostic setting (see\revone{,} e.g.\revone{,} \citep{yin2019rademacher,attias2019improved,awasthi2020adversarial}). Building on this, \citep{montasser2020efficiently} proposes methods for efficiently PAC learning adversarially robust halfspaces with noise.  By and large, due to the distribution-agnostic assumption, the PAC-style results from these papers are more conservative and distinct from the results we obtain for the Gaussian setting.  However, in very special cases, such as learning in the presence of \emph{random} (e.g.\revone{,} non-adversarial) classification noise, the representation of the risk in terms of the dual norm in \citep{montasser2020efficiently} agrees with our characterization.

\revone{\textbf{Calibration of the adversarial loss.}  Also of note is a recent line of work which considers the calibration of the robust 0-1 loss.  Concretely, a loss is calibrated with respect to a given function class $\mathcal{F}$ if minimizing the excess risk with respect to a surrogate loss over $\mathcal{F}$ implies minimization of the target risk.  Following~\cite{steinwart2007compare}, both~\cite{awasthi2021finer} and~\cite{bao2020calibrated} consider the calibration of the robust 0-1 loss, showing both positive and negative calibration results for a variety of surrogate losses and function classes.  In contrast to these works, the majority of our main results (see Sections~\ref{sec:opt:l2:twoclass},~\ref{sec:opt:l2:threeclass}, and~\ref{sec:opt:linf}) focus directly on minimizing the robust 0-1 loss, rather than minimizing a surrogate loss.  Furthermore, we note that the calibration results presented in Section~\ref{sec:landscape} of this paper are complementary to the results in~\cite{awasthi2021finer,bao2020calibrated} which show that convex surrogates are in general not calibrated for the robust 0-1 loss.  Specifically, we show that surrogate losses can be calibrated in this setting under stronger conditions than convexity.  Also related is the work of~\cite{bhagoji2021lower}, in which the authors derive lower bounds on the cross-entropy loss under adversarially-chosen perturbations.  This differs from our setting, as we do not consider the cross-entropy loss in this work.} 

\textbf{Robustness in non-parametric settings.}  While we study a parametric setting in this paper, there are several notable works that study similar problems in the non-parametric setting.  \citep{wang2018analyzing} analyzes the robustness of nearest neighbor methods to adversarial examples. In this work, the authors introduce and study quantities called ``astuteness'' and ``$r$-optimality'', which are intimately related to the robust risk.  These ideas led to further related studies of attacks and defenses in the non-parametric setting (see\revone{,} e.g.\revone{,} \citep{bhattacharjee2020non,yang2020robustness}).  In \cite{yang2020robustness}, the robust optimal classifier for a non-parametric setting is determined to be the solution of a particular optimization problem.  These results are incomparable with ours, since a mixture of Gaussians is not $r$-separated, and truncating the Gaussians to satisfy $r$-separation would lead to a large test error.  Furthermore, we \emph{explicitly} derive closed-form expressions for optimal robust classifiers in our setting.  More related to our paper is the work of \citep{yang2020adversarial}, in which the authors study robustness through local-Lipschitzness.  However, whereas we seek to find statistically optimal robust classifiers in the classical two- and three-class Gaussian setting, \citep{yang2020adversarial} considers a completely different model of data.

\textbf{Optimal transport based analyses.}  Most related to this paper is the recent work of \citep{bhagoji2019lower}, in which the authors develop a framework connecting adversarial risk to optimal transport. As a special case, for balanced two-class Gaussian classification problems with $\pi=1/2$ and $x_i|y_i \sim \N(\mu y_i,\Sigma)$, and for general perturbations in a closed, convex, absorbing and origin-symmetric set $B$, they show that linear classifiers are optimal, and characterize these optimal classifiers.  Similarly, \citep{pydi2019adversarial} also characterizes optimal classifiers in various settings; in particular, they focus on the balanced case $\pi=1/2$, e.g., for two classes with spherical covariances $\N(\mu_i, \sigma^2 I_p)$ or in 1-D with different means and covariances $\N(\mu_i, \sigma_i^2)$. Complementary to these two works, we focus on identifying trade-offs for \emph{imbalanced} classes.  Furthermore, our analyses rely on entirely different proof techniques, and it is unclear whether our results can be obtained from optimal transport theory, or whether results in the imbalanced setting can be derived using proof techniques which use optimal transport.  Relatedly, \citep{dohmatob2019limitations} provide lower bounds on the adversarial risk for certain multi-class classification problems
whose data distributions satisfy the $W_2$ Talagrand transportation-cost inequality.
In contrast, we find the optimal classifiers for the two- and three-class Gaussian classification problems.    


\section{Notations and preliminaries}
\label{sec:notation:prelim}

Before we begin, it will be helpful to define some notation.
We denote the ball of radius $\ep$
(with respect to the norm $\norm{\cdot}$)
centered at the origin
by $B_{\ep}$,
and the indicator function of a set $A$ by $I(A)$.
Further, if $A$ and $B$ are sets,
then we use the notation $A + B = \{a + b : a\in A, b\in B\}$
for the Minkowski sum;
when $A = \{a\}$ contains a single element,
we abbreviate it to $a + B$.
In these terms,
the robust risk with 0-1 loss \cref{eq:robustrisk:l2}
has another convenient form
that we use heavily in the proofs:
\begin{equation}
  \robrisk(\hty,\ep,\|\cdot\|)
  =
  \E_y
  \Pr_{x|y}
  \big\{
    \exists_{\delta : \|\delta\| \leq \ep}
    \;\;
    \hty(x+\delta) \neq y
  \big\}
  = \E_y \Pr_{x|y} \big\{ S_y^c(\hty) + B_\ep \big\}
  ,
\end{equation}
where $S_y^c(\hty) \coloneqq \{x : \hty(x) \neq y\}$
is the \emph{misclassification set} of classifier $\hty$ for class $y$,
and
\begin{equation*}
  S_y^c(\hty) + B_\ep
  = \{ x + \delta : \hty(x) \neq y \text{ and } \|\delta\| \leq \ep \}
  = \{ x : \exists_{\|\delta\| \leq \ep} \;\; \hty(x+\delta) \neq y \}
\end{equation*}
is the corresponding \emph{robust misclassification set},
illustrated for a single class by the following diagram.
\begin{center}
  \pgfmathdeclarefunction{gauss}{2}{%
    \pgfmathparse{1/(#2*sqrt(2*pi))*exp(-((x-#1)^2)/(2*#2^2))}%
  }

  \begin{tikzpicture}
  \begin{axis}[
    every axis plot post/.append style={mark=none,domain=-3:3,samples=50,smooth},
    enlargelimits = false,
    axis x line* = middle,
    axis y line  = none,
    xtick = {0},
    xticklabels = {$\mu$},
    ymin = 0,
    ymax = 0.45,
    x = 1cm,
    y = 4cm,
    clip = false
  ]
    \addplot[color=black] {gauss(0,1)}
      node[above right,pos=0.65] {$\mathcal{N}(\mu,1)$};

    \filldraw[OliveGreen] (axis cs:-0.75,0) rectangle (axis cs:-0.5,0.015);
    \filldraw[OliveGreen] (axis cs:-3.00,0) rectangle (axis cs:-1.5,0.015)
      node[OliveGreen,above right,at start] {$S_y^c$};

    \filldraw[Red] (axis cs:-1.00,0) rectangle (axis cs:-0.25,-0.015);
    \filldraw[Red] (axis cs:-3.00,0) rectangle (axis cs:-1.25,-0.015)
      node[Red,below right,at start] {$S_y^c + B_\ep$};

    \draw[|-|] (axis cs:-1.00,0.05) -- (axis cs:-0.75,0.05)
      node[above,midway] {$\ep$};
  \end{axis}
  \end{tikzpicture}
\end{center}
Note that $S_y(\hty) \coloneqq \{x : \hty(x) = y\}$ for $y \in \clC$ are, correspondingly,
the classification sets or decision regions of the classifier $\hty$.


\section{Optimal \texorpdfstring{$\ell_2$}{l2} robust classifiers for two classes}
\label{sec:opt:l2:twoclass}

This section considers the fundamental binary classification setting where data is distributed as a Gaussian for each of the classes $y\in \clC = \{+1,-1\}$:
\oneortwo{%
\begin{align} \label{eq:model:twoclass}
  x|y &\sim \clN(y\mu, \sigma^2 I_p)
  , &
  y &=
  \begin{cases}
    +1 & \text{with probability } \pi
    , \\
    -1 & \text{with probability } 1-\pi
    ,
  \end{cases}
\end{align}
}{%
\begin{align} \label{eq:model:twoclass}
  x|y &\sim \clN(y\mu, \sigma^2 I_p), 
\end{align}
}%
where
$\mu \in \bbR^p$ specifies the class means ($+\mu$ and $-\mu$, $\mu\neq 0$),
$\sigma^2 \in \bbR_{>0}$ is the within-class variance,
and $\pi \in [0,1]$ is the proportion of the $y=1$ class.
The means are centered at the origin without loss of generality.
By scaling, we will also take $\sigma^2=1$ to simplify exposition.

In this setting, the Bayes optimal (non-robust) classifier is linear; in particular, the expression for this classifier is given by the following pointwise calculation:
\begin{equation*} \textstyle
  \bayesclass(x) = \argmax_{c \in \clC} \Pr_{y|x}(y=c)
  = \sign(x^\top \mu - q/2)
  ,
\end{equation*}
where $q \coloneqq \ln\{(1-\pi)/\pi\}$ is the log-odds-ratio and we define $\ln(0) \coloneqq -\infty$ and $\sign(0) = 1$.
The classifier is unaffected by any positive rescaling
of the argument of $\sign$.
Denoting the normal cumulative distribution function
$\Phi(x) \coloneqq (2\pi)^{-1/2} \int_{-\infty}^x \exp(-t^2/2) dt$
and $\brPhi \coloneqq 1 - \Phi$,
the corresponding Bayes risk
\oneortwo{%
\begin{equation} \label{eq:twoclass:bayes}
  \bayesrisk(\mu, \pi)
  \coloneqq \stdrisk(\bayesclass)
  =
  \pi
  \cdot
  \Phi  \left(\frac{q}{2\|\mu\|_2} - \|\mu\|_2\right)
  +
  (1-\pi)
  \cdot
  \brPhi\left(\frac{q}{2\|\mu\|_2} + \|\mu\|_2\right)
  ,
\end{equation}
}{%
\begin{align} \label{eq:twoclass:bayes}
  &\bayesrisk(\mu, \pi)
  \coloneqq \stdrisk(\bayesclass)
  \\
  &=
  \pi
  \,
  \Phi  \left[\frac{q}{2\|\mu\|_2} - \|\mu\|_2\right]
  +
  (1-\pi)
  \,
  \brPhi\left[\frac{q}{2\|\mu\|_2} + \|\mu\|_2\right]
  , \nonumber
\end{align}
}%
is the smallest attainable \emph{standard} risk and characterizes the problem difficulty.

\subsection{Optimal classifiers with respect to the robust risk}

With this background in mind,
we now turn to our problem of finding Bayes-optimal \emph{robust} classifiers.
Unlike the non-robust setting, one can no longer simply make optimal pointwise decisions for each $x \in \bbR^p$ depending only on the data distribution at $x$,
because the robust risk is also affected by neighboring datapoints and decisions.
Thus, it is not initially obvious how to find provably optimal robust classifiers.
Moreover, it is not initially clear how such classifiers might differ from the Bayes-optimal (non-robust) classifier $\bayesclass$,
especially in the presence of class imbalance.

The following theorem provides a precise characterization of optimal robust classifiers.
Its proof involves a novel approach that combines
the result that halfspaces are extremal sets with respect to Gaussian isoperimetry
\citep{borell1975brunn,sudakov1978extremal,boucheron2013concentration},
the Neyman-Pearson lemma,
and Fisher's linear discriminant.

\medskip
\begin{theorem}[Optimal $\ell_2$-robust two-class classifiers]
\label{orc}
Suppose the data $(x,y)$ follow the two-class Gaussian model \cref{eq:model:twoclass}
and $\ep < \|\mu\|_2$.
An optimal $\ell_2$ robust classifier
is
\begin{equation} \label{eq:robopt:twoclass}
  \hty^*(x)
  \coloneqq
  \sign\{x^\top \mu(1-\ep/\|\mu\|_2)_+-q/2\}
  ,
\end{equation}
where $q = \ln\{(1-\pi)/\pi\}$ and $(x)_+ = \max(x,0)$.
Moreover, the corresponding optimal robust risk is
\begin{equation} \label{eq:robopt:twoclass:risk}
  \robrisk^*(\mu,\pi;\ep)
  \coloneqq
  \bayesrisk\{\mu(1-\ep/\|\mu\|_2)_+,\pi\}
  ,
\end{equation}
where $\bayesrisk$ is the Bayes risk defined in \cref{eq:twoclass:bayes}.
\end{theorem}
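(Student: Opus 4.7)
The plan is to show that optimal robust classifiers are halfspaces with normal direction $\mu/\|\mu\|_2$ and then explicitly minimize the resulting one-dimensional problem over the threshold. The strategy chains two ingredients: the Gaussian isoperimetric inequality (in its equality form) and the Neyman--Pearson lemma for a two-class Gaussian testing problem, which produces halfspaces in Fisher's linear discriminant direction.

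First, I would recast the robust risk in terms of the decision region $A = S_{-1}(\hty) \subset \bbR^p$. Writing $\gamma$ for the standard Gaussian measure on $\bbR^p$, translation invariance of Lebesgue measure gives
$$\robrisk(\hty,\ep,\|\cdot\|_2) = \pi\,\gamma\{(A-\mu)+B_\ep\} + (1-\pi)\,\gamma\{(A^c+\mu)+B_\ep\}.$$
The Borell--Sudakov--Tsirelson Gaussian isoperimetric inequality states that for any measurable $S \subset \bbR^p$, $\gamma(S+B_\ep) \geq \Phi\{\Phi^{-1}(\gamma(S))+\ep\}$, with equality exactly when $S$ is a halfspace. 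Setting $p(A) = \gamma(A-\mu)$ and $q(A) = \gamma(A^c+\mu)$ and applying isoperimetry to each term,
$$\robrisk(\hty,\ep,\|\cdot\|_2) \;\geq\; \pi\,\Phi\{\Phi^{-1}(p(A))+\ep\} + (1-\pi)\,\Phi\{\Phi^{-1}(q(A))+\ep\},$$
with simultaneous equality whenever $A$ is a halfspace (since translations and complements of halfspaces remain halfspaces). At this stage the lower bound depends on $\hty$ only through the pair $(p(A),q(A))$.

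Next, I would invoke the Neyman--Pearson lemma for the binary test of $\clN(\mu,I_p)$ versus $\clN(-\mu,I_p)$. The likelihood ratio is a monotone function of $x^\top\mu$, so its level sets are exactly halfspaces with normal $\mu/\|\mu\|_2$ (Fisher's linear discriminant direction). Hence for any measurable $A$, there exists such a halfspace $H$ with matched $p(H)=p(A)$ and no larger $q(H)\leq q(A)$. Because $r\mapsto \Phi\{\Phi^{-1}(r)+\ep\}$ is strictly increasing, combining with the previous display yields $\robrisk(H,\ep,\|\cdot\|_2)\leq \robrisk(\hty,\ep,\|\cdot\|_2)$. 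Thus the infimum is attained on halfspaces with normal $\mu/\|\mu\|_2$, and the remaining problem is a scalar optimization over the threshold.

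Finally, I would parameterize such halfspaces by a scalar $c$ and set $m = \|\mu\|_2-\ep > 0$ (using the hypothesis $\ep<\|\mu\|_2$). A direct evaluation of the two Gaussian probabilities reduces the robust risk to $F(c) = \pi\,\Phi(c-m) + (1-\pi)\,\brPhi(c+m)$, which is precisely the standard risk of a one-dimensional two-class Gaussian Bayes problem with means $\pm m$ and prior $\pi$. The classical Bayes-classifier calculation (matching \cref{eq:twoclass:bayes}) then supplies the minimizer $c^*=q/(2m)$ and the minimum $\bayesrisk(\mu(1-\ep/\|\mu\|_2),\pi)$; undoing the parameterization recovers \cref{eq:robopt:twoclass} and \cref{eq:robopt:twoclass:risk}. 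The hard part will be forging the link between isoperimetry and Neyman--Pearson: isoperimetry alone cannot compare sets with different $(p,q)$, while Neyman--Pearson alone cannot accommodate the $\ep$-enlargements. The crucial observation is that isoperimetry first collapses the $\ep$-enlarged risk to a function of $(p(A),q(A))$ only, after which Neyman--Pearson can further reduce the candidate class to halfspaces.
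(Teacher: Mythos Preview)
Your proposal is correct and uses essentially the same ingredients as the paper's proof: Gaussian isoperimetry (Borell--Sudakov--Tsirelson), the Neyman--Pearson lemma, and a one-dimensional Fisher discriminant optimization. The organization differs slightly: the paper applies isoperimetry separately to $S_{-1}$ and $S_{+1}$, obtaining two halfspaces, and then uses Neyman--Pearson to show they overlap and hence can be shrunk to a valid partition; you instead first lower-bound the robust risk as an increasing function of $(p(A),q(A))$ via isoperimetry, and then use Neyman--Pearson to produce a single halfspace $H$ (automatically a valid classifier) with $p(H)=p(A)$ and $q(H)\le q(A)$, which sidesteps the overlap argument entirely.
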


\Cref{orc} reveals several important insights
into the properties of optimal robust classification.

\noindent \textbf{Optimality of linear classifiers.} \Cref{orc} shows the nontrivial result that the Bayes optimal robust classifier \emph{is also linear}. Indeed, it shows that the optimal robust classifier corresponds to the classical (non-robust) Bayes optimal classifier
with a \emph{reduced effective mean} $\mu \mapsto \mu\left(1-\ep/\|\mu\|_2\right)_+$
or, equivalently, an \emph{amplified effective class imbalance}: $q \mapsto q/(1-\ep/\|\mu\|_2)_+$.
Note that if $\ep \ge \|\mu\|_2$, then nontrivial classification is impossible.
The effective signal strength reduction is
consistent with prior arguments
that ``adversarially robust generalization requires more data'' \citep{schmidt2018adversarially}.
However, to the best of our knowledge, such an exact characterization for imbalanced data was previously unknown (see the related work section).

{\bf Optimal Tradeoffs and Pareto-Frontiers.}
When the classes are balanced, i.e., $\pi = 1/2$ (and thus $q=0$),
\cref{orc} shows that
the Bayes optimal classifier $\bayesclass$
and the optimal robust classifier $\hty^*$ \emph{coincide}.
In general, however, there is a \emph{tradeoff}:
neither classifier optimizes both standard and robust risks.
These insights are important since real datasets are often imbalanced. 
Indeed, our analysis (see \cref{thm:opt:twoclass:admissible}) implies that given any classifier, there exists a linear classifier of the form $\text{sign}(x^\top \mu -c)$ with no worse standard risk and no worse robust risk. Using this, we can precisely characterize the \emph{Pareto-frontier (optimal tradeoff)} between the standard risk and the robust risk. 
 Consider a two-dimensional plane in which the $x$-axis represents the robust risk and the $y$-axis represents the standard risk (see \cref{trf:pareto}).  Any classifier can be represented 
as a pair in this plane with its robust risk as the first entry, and  its standard risk as the second 
   entry. We further consider the region of all the possible achievable pairs over all classifiers. The \emph{Pareto-frontier (optimal boundary)} of this region shows the fundamental tradeoff between standard and robust risks. For the setting of \cref{orc} we can precisely characterize the  fundamental tradeoff (Pareto-frontier) between robust and standard risks. Importantly, these tradeoffs hold for any predictor (be it a deep network or a simple linear classifier) including those learned in any way from any size of training data, see \cref{trf:pareto}.

This provides new insights. To the best of our knowledge, this is the first work to illustrate trade-offs due to \emph{class imbalance}, which is prevalent in practice. 
Moreover, this result proves that a \emph{linear} classifier is \emph{optimal} with respect to robust risk; that is, even if one had access to rich model classes such as deep neural networks, massive computational power, or arbitrarily large datasets, this fundamental tradeoff would remain in place.
Such a strong tradeoff (for the Bayes risk) has only been observed for balanced-class settings in a completely separate line of work~\cite{bhagoji2019lower}, as discussed in \cref{sec:related-work}.  Relative to this work, our proof techniques are completely different, and we are also able to map the entire Pareto-frontier for imbalanced classes.

\Cref{trf} illustrates this tradeoff for an example
with a mean having norm $\|\mu\|_2 = 1$,
a positive class proportion $\pi = 0.2$,
and a perturbation radius $\ep = 0.5$.
The first figure plots the two risks
for the linear classifier $\hty(x) = \sign(x^\top\mu-c)$
as a function of the threshold $c$.
This highlights the difference between the two risks
and their corresponding optimal thresholds.
The next figure plots the two risks against each other
for a sweep of the threshold $c$.

\oneortwo{%
\begin{figure} \centering
  \begin{subfigure}{.45\textwidth} \centering
    \includegraphics[scale=0.8]{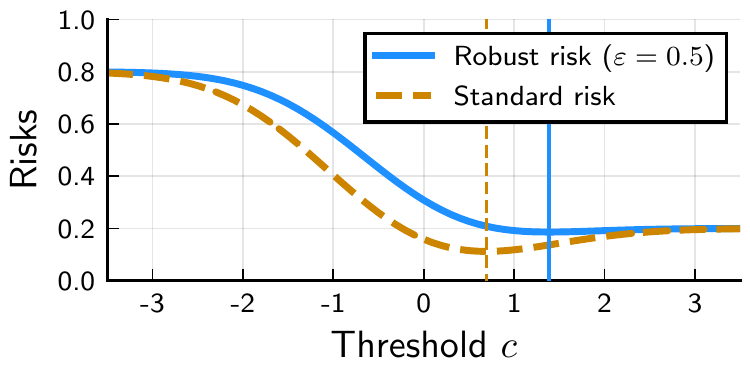}
    \caption{Risks as functions of threshold $c$;
      vertical lines at optimal thresholds.}
  \end{subfigure}
  \;
  \begin{subfigure}{.45\textwidth} \centering
    \includegraphics[scale=0.8]{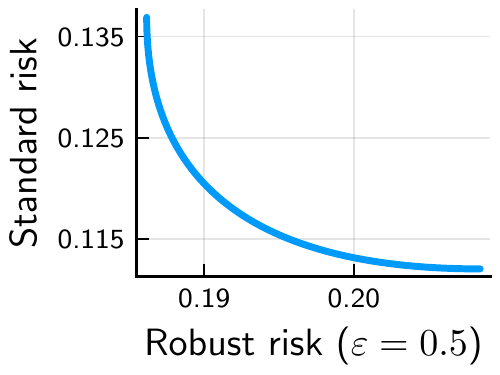}
    \caption{Pareto-frontier: Standard and robust risk plotted against each other as a function of the threshold $c$.}
    \label{trf:pareto}
  \end{subfigure}
\caption{Tradeoffs between optimal classification
  with respect to standard and robust risks.
}
\label{trf}
\end{figure}
}{%
\begin{figure} \centering
  \begin{subfigure}{\linewidth} \centering
    \includegraphics[scale=0.8]{twoclass,thresh.pdf}
    \caption{Risks as functions of threshold $c$;
      vertical lines at minima.}
  \end{subfigure}\\[1em]
  
  \begin{subfigure}{.48\linewidth} \centering
    \includegraphics[scale=0.82,trim=4pt 0 4pt 0]{twoclass,tradeoff,thresh.pdf}
    \caption{Risks for thresholds $c$ in between the minima.}
  \end{subfigure}
\caption{Tradeoffs between standard and robust risks.}
\label{trf}
\end{figure}
}%

\subsection{Proof of \cref{orc}}
We now prove \cref{orc}.
Since we cannot simply optimally classify data points based on the data distributions at each individual point,
new techniques are needed for deriving optimal classifiers with respect to the robust risk.
Here we introduce a novel approach.
First, we prove that there exist optimal linear classifiers
by combining the fact that halfspaces are extremal with respect to Gaussian isoperimetry, i.e., the equality case in the Gaussian concentration of measure
  \citep{borell1975brunn,sudakov1978extremal,boucheron2013concentration},
with the Neyman-Pearson lemma.
Then, we derive the optimal linear classifiers via Fisher's linear discriminant.

Before we begin, note that the robust risk
for the two-class setting can be written as
\begin{align*}
    R(\hat{y}, \ep)
    \coloneqq \robrisk(\hty,\ep,\|\cdot\|)
    &=  \pi \cdot \Pr_{x|y=+1}(S_{-1} + B_{\ep}) +  (1-\pi) \cdot  \Pr_{x|y=-1}(S_{+1} + B_{\ep})
    ,
\end{align*}
where we drop $\hty$ from $S_{-1}$ and $S_{+1}$ for convenience.
This holds for any binary classification problem, and in particular for the two-class Gaussian problem that we consider in this section.

\subsubsection{Optimality of linear classifiers} 
The first step is to prove the following claim:
for any classifier $\hty$,
there exists a linear classifier with robust risk no worse than that of $\hty$.
Precisely put, we prove the following lemma.

\medskip
\begin{lemma}[Existence of optimal linear classifiers]
\label{thm:opt:twoclass:admissible}

Under the assumptions of Theorem \ref{orc}, for any classifier $\hat y$, there exists a linear classifier $\hat y^*(x)=\sign(x^\top w-c)$, for some $w$ and $c$, whose standard risk and robust risk with respect to $\ep$-bounded $\ell_2$ adversaries is less than or equal to that of the original classifier:
\begin{align*}
    R(\hat y^*,\ep) &\leq R(\hat y,\ep)
    , &
    R(\hat y^*,0) &\leq R(\hat y,0)
    .
\end{align*}
Moreover, we can take $w = \mu$.
\end{lemma}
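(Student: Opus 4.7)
The plan is to construct, for any classifier $\hty$ with decision regions $S_{+1}, S_{-1}$, a single linear classifier $\hty^*(x) = \sign(x^\top\mu - c^*)$ that dominates $\hty$ in both the standard and the robust risk. I would use two classical tools in tandem: the Neyman--Pearson lemma, which identifies a likelihood-ratio halfspace orthogonal to $\mu$ as the best discriminant for the un-perturbed problem, and the Gaussian isoperimetric inequality, which says that among Borel sets of a given Gaussian measure, halfspaces minimize the measure of the $\ep$-enlargement. Write the two class-conditional un-perturbed errors of $\hty$ as $\alpha \coloneqq \Pr_{x|y=+1}(S_{-1})$ and $\beta \coloneqq \Pr_{x|y=-1}(S_{+1})$.

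For the standard risk, first choose $c^*$ so that the halfspace $S_{-1}^* \coloneqq \{x : x^\top\mu \leq c^*\}$ has $\Pr_{x|y=+1}(S_{-1}^*) = \alpha$; this is possible by continuity of the normal CDF. The likelihood ratio between $\clN(\mu,I)$ and $\clN(-\mu,I)$ is monotone in $x^\top\mu$, so by Neyman--Pearson the complementary halfspace $S_{+1}^* \coloneqq \{x : x^\top\mu > c^*\}$ is the most powerful test of its size: $\beta^* \coloneqq \Pr_{x|y=-1}(S_{+1}^*) \leq \beta$. Weighting by $\pi$ and $1-\pi$ immediately gives $R(\hty^*, 0) \leq R(\hty, 0)$.

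For the robust risk, apply Gaussian isoperimetry in the form $\gamma(A + B_\ep) \geq \Phi(\Phi^{-1}(\gamma(A)) + \ep)$ (with equality for halfspaces) separately to $A = S_{-1}$ under $\clN(\mu,I)$ and to $A = S_{+1}$ under $\clN(-\mu,I)$, yielding
\[
  R(\hty, \ep) \geq \pi\,\Phi\bigl(\Phi^{-1}(\alpha) + \ep\bigr) + (1-\pi)\,\Phi\bigl(\Phi^{-1}(\beta) + \ep\bigr).
\]
Since $S_{-1}^*$ and $S_{+1}^*$ are themselves halfspaces with class-conditional measures $\alpha$ and $\beta^*$ respectively, the classifier $\hty^*$ saturates both bounds:
\[
  R(\hty^*, \ep) = \pi\,\Phi\bigl(\Phi^{-1}(\alpha) + \ep\bigr) + (1-\pi)\,\Phi\bigl(\Phi^{-1}(\beta^*) + \ep\bigr).
\]
Using $\beta^* \leq \beta$ with monotonicity of $\Phi$ (and of $\Phi^{-1}$) gives $\Phi(\Phi^{-1}(\beta^*)+\ep) \leq \Phi(\Phi^{-1}(\beta)+\ep)$, so term-by-term $R(\hty^*, \ep) \leq R(\hty, \ep)$.

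The main subtlety is that Gaussian isoperimetry applied to each class separately only produces two unrelated halfspaces (possibly with different normals and non-complementary), whereas a valid classifier requires the two decision regions to be complementary. The Neyman--Pearson step is exactly what makes a single halfspace work for both terms: among halfspaces orthogonal to $\mu$, the one with $+1$-class measure $\alpha$ simultaneously minimizes the $-1$-class measure of its complement. Keeping careful track of the direction of each inequality---a lower bound on $R(\hty, \ep)$ from isoperimetry, an equality for $R(\hty^*, \ep)$, and a final comparison via $\beta^* \leq \beta$---is the delicate bookkeeping that glues the two tools together.
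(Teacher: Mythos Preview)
Your proof is correct and uses the same two ingredients as the paper---Gaussian isoperimetry and the Neyman--Pearson lemma---but you assemble them in a slightly different and arguably cleaner order. The paper first applies isoperimetry \emph{separately} to each class to produce two candidate half-lines $\tilde S_{-1}=(-\infty,c]$ and $\tilde S_{1}=(d,\infty)$, then invokes Neyman--Pearson to prove these half-lines overlap, and finally shrinks them to obtain a valid partition (which can only further decrease all four quantities in the objective). You instead fix a \emph{single} halfspace up front by matching $\alpha=\Pr_+(S_{-1})$, use Neyman--Pearson once to get $\beta^*\le\beta$, and then apply the quantitative isoperimetric bound $\gamma(A+B_\ep)\ge\Phi(\Phi^{-1}(\gamma(A))+\ep)$ termwise. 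Your route sidesteps the overlap-and-shrink argument entirely, at the cost of needing the explicit functional form of the isoperimetric inequality rather than just the statement that halfspaces are extremal. Both arrive at the same conclusion with the same conceptual content.
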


To prove \cref{thm:opt:twoclass:admissible},
we carefully combine the fact that halfspaces are extremal with respect to Gaussian isoperimetry with the Neyman-Pearson lemma
to construct, given any classifier $\hty$, a linear classifier that has robust risk no worse than $\hty$.

\begin{proof}[Proof of \cref{thm:opt:twoclass:admissible}]
  Recall that $x|y\sim \N(y\mu,1)$, and abbreviate $\Pr_{x|y=\pm 1}=\Pr_\pm$.
  Let $\hty$ be an arbitrary classifier
  and denote its decision regions as $S_{\pm1}$. 
  We will show there exists a linear classifier with decision regions $\tilde S_{\pm1}$, such that: 
  \begin{align}\label{obj}
  \begin{split}
  \Pr_+(\tilde S_{-1}+B_\ep) & \le \Pr_+(S_{-1}+B_\ep)\\
  \Pr_-(\tilde S_{1}+B_\ep) & \le \Pr_-(S_{1}+B_\ep)\\
  \Pr_+(\tilde S_{-1}) & = \Pr_+(S_{-1})\\
  \Pr_-(\tilde S_{1}) & = \Pr_-(S_{1}).
  \end{split}
  \end{align}

  Now, the well-known Gaussian concentration of measure (GCM) \citep{borell1975brunn,sudakov1978extremal,boucheron2013concentration}
  states that the sets with minimal ``concentration function'' are the half-spaces. Specifically, the measurable sets solving the problem
  \begin{align}\label{gcmsol}
  \min_S \Pr_\mu(S+B_\ep)\,\,
  s.t.\,\, \Pr_\mu(S)=\alpha
  \end{align}
  are half-spaces (up to measure zero sets).
  
  For simplicity, let us discuss the one-dimensional problem $p=1$. Note that the general $p$ case can be reduced to the one-dimensional case. We can take $w = \mu$ and solve the optimal robust classification problem projected into the 1-dimensional line $\{c\mu,c\in\R\}$. Then, projecting back, we can compute probabilities and distances for the multi-dimensional problem. It is not hard to see that the back-projection of the one-dimensional problem also solves the multi-dimensional problem.
  
  In the one-dimensional case, suppose without loss of generality that $\mu>0$. The GCM states that there is a half-line $\tilde S_{-1} = (-\infty,c]$---which will serve as a new set $\{x:\tilde y^*(x)=-1\}$---such  that
  \begin{align}
  \Pr_+(\tilde S_{-1}+B_\ep) & \le \Pr_+(S_{-1}+B_\ep)\label{gcm1}\\
  \Pr_+(\tilde S_{-1}) & = \Pr_+(S_{-1}).\label{gcm2}
  \end{align}
  Moreover, we have $c = \mu+\Phi^{-1}(\Pr_+(S_{-1}))$.
  
  Similarly, using the GCM symmetrically, we find that there is a half-line $\tilde S_{1} = (d,-\infty)$---which will serve as a new set $\{x:\tilde y^*(x)=1\}$---such  that
  \begin{align*}
  \Pr_-(\tilde S_{1}+B_\ep) & \le \Pr_-(S_{1}+B_\ep)\\
  \Pr_-(\tilde S_{1}) & = \Pr_-(S_{1}).
  \end{align*}
  Now, the question is if the two sets $\tilde S$ can form the classification regions of a classifier. This would be true if they cover the real line. Here, we claim that they overlap. Thus, they can be shrunken to partition the real line, and the values of the objective in \eqref{obj} decrease.
  
  To show that they overlap, it is enough to prove that their total probability under one of the two measures, say $\Pr_-$, is at least unity. Thus we need:
  \begin{align*}
  & \Pr_-(\tilde S_{1}) + \Pr_-(\tilde S_{-1})\ge 1\\
  \iff & \Pr_-(S_{1}) + \Pr_-(\tilde S_{-1})\ge 1\\
  \iff & \Pr_-(\tilde S_{-1})\ge 1-\Pr_-(S_{1})\\
  \iff & \Pr_-(\tilde S_{-1})\ge \Pr_-(S_{-1}).
  \end{align*}
  Now note that $\Pr_+(\tilde S_{-1}) = \Pr_+(S_{-1})$, so the probability of the two sets coincides under $\Pr_+$. Moreover, $\Pr_+ = \N(\mu,1)$, $\Pr_- = \N(-\mu,1)$, $\mu>0$, and $\tilde S_{-1} = (-\infty,c]$. Then, the Neyman-Pearson lemma states that $\tilde S_{-1}$ maximizes the function $S\mapsto \Pr_-(S)$ over measurable sets $S$ (i.e., the power of a hypothesis test of $\Pr_+$ against $\Pr_-$), subject to a fixed value of $\Pr_+(S)$. Therefore, the inequality above is true. This shows that the two sets overlap, and thus finishes the proof.
\end{proof}

\subsubsection{Optimal linear classifiers}
The proof of \cref{orc} now concludes by optimizing among linear classifiers. As in the proof of \cref{thm:opt:twoclass:admissible}, it is enough to solve the one-dimensional problem. Thus, we want to find the value of the threshold $c$ that minimizes
\begin{align*}
  R(\hat y_c,\ep)
  &=
  \Pr(y=1)\Pr_{x|y=1}(x\le c+\ep)
  +\Pr(y=-1)\Pr_{x|y=-1}(x\ge c-\ep)\\
  &=
  \Pr(y=1)\Pr_{\mu}(x\le c+\ep)
  +\Pr(y=-1)\Pr_{-\mu}(x\ge c-\ep)\\
  &=
  \Pr(y=1)\Pr_{\mu-\ep}(x\le c)
  +\Pr(y=-1)\Pr_{-\mu+\ep}(x\ge c).
  \end{align*}
  This is exactly the problem of non-robust classification between two Gaussians with means $\mu' = \mu-\ep$ and $-\mu'$.
  By assumption, $\mu'\ge 0$.

  As is well known, the optimal classifier is Fisher's linear discriminant \citep[see\revone{,} e.g.,][p. 216]{anderson1958introduction}:
  \begin{align*}
  \hat y_\ep^*(x) & =\sign\left[x\cdot(\mu-\ep)- q/2\right]
  \end{align*}
  where $q = \ln[(1-\pi)/\pi]$.
This concludes the proof of \cref{orc}. \qed

\subsection{Extensions of \cref{orc}}

This section briefly describes a few extensions of \cref{orc},
with details given in \cref{app:randomized,app:weighted:risk,app:l2:gencov,app:lowdim}.

\textbf{Connections to randomized classifiers.}
The reduced effect size can also be interpreted as adding noise to the data, which relates to previously proposed algorithms
\citep{xie2018mitigating,athalye2017synthesizing}.

\textbf{Extension to weighted combinations.}
Given the tradeoff between standard risk and robust risk,
one might naturally consider minimizing a weighted combination of the two instead.
The techniques used to prove \cref{orc} can be extended to this setting, leading to new optimally robust classifiers.

\textbf{Data with a general covariance.}
We can extend \cref{orc} to some settings
where the within-class data covariance $I_p$
is replaced with a more general covariance matrix $\Sigma \in \bbR^{p \times p}$.

\textbf{Data on a low-dimensional subspace.}
For data that
lie in a low-dimensional subspace given by some coordinates being equal to zero,
we show the nontrivial result that low-dimensional classifiers
are optimal.
This is a geometric fact that holds for any norm and does not require Gaussianity.
In the Gaussian $\ell_2$-robust case, it implies that low-dimensional linear classifiers are optimal.

\subsection{Approximately optimal robust classifiers via robust isoperimetry}
So far, we have explored the problem of characterizing optimal robust classifiers.
We conclude this section by briefly characterizing
all \textit{approximately} optimal robust classifiers.
In particular, we consider robust classifiers that are approximately optimal, i.e., their robust risk is small but potentially suboptimal,
and show that they are necessarily close to half-spaces.
This is a highly nontrivial question. However, it turns out that we can make some progress by leveraging recent breakthroughs from \textit{robust isoperimetry} \citep[e.g.,][]{cianchi2011isoperimetric,mossel2015robust}. In general, these results show that if a set is approximately isoperimetric (in the sense that its boundary measure is close to minimal for its volume), then it has to be close to a half-space. In what follows, we leverage these powerful tools, which, to the best of our knowledge, have not yet been used in machine learning.

Given the difficulty of the problem, we restrict the setting to one-dimensional data. Let  $\gamma(S)$ be the Gaussian measure of a measurable set $S\subset \R$. Let also $\gamma^*(S)$ be the \textit{Gaussian deficit} of $S$, the measure of the error of approximation with a half-line:
$\gamma^*(S) \coloneqq \inf_H \gamma(S \Delta H)$,
where $S \Delta H = (S\setminus H) \cup (H\setminus S)$ is the symmetric set difference and
the infimum is taken over half-lines. Clearly $\gamma^*(S)\ge 0$, with equality when $S$ is a half-line almost surely.  The following result concerns a broad family of classifiers whose decision regions are unions of intervals. We show that if $\hat y$ has small robust risk, then its decision regions must be close to a half-line; that is, \emph{all robust classifiers are close to linear}.

\medskip
\begin{theorem}[Approximately optimal robust classifiers]
\label{thm:approx:opt:l2:twoclass}
For the two-class Gaussian $\ell_2$-robust problem, consider classifiers whose classification regions $S_+$ and $S_-$ are unions of intervals with endpoints in $[-M,M]$ where $\ep$ is less than the half-width of all intervals.

Define $\tau=\tau(\ep,M,\mu)=\ep \exp{[-((M+\mu)\ep+\ep^2/2)]}$. Then, for some universal constant $c>0$,
\oneortwo{%
\begin{align*}
\robrisk(\hat y, \ep) \ge \bayesrisk
+\tau\cdot c \cdot \left[\pi \cdot \gamma^*(S_- - \mu)^2 +(1-\pi)\cdot  \gamma^*(S_+ +\mu)^2\right].
\end{align*}
}{%
\begin{align*}
\robrisk(\hat y, \ep) &\ge \bayesrisk
\\
&
+\tau c \left[\pi \gamma^*(S_- - \mu)^2 +(1-\pi) \gamma^*(S_+ +\mu)^2\right].
\end{align*}
}%
\end{theorem}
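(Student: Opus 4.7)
The plan is to decompose the robust risk into class-wise Gaussian-enlargement probabilities, apply a quantitative one-dimensional Gaussian isoperimetric stability inequality to each term, and then bound the resulting half-line baseline by $\bayesrisk$. Concretely, translating each class-conditional Gaussian to the origin,
\[
  \robrisk(\hat y, \ep) = \pi\,\gamma(A_+ + B_\ep) + (1-\pi)\,\gamma(A_- + B_\ep),
\]
where $A_+ \coloneqq S_- - \mu$ and $A_- \coloneqq S_+ + \mu$ are both unions of intervals with endpoints in $[-(M+\mu), M+\mu]$. For each, let $H_A$ denote a half-line with $\gamma(H_A) = \gamma(A)$, oriented so as to minimize $\gamma(H_A + B_\ep)$.

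The key step is a quantitative Gaussian isoperimetric stability inequality of the form
\[
  \gamma(A + B_\ep) - \gamma(H_A + B_\ep) \;\geq\; c\, \tau(\ep, M, \mu)\, \gamma^*(A)^2
\]
for a universal $c > 0$, in the spirit of \citep{cianchi2011isoperimetric, mossel2015robust}. Because the half-width of each interval exceeds $\ep$, the $\ep$-enlargements of distinct intervals do not overlap, and each ``excess'' boundary point of $A$ relative to $H_A$ contributes an $\ep$-wide slab to $A + B_\ep$ on which the standard Gaussian density is bounded below by $\phi(M+\mu+\ep) = \phi(M+\mu)\exp[-((M+\mu)\ep + \ep^2/2)]$. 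Absorbing the constant $\phi(M+\mu)$ into $c$ produces the stated $\tau = \ep \exp[-((M+\mu)\ep + \ep^2/2)]$. The quadratic dependence $\gamma^*(A)^2$, rather than linear, reflects the sharpness of Gaussian stability: rearranging $A$ into $H_A$ moves equal measure of width $\sim \gamma^*(A)$ inward and outward, which has only a second-order effect on the enlargement measure.

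Finally, the half-line baseline dominates the Bayes risk. Since $H_A + B_\ep \supseteq H_A$ and $\gamma(H_A) = \gamma(A)$,
\[
  \pi\,\gamma(H_{A_+} + B_\ep) + (1-\pi)\,\gamma(H_{A_-} + B_\ep)
  \;\geq\; \pi\,\gamma(A_+) + (1-\pi)\,\gamma(A_-)
  \;=\; \stdrisk(\hat y)
  \;\geq\; \bayesrisk,
\]
using $\gamma(A_+) = \Pr_+(S_-)$, $\gamma(A_-) = \Pr_-(S_+)$, and the definition of $\bayesrisk$ as the minimum standard risk. Summing the stability inequality over the two classes with weights $\pi$ and $1-\pi$ and adding this baseline estimate yields the theorem.

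I expect the main obstacle to be the quantitative stability step: extracting the precise prefactor $\tau$ with the stated exponential dependence on $M$, $\mu$, and $\ep$, together with the correct quadratic rate in $\gamma^*(A)$. While the qualitative ``small deficit implies closeness to a half-line'' direction is the message of \citep{mossel2015robust}, what is needed here is the \emph{lower} bound on the deficit with explicit constants, which requires carefully tracking the minimum Gaussian density over the boundary region $[-(M+\mu+\ep), M+\mu+\ep]$ and verifying that individual interval-boundary contributions add without cancellation---this is precisely where the half-width assumption enters. A plausible route avoiding the full machinery is a direct elementary computation on unions of intervals, reducing iteratively to the two-interval case and quantifying the gain from merging a pair of intervals into one while preserving Gaussian measure.
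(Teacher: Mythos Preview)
Your overall architecture matches the paper's: decompose the robust risk class-wise, translate to standard Gaussian measure, invoke a quantitative stability statement, and bound the baseline by $\bayesrisk$ via $\stdrisk(\hat y)\ge\bayesrisk$. The gap is in the stability step.

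First, a concrete error: you cannot ``absorb $\phi(M+\mu)$ into $c$'', since the theorem asserts $c$ is universal while $\phi(M+\mu)$ depends on $M$ and $\mu$. Your heuristic---lower-bounding the density uniformly by $\phi(M+\mu+\ep)$ over the relevant slabs---therefore yields a strictly weaker prefactor than the stated $\tau$.

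Second, your target inequality $\gamma(A+B_\ep)-\gamma(H_A+B_\ep)\ge c\,\tau\,\gamma^*(A)^2$ is stronger than what is needed (you immediately discard the extra via $\gamma(H_A+B_\ep)\ge\gamma(A)$), and you do not establish it. The paper compares to $\gamma(A)$ directly and factors through the Gaussian \emph{boundary measure} $\gamma^+$, which cleanly separates the two ingredients:
\begin{enumerate}
\item An elementary interval calculation gives $\gamma(A+B_\ep)\ge\gamma(A)+\tau\,\gamma^+(A)$, and the $\tau$ in the statement arises exactly here. The key is the pointwise bound $\gamma([a-\ep,a])\ge\ep\exp[-(M'\ep+\ep^2/2)]\,\phi(a)$ with $\phi(a)$ retained as a factor; summing over endpoints then produces $\tau\,\gamma^+(A)$, not $\tau\,\phi(M')$ times a combinatorial count. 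This is what your uniform-density heuristic loses.
\item The robust isoperimetry of \cite{cianchi2011isoperimetric} gives $\gamma^+(A)-I(\gamma(A))\ge c\,\gamma^*(A)^2$ with \emph{universal} $c$, where $I=\phi\circ\Phi^{-1}$ is the isoperimetric profile.
\end{enumerate}
Combining, dropping the nonnegative term $\tau\,I(\gamma(A))$, weighting by $\pi$ and $1-\pi$, and using $\stdrisk(\hat y)\ge\bayesrisk$ finishes. The factorization through $\gamma^+$ is precisely what lets $\tau$ carry all the $(M,\mu,\ep)$-dependence while $c$ stays universal; your direct route conflates these two pieces and, as you acknowledge, leaves the crux unproved.
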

If the robust risk $\robrisk(\hat y, \ep)$ is close to the Bayes risk $\bayesrisk$, then the Gaussian deficits $\gamma^*(S_{\pm}\pm\mu)$ are small, so the decision regions $S_{\pm}$ are near half-lines.

A priori, one might suppose that sophisticated classifiers, e.g., deep neural nets, with complicated classification regions, can benefit robustness. \Cref{thm:approx:opt:l2:twoclass} shows that these classifiers must also essentially be linear to be even approximately optimally robust. Thus, in this particular case, the complex expressivity of deep neural nets does not bring any clear benefits.

\begin{proof}[Proof of \cref{thm:approx:opt:l2:twoclass}]
  Denote by $\phi$ the standard normal density in one dimension, and recall that $\Phi$ is the standard normal cumulative distribution function. Let $\gamma^+$ be the boundary measure of measurable sets, defined precisely in \cite{cianchi2011isoperimetric,mossel2015robust}. While this definition in general poses some technical challenges, we will only use it for unions of intervals $J = \cup_{k\in K} [a_k,b_k]$, where $K$ is a countable index set and $a_k \le b_k < a_{k+1}$ are the endpoints sorted in increasing order.
  For such sets $\gamma^+(J)=\sum_k [\phi(a_k)+\phi(b_k)]$ is simply the sum of the values of the Gaussian density at the endpoints, which can be finite or infinite.
 
 The \textit{Gaussian isoperimetric profile} is commonly defined as $I = \phi \circ \Phi^{-1}$, and in this language the Gaussian isoperimetric inequality states that $I(\gamma(A))\le \gamma^+(A)$, with equality if $A$ is a half-line.
 
  Suppose $J$ is a union of intervals in $\R$ with all interval endpoints contained in $[-M,M]$. Then for $\ep$ small enough that the $\ep$-expansion of $J$ does not merge any intervals, i.e., $2\ep < (a_{k+1}-b_k)$ for all $k$, we have
 \beq\label{eb}
 \gamma(J+B_\ep) \ge \gamma(J)+ \ep \exp{[-(M\ep+\ep^2/2)]} \cdot \gamma^+(J).
 \eeq
 This follows by first considering one interval $J=[a,b]$, and then summing over all intervals, noting that the non-intersection condition on $\ep$ guarantees that all terms are additive. To check the condition for an interval $J=[a,b]$, we write:
 \begin{align*}
 \gamma([a,b]+B_\ep) &\ge \gamma([a,b])+ \ep \exp{[-(M\ep+\ep^2/2)]} \cdot \gamma^+([a,b])\\
 \iff  \gamma([a-\ep,b+\ep]) &\ge \gamma([a,b])+ \ep \exp{[-(M\ep+\ep^2/2)]} \cdot \gamma^+([a,b])\\
 \iff \gamma([a-\ep, a]) + \gamma([b,b+\ep]) &\ge \ep \exp{[-(M\ep+\ep^2/2)]} [\phi(a)+\phi(b)].
 \end{align*}
 Thus, it is enough to verify that $\gamma([a-\ep, a])\ge \ep \exp{[-(M\ep+\ep^2/2)]}\phi(a)$. This follows from
 \begin{align*}
 \gamma([a-\ep, a]) &= \int_{a-\ep}^a \phi(x) dx\\
  &= \phi(a)\int_{a-\ep}^a \phi(x)/\phi(a) dx= \phi(a)\int_{a-\ep}^a \exp{[(a^2-x^2)/2]} dx\\
 &\ge \phi(a)\cdot \ep\cdot \min_{x\in [a-\ep,a]} \exp{[(a^2-x^2)/2]} \\
 &= \phi(a)\cdot \ep\cdot \min_{u\in [-\ep,0]} \exp{[(a^2-(a-u)^2)/2]}.
 \end{align*}
 Now $a^2-(a-u)^2=2au-u^2$. Given that this is a concave function of $u$, the minimum occurs at one of the two endpoints of the interval $[-\ep,0]$. Hence, we have
 \begin{align*}
 \gamma([a-\ep, a])
 &\ge \phi(a)\cdot \ep\cdot \min\{\exp{(a\ep-\ep^2/2)},1\}\\
 &\ge \phi(a)\cdot \ep\cdot \exp{[-(M\ep+\ep^2/2)]}.
 \end{align*}
 This proves the required bound \eqref{eb} when $J$ is an interval. Thus, by additivity, it also holds for unions of intervals when $\ep$ is small enough that the $\ep$-expansion of any two intervals does not merge them. This proves \eqref{eb} for general sets $J$.
 
 Suppose now that we have a classifier whose classification regions are unions of intervals. Suppose that the conditions for \eqref{eb} hold for both $S_1$ and $S_{-1}$. Specifically, suppose that all interval endpoints are contained in $[-M,M]$ and $\ep < (a_{k+1}-b_k)$ for all $k$.
 Recall that the robust risk can be written as
 \begin{align*}
 R(\hat y, \ep) =\pi \cdot \gamma(S_{-1}-\mu) +(1-\pi) \cdot \gamma(S_1+\mu).
 \end{align*}
 Applying \eqref{eb} to both classes, and denoting $M'=M+|\mu|$, we find
 \begin{align*}
 \gamma(S_1+\mu+B_\ep) &\ge \gamma(S_1+\mu)+ \ep \exp{[-(M'\ep+\ep^2/2)]} \cdot \gamma^+(S_1+\mu)\\
 \gamma(S_{-1}-\mu+B_\ep) &\ge \gamma(S_{-1}-\mu)+ \ep \exp{[-(M'\ep+\ep^2/2)]} \cdot \gamma^+(S_{-1}-\mu).
 \end{align*}
 Let $\tau=\tau(\ep,M,\mu)=\ep \exp{[-(M'\ep+\ep^2/2)]}$. Then, by taking a weighted average of the previous two inequalities, we find the bound on the robust risk
 \begin{align*}
 R(\hat y, \ep) \ge R(\hat y, 0) + \tau\cdot [\pi \cdot \gamma^+(S_{-1}-\mu) +(1-\pi) \cdot \gamma^+(S_1+\mu)].
 \end{align*}
We denote the difference between the boundary measure and isoperimetric profile of a set $S$ as $\delta(S) = \gamma^+(S)-I(\gamma(S))$. The Gaussian isoperimetric inequality states $\delta(S)\ge 0$. Defining $\delta_{\pm 1}=\delta(S_{\pm 1}\pm \mu)$ for the two classification regions, we conclude
 \begin{align}\label{rb}
 R(\hat y, \ep) \ge &R(\hat y, 0) + \tau[\pi I(\gamma(S_{-1}-\mu))) +(1-\pi) I(\gamma(S_1+\mu))]\nonumber\\
 &+\tau\cdot [\pi \delta_{-1} +(1-\pi) \delta_{1}].
 \end{align}
 From \cite{cianchi2011isoperimetric}, it follows for the Gaussian deficit  $\gamma^*(S)$ that
 $$\gamma^*(S) \le C \sqrt{\delta(S)}$$
 for a universal constant $C$. Using this inequality for $S_{\pm 1}\pm \mu$, we find that for a universal constant $c$
 \begin{align*}
 \pi \delta_{-1} +(1-\pi) \delta_{1} \ge c\cdot[\pi \gamma^*(S_{-1}-\mu)^2 +(1-\pi) \gamma^*(S_{1}+\mu)^2].
 \end{align*}
 Plugging in to \eqref{rb}, and discarding the second term on the right hand side, we find that
 \begin{align*}
 R(\hat y, \ep) \ge R(\hat y, 0)
 +\tau\cdot c \cdot [\pi \gamma^*(S_{-1}-\mu)^2 +(1-\pi) \gamma^*(S_{1}+\mu)^2].
 \end{align*}
 Since $R(\hat y, 0) \ge R_{\textnormal{Bay}}$, this gives the desired conclusion.
 \end{proof}
 

\section{Optimal \texorpdfstring{$\ell_2$}{l2} robust classifiers for three classes}
\label{sec:opt:l2:threeclass}

Having studied two-class robust classification,
we now turn to the more general setting of three Gaussian classes $\clC = \{-1,0,1\}$:
\begin{align} \label{eq:model:threeclass}
  x|y &\sim \clN( \revone{ \lambda_y } \mu, \sigma^2 I_p)
  , &
  y &=
  \begin{cases}
    +1 & \text{with probability } \pi_+
    , \\
     0 & \text{with probability } \pi_0
    , \\
    -1 & \text{with probability } \pi_-
    ,
  \end{cases}
\end{align}
where \revone{ $\mu \in \bbR^p \setminus \{0\}$
specifies the line along which the Gaussians lie,
$\lambda_y$ specifies the distance along $\mu$ of class $y$, }
$\sigma^2 \in \bbR_{>0}$ is the within-class variance,
and $\pi_+, \pi_0, \pi_- \in [0,1]$ sum to unity and specify the class proportions.
Setting $\pi_0 = 0$ produces the two-class model \cref{eq:model:twoclass}.
As before, we will take $\sigma^2=1$ without loss of generality to simplify the exposition.
\revone{%
Further, without loss of generality,
we also
normalize $\mu$ so that $\|\mu\|_2 = 1$,
center $\lambda_y$ so that $\lambda_0 = 0$,
and order $\lambda_y$ so that $\lambda_- \coloneqq \lambda_{-1} < 0 < \lambda_{1} \eqqcolon \lambda_+$.
}

The Bayes optimal classifier can again
be found via a calculation based on the pointwise  densities 
(see \cref{opt:threeclass:bayesopt}).
Here it takes the form of a \emph{linear interval classifier} (illustrated in \cref{fig:robopt:threeclass:linregions}):
\begin{equation} \label{eq:threeclass:lindef}
  \intclass(x; w,c_+,c_-)
  \coloneqq
  \begin{cases}
    +1 & \text{if } x^\top w \geq c_+
    , \\
     0 & \text{if } c_- < x^\top w < c_+
    , \\
    -1 & \text{if } x^\top w \leq c_-
    ,
  \end{cases}
\end{equation}
with Bayes optimal thresholds
\revone{
\begin{alignat}{2}
  \label{eq:threeclass:bayesopt}
  c_+ &\coloneqq \max && \big\{
    \revone{\lambda_+}/2+\ln(\pi_0/\pi_+)/\revone{|\lambda_+|},
    \revone{(\lambda_+ + \lambda_-)/2 - \ln(\pi_+/\pi_-)/|\lambda_+-\lambda_-|}
  \big\}
  , \\
  \nonumber
  c_- &\coloneqq \min && \big\{
    \revone{\lambda_-}/2-\ln(\pi_0/\pi_-)/\revone{|\lambda_-|},
    \revone{(\lambda_+ + \lambda_-)/2 - \ln(\pi_+/\pi_-)/|\lambda_+-\lambda_-|}
  \big\}
  ,
\end{alignat}
}%
and weights $w = \revone{\mu}$.
As in the two class setting,
the positive and negative classes are half-spaces,
but now the zero class in between is a slab.
As we will see,
this feature produces new phenomena and new challenges in robust classification.
Unlike the standard risk,
for which pointwise calculation of the Bayes classifier trivially generalizes to multi-class settings,
going from two-class robust classification to even three classes
requires some new methods.

\subsection{Optimal linear interval classifiers with respect to the robust risk}
We now present the main result of this section:
optimal linear interval classifiers
with respect to the robust risk.
As in the two-class setting,
this minimization does not simply reduce
to pointwise comparisons of posterior probabilities.
We focus here on the regime $\ep < \revone{\min\{|\lambda_+|,|\lambda_-|\}}/2$
where nontrivial classification to all three classes occurs;
when $\ep \geq \revone{\min\{|\lambda_+|,|\lambda_-|\}}/2$ the problem essentially reduces to two-class problems
(see \cref{opt:threeclass:large:eps}).

\oneortwo{%
\begin{figure}
  \begin{subfigure}{0.31\textwidth} \centering
    \begin{tikzpicture}[scale=0.95]
      \draw[->,>=stealth,thick] (0,0) -- node[text width=5mm,above,near end]{$w$} (0.7,0.7);
      \fill[opacity=0.2,color=Red]
        (0-1.0,+1.7) -- (+1.7,0-1.0) -- (+1.7,+1.7) -- cycle;
      \draw[thick,dashed] (0-1.0,+1.7) -- (+1.7,0-1.0);
      \fill[opacity=0.2,color=OliveGreen]
        (0-1.0,+1.7) -- (+1.7,0-1.0) -- (+1.7,-1.7) --
        (0+0.7,-1.7) -- (-1.7,0+0.7) -- (-1.7,+1.7) -- cycle;
      \draw[thick,dashed] (0+0.7,-1.7) -- (-1.7,0+0.7);
      \fill[opacity=0.2,color=Blue]
        (0+0.7,-1.7) -- (-1.7,0+0.7) -- (-1.7,-1.7) -- cycle;

      \node[rotate=-45] at (+0.95,+0.95) {\small $\primetranspose x'w > c_+$};
      \node[rotate=-45] at (-0.90,-0.90) {\small $\primetranspose x'w < c_-$};
      \node[rotate=-45] at (-0.25,-0.25) {\small $\primetranspose c_- < x'w < c_+$};
      \node[color=Red]        at (+1.4,+1.4) {\large $+$};
      \node[color=OliveGreen] at (+1.4,-1.4) {\large $0$};
      \node[color=Blue]       at (-1.4,-1.4) {\large $-$};
    \end{tikzpicture}
    \caption{Linear interval classifier regions.}
    \label{fig:robopt:threeclass:linregions}
  \end{subfigure}
  \begin{subfigure}{0.32\textwidth} \centering
    \includegraphics[scale=0.53]{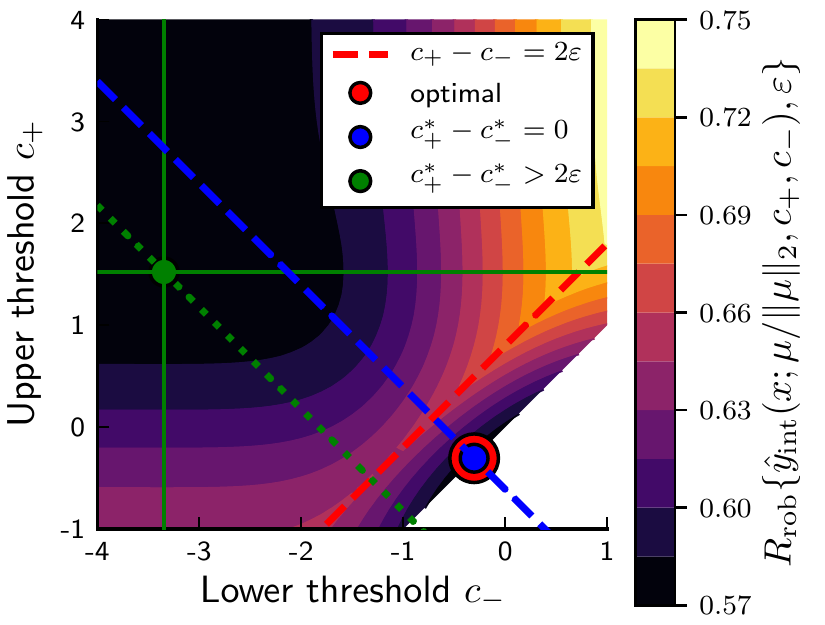}
    \caption{$\pi_0 = 42.00\%$}
    \label{fig:robopt:threeclass:pi0:a}
  \end{subfigure} \quad
  \begin{subfigure}{0.32\textwidth} \centering
    \includegraphics[scale=0.53]{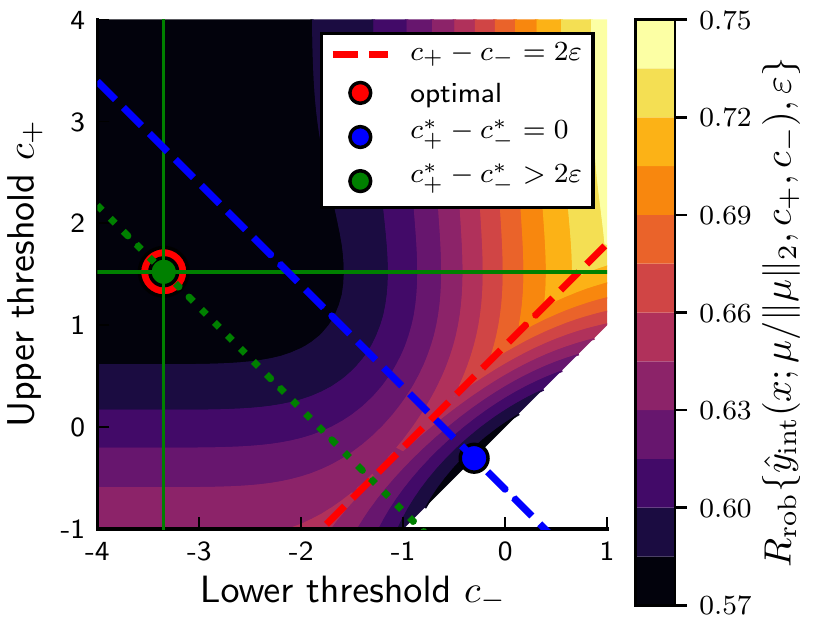}
    \caption{$\pi_0 = 42.01\%$}
    \label{fig:robopt:threeclass:pi0:b}
  \end{subfigure}
  \caption{Optimal linear \revone{interval} $\ell_2$-robust classifiers for three classes;
    (\subref{fig:robopt:threeclass:pi0:a})
    and (\subref{fig:robopt:threeclass:pi0:b})
    show the thresholds
    \cref{eq:robopt:threeclass:1:thresh,eq:robopt:threeclass:2:thresh},
    circling the optimal,
    where $\mu = 1$, \revone{$\lambda_\pm = \pm 1$,} $\gamma = 1.2$, $\ep = 0.4$
    and $\pi_\pm = (1-\pi_0)\{\gamma^{\pm 1}/(\gamma+\gamma^{-1})\}$.}
  \label{fig:robopt:threeclass}
\end{figure}
}{%
\begin{figure} \centering
  \begin{subfigure}{\linewidth} \centering
    \begin{tikzpicture}[scale=0.8]
      \draw[->,>=stealth,thick] (0,0) -- node[text width=5mm,above,near end]{$w$} (0.7,0.7);
      \fill[opacity=0.2,color=Red]
        (0-1.0,+1.7) -- (+1.7,0-1.0) -- (+1.7,+1.7) -- cycle;
      \draw[thick,dashed] (0-1.0,+1.7) -- (+1.7,0-1.0);
      \fill[opacity=0.2,color=OliveGreen]
        (0-1.0,+1.7) -- (+1.7,0-1.0) -- (+1.7,-1.7) --
        (0+0.7,-1.7) -- (-1.7,0+0.7) -- (-1.7,+1.7) -- cycle;
      \draw[thick,dashed] (0+0.7,-1.7) -- (-1.7,0+0.7);
      \fill[opacity=0.2,color=Blue]
        (0+0.7,-1.7) -- (-1.7,0+0.7) -- (-1.7,-1.7) -- cycle;

      \node[rotate=-45] at (+0.95,+0.95) {\small $\primetranspose x'w > c_+$};
      \node[rotate=-45] at (-0.90,-0.90) {\small $\primetranspose x'w < c_-$};
      \node[rotate=-45] at (-0.25,-0.25) {\small $\primetranspose c_- < x'w < c_+$};
      \node[color=Red]        at (+1.4,+1.4) {\large $+$};
      \node[color=OliveGreen] at (+1.4,-1.4) {\large $0$};
      \node[color=Blue]       at (-1.4,-1.4) {\large $-$};
    \end{tikzpicture}
    \caption{Linear interval classifier regions.}
    \label{fig:robopt:threeclass:linregions}
  \end{subfigure}\\[1em]

  \begin{subfigure}{0.49\linewidth} \centering
    \includegraphics[width=3.95cm,trim=6pt 5pt 6.5pt 0]{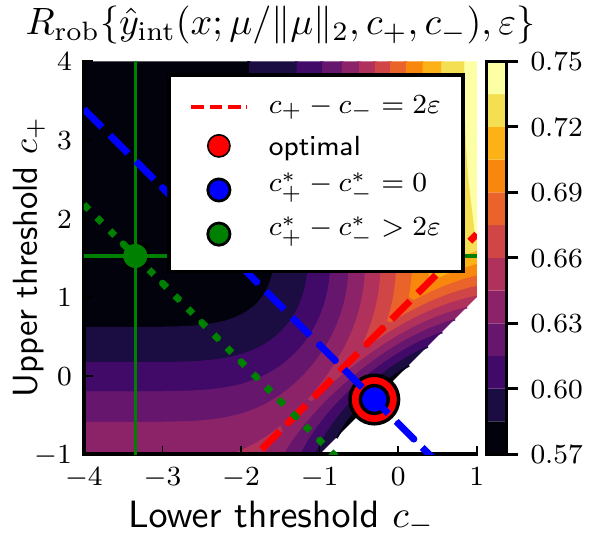}
    \caption{$\pi_0 = 42.00\%$}
    \label{fig:robopt:threeclass:pi0:a}
  \end{subfigure} \hfill
  \begin{subfigure}{0.49\linewidth} \centering
    \includegraphics[width=3.95cm,trim=6pt 5pt 6.5pt 0]{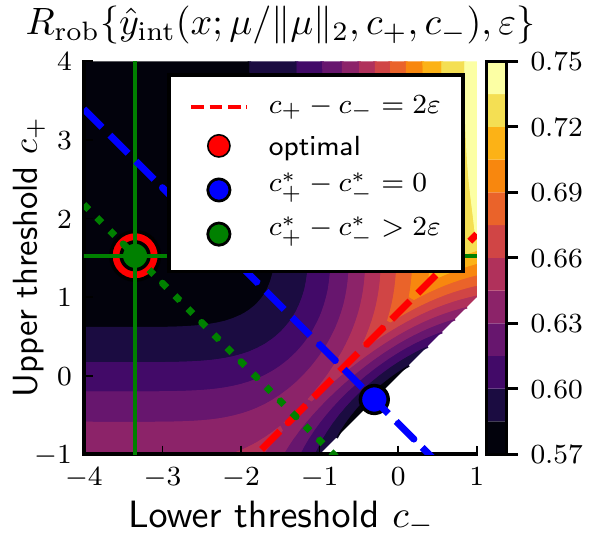}
    \caption{$\pi_0 = 42.01\%$}
    \label{fig:robopt:threeclass:pi0:b}
  \end{subfigure}
  \caption{Optimal linear \revone{interval} $\ell_2$-robust classifiers for three classes;
    (\subref{fig:robopt:threeclass:pi0:a})
    and (\subref{fig:robopt:threeclass:pi0:b})
    show the thresholds
    \cref{eq:robopt:threeclass:1:thresh,eq:robopt:threeclass:2:thresh},
    circling the optimal,
    where $\|\mu\|_2 = 1$, \revone{$\lambda_\pm = \pm 1$,} $\gamma = 1.2$, $\ep = 0.4$
    and $\pi_\pm = (1-\pi_0)\{\gamma^{\pm 1}/(\gamma+\gamma^{-1})\}$.}
  \label{fig:robopt:threeclass}
\end{figure}
}%

\medskip
\begin{theorem}[Optimal linear \revone{interval} $\ell_2$-robust three-class classifiers]
\label{thm:opt:threeclass:threshopt}
Suppose the data $(x,y)$ follow the three-class Gaussian model \cref{eq:model:threeclass}
and $\ep < \revone{\min\{|\lambda_+|,|\lambda_-|\}}/2$.
Among linear interval classifiers, an optimal $\ell_2$ robust \revone{classifier} is:
\begin{equation} \label{eq:robopt:threeclass:lin}
  \intclass^*(x)
  \coloneqq
  \revone{ \intclass(x; \mu, c_+^*, c_-^*) }
  ,
\end{equation}
where the thresholds $c_+^* \geq c_-^*$ are:
\begin{enumerate}
\item[Case 1 (\revone{rare} zero class).]
If $\pi_0 \leq \alpha^* \sqrt{\pi_- \pi_+}$ for $\alpha^*$ specified below in \cref{eq:opt:threeclass:cutoff}, then
the optimal classifier ignores the zero class.
The thresholds are equal,
i.e., $c_+^* = c_-^*$,
with value
\begin{equation} \label{eq:robopt:threeclass:1:thresh}
  c_+^* = c_-^*
  = \revone{ \frac{\lambda_+ + \lambda_-}{2} + } \frac{\ln(\pi_-/\pi_+)}{ \revone{ |\lambda_+ - \lambda_-| } -2\ep}
  ,
\end{equation}
and the corresponding robust risk is
\begin{equation*}
  \robrisk(\intclass^*,\ep)
  = \pi_0 + (\pi_+ + \pi_-) \robrisk^*\bigg( \revone{ \frac{|\lambda_+ - \lambda_-|}{2} } ,\frac{\pi_+}{\pi_+ + \pi_-};\ep\bigg)
  ,
\end{equation*}
where  $\robrisk^*$ is the two-class optimal robust risk in \cref{eq:robopt:twoclass:risk}.

\item[Case 2 (\revone{frequent} zero class).]
Otherwise, the thresholds are at least $2\ep$ apart,
i.e., $c_+^* - c_-^* > 2\ep$,
with
\begin{equation} \label{eq:robopt:threeclass:2:thresh}
  c_\pm^*
  = \frac{ \revone{ \lambda_\pm } }{2} \pm \frac{\ln(\pi_0/\pi_\pm)}{ \revone{ |\lambda_\pm| } -2\ep}
  ,
\end{equation}
and the corresponding robust risk is
\begin{equation*}
  \robrisk(\intclass^*,\ep)
  = (\pi_+ + \pi_0) \robrisk^*\bigg(\frac{ \revone{ |\lambda_+| } }{2},\frac{\pi_+}{\pi_+ + \pi_0};\ep\bigg)
  + (\pi_- + \pi_0) \robrisk^*\bigg(\frac{ \revone{ |\lambda_-| } }{2},\frac{\pi_-}{\pi_- + \pi_0};\ep\bigg)
  .
\end{equation*}
\end{enumerate}
The cutoff $\alpha^*$
is the unique solution
with \revone{ $\alpha \geq \bralpha$ }
to the equation:
\begin{align} \label{eq:opt:threeclass:cutoff}
  &
  (\gamma + \gamma^{-1})
  \robrisk^*\bigg( \revone{ \frac{|\lambda_+ - \lambda_-|}{2} } ,\frac{\gamma}{\gamma + \gamma^{-1}};\ep\bigg)
  \\&\qquad\qquad
  =
  (\gamma + \alpha)
  \robrisk^*\bigg(
    \frac{ \revone{ |\lambda_+| } }{2},\frac{\gamma}{\gamma + \alpha};\ep
  \bigg)
  +
  (\gamma^{-1} + \alpha)
  \robrisk^*\bigg(
    \frac{ \revone{ |\lambda_-| } }{2},\frac{\gamma^{-1}}{\gamma^{-1} + \alpha};\ep
  \bigg)
  - \alpha
  , \nonumber
\end{align}
where $\gamma \coloneqq \sqrt{\pi_+/\pi_-}$\revone{,
and
\begin{equation*}
  \bralpha
  \coloneqq
  \exp\bigg\{
    - \frac{(|\lambda_+|-2\ep)(|\lambda_-|-2\ep)}{2}
    - \frac{\lambda_+ + \lambda_-}{|\lambda_+ - \lambda_-| - 4\ep} \ln\gamma
  \bigg\}
  .
\end{equation*}
}
\end{theorem}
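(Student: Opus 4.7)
The plan is to reduce the problem to one dimension, split the minimization over the two thresholds into two regimes depending on whether $c_+ - c_- > 2\ep$ or $c_+ - c_- \leq 2\ep$, solve each regime in closed form by invoking the two-class result \cref{orc}, and finally compare the two candidate optima to obtain the cutoff $\alpha^*$.

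First, by an analogous projection and isoperimetry argument to the one used in \cref{thm:opt:twoclass:admissible}---all three class means lie along $\mu$, and rotating the direction $w$ away from $\mu$ shrinks the effective pairwise class separations---it suffices to take $w = \mu$ and to optimize over thresholds in the one-dimensional projection $z = x^\top\mu \sim \clN(\lambda_y,1)$. A direct calculation of the robust misclassification sets then gives
\begin{equation*}
  \robrisk(\intclass)
  = \pi_+\Phi(c_+ + \ep - \lambda_+)
  + \pi_-\brPhi(c_- - \ep - \lambda_-)
  + \pi_0\, P_0(c_+,c_-),
\end{equation*}
where $P_0(c_+,c_-) = \brPhi(c_+-\ep) + \Phi(c_-+\ep)$ when $c_+ - c_- > 2\ep$, and $P_0 \equiv 1$ otherwise (the two $\ep$-expansions cover $\bbR$, robustly misclassifying every zero-class point).

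Second, I minimize in each regime. On the closed region $c_+ - c_- \leq 2\ep$, the zero-class contributes a constant $\pi_0$, while the remaining terms are respectively strictly increasing in $c_+$ and strictly decreasing in $c_-$, so the minimum is attained on the edge $c_+ = c_-$. This collapses the problem into the two-class problem between $\pm 1$ with means $\lambda_\pm$ and proportional priors, and \cref{orc} yields the threshold in \cref{eq:robopt:threeclass:1:thresh} together with the Case 1 risk. On the open region $c_+ - c_- > 2\ep$, the risk separates as $f(c_+) + g(c_-)$ with $f(c_+) = \pi_+\Phi(c_++\ep-\lambda_+) + \pi_0\brPhi(c_+-\ep)$ and an analogous $g$. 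The ratio $\phi(c_++\ep-\lambda_+)/\phi(c_+-\ep) = \exp\{(\lambda_+ - 2\ep)(2c_+-\lambda_+)/2\}$ is strictly monotonic under the hypothesis $\ep < |\lambda_+|/2$, so $f'$ changes sign exactly once, producing the unique stationary point of \cref{eq:robopt:threeclass:2:thresh}; this coincides with the Bayes-optimal robust threshold between the two adjacent classes $\{0,+1\}$, so \cref{orc} applied twice yields the Case 2 risk formula.

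Third, I compare the two candidates. Rearranging the feasibility condition $c_+^* - c_-^* > 2\ep$ at the Case 2 stationary points \cref{eq:robopt:threeclass:2:thresh}, and reparameterizing via $\alpha = \pi_0/\sqrt{\pi_+\pi_-}$ and $\gamma = \sqrt{\pi_+/\pi_-}$, yields exactly $\alpha > \bralpha$. Hence for $\alpha \leq \bralpha$ the Case 2 stationary point is infeasible and Case 1 is automatically optimal. For $\alpha > \bralpha$ both candidates exist, and equating their risks and dividing by $\sqrt{\pi_+\pi_-}$ gives precisely \cref{eq:opt:threeclass:cutoff}. The main obstacle is the non-smooth jump in $P_0$ at the seam $c_+ - c_- = 2\ep$: I must verify that the Case 2 stationary point is the global minimum on its open region (not merely a local extremum or one worse than the seam itself), and that when $\alpha \leq \bralpha$ the global minimum reverts cleanly to the Case 1 value at $c_+ = c_-$ rather than residing on the seam. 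Establishing uniqueness of $\alpha^* \geq \bralpha$ further requires a monotonicity argument---Case 1's risk is linear in $\alpha$ with slope $\sqrt{\pi_+\pi_-}$, while Case 2's risk grows strictly more slowly because the stationary point redistributes part of the zero-class mass---which ultimately reduces to standard envelope and monotonicity properties of the two-class Bayes risk as a function of the class prior.
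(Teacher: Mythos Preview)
Your proposal is correct and follows essentially the same route as the paper: optimize $w$ directly (the paper does this by a simple monotonicity argument rather than isoperimetry, but either works), split into the regions $c_+-c_-\le 2\ep$ and $c_+-c_->2\ep$, reduce each to two-class subproblems via \cref{orc}, reparameterize by $\alpha=\pi_0/\sqrt{\pi_+\pi_-}$ and $\gamma$, and compare. The paper completes the uniqueness of $\alpha^*$ exactly via the envelope argument you anticipate---differentiating the risk difference $\Delta(\alpha)$ kills the $\tlc_\pm$-dependence at the stationary points, leaving $\partial\Delta/\partial\alpha=\Pr_0(\tlx\le\tlc_-+\ep)+\Pr_0(\tlx\ge\tlc_+-\ep)-1<0$, which is itself decreasing in $\alpha$---and handles the seam by noting that when $(\tlc_-,\tlc_+)\notin\Omega_1$ the $\Omega_1$-minimum lies on the boundary $c_+=c_-+2\ep\subset\Omega_0$ and is therefore dominated by the Case~1 point.
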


\Cref{thm:opt:threeclass:threshopt}
reveals several properties of optimally robust linear \revone{interval} classification
in the three-class setting.

\textbf{New three-class phenomenon: discontinuity of the optimal thresholds.}
The optimal thresholds in \cref{thm:opt:threeclass:threshopt}
fall into two cases.
In the first case, the thresholds \cref{eq:robopt:threeclass:1:thresh}
coincide with the one from two-class robust classification
between the positive and negative classes,
\emph{ignoring the zero class}.
In the second case, the thresholds \cref{eq:robopt:threeclass:2:thresh}
coincide with two-class robust classification:
i) between the zero and positive classes,
and
ii) between the zero and negative classes.

However, the robust setting with $\ep > 0$
introduces a new and perhaps surprising property:
optimal thresholds can now be \emph{discontinuous}
in the problem parameters,
as they jump between the cases.
For example,
\cref{fig:robopt:threeclass:pi0:a,fig:robopt:threeclass:pi0:b}
show nearly identical setups,
with only a \emph{slight} change in class imbalance,
but the optimal thresholds jump \emph{discontinuously}
from \cref{eq:robopt:threeclass:1:thresh} to \cref{eq:robopt:threeclass:2:thresh}.
To understand why this occurs, 
the optimal thresholds
are never in $0 < c_+ - c_- < 2\ep$
since any such choice can be improved
by moving $c_+$ and $c_-$ closer together.
This discontinuity does \emph{not} arise for the standard risk ($\ep = 0$),
since \cref{eq:robopt:threeclass:1:thresh,eq:robopt:threeclass:2:thresh}
coincide at the cutoff in that case.
It also does \emph{not} arise in the two-class setup discussed previously.

The three-class Gaussian setting \cref{eq:model:threeclass} considered here
is an exemplar for general multi-class settings,
and we expect similar discontinuous transitions to appear more generally.
Essentially, they arise from a general property of the $\ep$-robust risk
that our analysis draws into focus.
Namely, classification regions must either be:
i) empty or
ii) have radius at least $\ep$ (otherwise it is better to remove them).
Jumping between these cases produces the observed discontinuity.

\textbf{Tradeoffs and the impact of class imbalance.}
As in the two-class setting,
we see a tradeoff in general between the standard risk
and the robust risk.
No classifier optimizes both
\emph{even when restricted to linear \revone{interval} classifiers},
which are optimal for the standard risk.
A new three-class manifestation of this phenomenon
arises in the jump
from \cref{eq:robopt:threeclass:1:thresh} to \cref{eq:robopt:threeclass:2:thresh},
i.e., from ignoring to including the zero class.
The class ratios in
\cref{eq:robopt:threeclass:1:thresh,eq:robopt:threeclass:2:thresh}
are also effectively inflated by
\revone{$1/(1-2\ep/|\lambda_+ - \lambda_-|)$} and \revone{$1/(1-2\ep/|\lambda_\pm|)$},
respectively,
amplifying the effects of class imbalance.

\subsection{Proof of \cref{thm:opt:threeclass:threshopt}}
\newcommand{\tlrobrisk}{\tlR_{\mathrm{rob}}}

We now prove \cref{thm:opt:threeclass:threshopt}.
The three-class setting introduces a new challenge in the proof:
the optimization landscape will turn out to have two qualitatively different regions.
Essentially, the robust risk produces a dichotomy between classifiers
where the central zero-class slab in \cref{fig:robopt:threeclass:linregions}
is either ``thin'' or ``thick''.
Optimizing over each region separately yields two candidate optimizers.
We will need to carefully analyze their properties
to characterize when each is globally optimal.

Precisely put, our goal here is to find $w \in \bbR^p \setminus \{0\}$ and $c_+ \geq c_-$
that minimize the robust risk
\begin{equation*}
  \robrisk\{\intclass(\cdot; w, c_+, c_-),\ep,\|\cdot\|_2\}
  = \pi_+ \clM_+(w, c_+, c_-, \ep)
  + \pi_0 \clM_0(w, c_+, c_-, \ep)
  + \pi_- \clM_-(w, c_+, c_-, \ep)
  ,
\end{equation*}
where
\begin{align*}
  \clM_+(w, c_+, c_-, \ep) &\coloneqq
  \Pr_{x|y=+1} \big\{
    \exists_{\delta : \|\delta\|_2 \leq \ep}
    \;\;
    \intclass(x+\delta; w,c_+,c_-) \neq +1
  \big\}
  , \\
  \clM_-(w, c_+, c_-, \ep) &\coloneqq
  \Pr_{x|y=-1} \big\{
    \exists_{\delta : \|\delta\|_2 \leq \ep}
    \;\;
    \intclass(x+\delta; w,c_+,c_-) \neq -1
  \big\}
  , \\
  \clM_0(w, c_+, c_-, \ep) &\coloneqq
  \Pr_{x|y=0} \big\{
    \exists_{\delta : \|\delta\|_2 \leq \ep}
    \;\;
    \intclass(x+\delta; w,c_+,c_-) \neq 0
  \big\}
  ,
\end{align*}
are the associated robust misclassification probabilities
as functions of $w$, $c_+$, $c_-$ and $\ep$.

\subsubsection{Optimizing over \texorpdfstring{$\bmw$}{w}}
\label{proof:thm:opt:threeclass:threshopt:weights}
First we optimize over $w \in \bbR^p \setminus \{0\}$.
Since any positive scaling of $w$ can be absorbed into $c_+$ and $c_-$,
it suffices to consider $\|w\|_2 = 1$.
Then we have
\begin{equation*}
  \clM_0(w, c_+, c_-,\ep)
  = \Pr_{x|y=0}(x^\top w \leq c_- + \ep \text{ or } x^\top w \geq c_+ - \ep)
  = \Pr_{\tlx \sim \clN(0,1)}(\tlx \leq c_- + \ep \text{ or } \tlx \geq c_+ - \ep)
  ,
\end{equation*}
which is a constant with respect to $w$.
Meanwhile,
\begin{align*}
  \clM_+(w, c_+, c_-,\ep)
  &= \Pr_{x|y=+1}(x^\top w < c_+ + \ep)
  = \Pr_{\tlx \sim \clN(\revone{\lambda_+} \mu^\top w,1)}(\tlx < c_+ + \ep)
  , \\
  \clM_-(w, c_+, c_-,\ep)
  &= \Pr_{x|y=-1}(x^\top w > c_- - \ep)
  = \Pr_{\tlx \sim \clN(\revone{\lambda_-} \mu^\top w,1)}(\tlx > c_- - \ep)
  ,
\end{align*}
which are both minimized by taking $w = \revone{\mu}$.
Hence $w^* \coloneqq \revone{\mu}$ optimizes the robust risk.

\subsubsection{Finding candidate optimizers with respect to \texorpdfstring{$c_+$}{c+} and \texorpdfstring{$c_-$}{c-}}
\label{proof:thm:opt:threeclass:threshopt:thresh}
We now proceed to derive optimal thresholds $c_+ \geq c_-$
given optimal weights $w^* = \revone{\mu}$.
Substituting $w^*$ into the robust risk yields
the following function of $c_-$ and $c_+$:
\begin{align*}
  \robrisk(c_-,c_+)
  &\coloneqq
  \robrisk\{\intclass(\cdot; w^*, c_+, c_-),\ep,\|\cdot\|_2\}
  \\&
  = \pi_+ \clM_+(w^*, c_+, c_-, \ep)
  + \pi_0 \clM_0(w^*, c_+, c_-, \ep)
  + \pi_- \clM_-(w^*, c_+, c_-, \ep)
  \\&
  = \pi_+ \Pr_{\tlx \sim \clN(\revone{\lambda_+} \mu^\top w^*,1)}(\tlx < c_+ + \ep)
  + \pi_0 \Pr_{\tlx \sim \clN(0,1)}(\tlx \leq c_- + \ep \text{ or } \tlx \geq c_+ - \ep)
  \\&\qquad
  + \pi_- \Pr_{\tlx \sim \clN(\revone{\lambda_-} \mu^\top w^*,1)}(\tlx > c_- - \ep)
  \\&
  =
    \pi_- \Pr_-(\tlx > c_- - \ep)
  + \pi_+ \Pr_+(\tlx < c_+ + \ep)
  + \pi_0 \Pr_0(\tlx \leq c_- + \ep \text{ or } \tlx \geq c_+ - \ep)
  ,
\end{align*}
where we drop the arguments $\ep$ and $\|\cdot\|_2$ from $\robrisk$ for simplicity,
and use the following shorthands
for the class-conditional probabilities
\begin{align*}
  \Pr_-
  &\coloneqq
  \Pr_{\tlx \sim \clN(\revone{\lambda_-} \mu^\top w^*,1)}
  = \Pr_{\tlx \sim \clN(\revone{\lambda_-},1)}
  , &
  \Pr_+
  &\coloneqq
  \Pr_{\tlx \sim \clN(\revone{\lambda_+} \mu^\top w^*,1)}
  = \Pr_{\tlx \sim \clN(\revone{\lambda_+},1)}
  , &
  \Pr_0
  &\coloneqq
  \Pr_{\tlx \sim \clN(0,1)}
  .
\end{align*}

Next we will explain the unique feature of the robust risk:
the constraint set $\Omega \coloneqq \{(c_-, c_+ : c_- \leq c_+)\}$
contains two qualitatively different regions.
Minimizing over each region results in two candidate optimizers.

First, consider
$\Omega_0 \coloneqq \{(c_-, c_+) : c_- \leq c_+ \leq c_- + 2\ep\}$.
In this region,
$\Pr_0(\tlx \leq c_- + \ep \text{ or } \tlx \geq c_+ - \ep) = 1$
so
the robust risk simplifies to
\begin{equation*}
  \robrisk(c_-,c_+)
  =
    \pi_- \Pr_-(\tlx > c_- - \ep)
  + \pi_+ \Pr_+(\tlx < c_+ + \ep)
  + \pi_0
  .
\end{equation*}
Since this function is decreasing in $c_-$ and increasing in $c_+$,
it is minimized by $c_- = c_+$.
This yields a two-class problem between the negative and positive classes
with minimizer
\begin{equation} \label{eq:threeclassline:threshopt:equal}
  c_- = c_+
  = \tlc \coloneqq \revone{\frac{\lambda_+ + \lambda_-}{2}} + \frac{\ln(\pi_-/\pi_+)}{\revone{|\lambda_+ - \lambda_-|}-2\ep}
  .
\end{equation}
This is the minimizer over $\Omega_0$.

Second, consider
$\Omega_1 \coloneqq \{(c_-, c_+) : c_+ \geq c_- + 2\ep\}$.
In this region,
$\Pr_0(\tlx \leq c_- + \ep \text{ and } \tlx \geq c_+ - \ep) = 0$
so
\begin{equation*}
  \Pr_0(\tlx \leq c_- + \ep \text{ or } \tlx \geq c_+ - \ep)
  = \Pr_0(\tlx \leq c_- + \ep) + \Pr_0(\tlx \geq c_+ - \ep)
  ,
\end{equation*}
and the robust risk simplifies to
$\robrisk(c_-,c_+) = \robrisk^-(c_-) + \robrisk^+(c_+)$,
where
\begin{align*}
  \robrisk^-(c_-) &\coloneqq
    \pi_- \Pr_-(\tlx > c_- - \ep)
  + \pi_0 \Pr_0(\tlx \leq c_- + \ep)
  , \\
  \robrisk^+(c_+) &\coloneqq
    \pi_0 \Pr_0(\tlx \geq c_+ - \ep)
  + \pi_+ \Pr_+(\tlx < c_+ + \ep)
  .
\end{align*}
Now, $\robrisk^-(c_-)$ is proportional to the $\ep$-robust risk
for a two-class problem between the negative and zero classes.
Hence, it is a decreasing function for $c_- < \tlc_-$
and an increasing function for $c_- > \tlc_-$,
where the critical point $\tlc_-$ is
\begin{equation} \label{eq:threeclassline:threshopt:minus}
  \tlc_- \coloneqq \frac{\revone{\lambda_-}}{2} - \frac{\ln(\pi_0/\pi_-)}{\revone{|\lambda_-|}-2\ep}
  .
\end{equation}
Likewise $\robrisk^+(c_+)$ is a decreasing function for $c_+ < \tlc_+$
and an increasing function for $c_+ > \tlc_+$,
where the critical point is
\begin{equation} \label{eq:threeclassline:threshopt:plus}
  \tlc_+ \coloneqq \frac{\revone{\lambda_+}}{2} + \frac{\ln(\pi_0/\pi_+)}{\revone{|\lambda_+|}-2\ep}
  .
\end{equation}
If $(\tlc_-,\tlc_+) \in \Omega_1$,
then it follows that $(\tlc_-,\tlc_+)$ is globally optimal within $\Omega_1$.
On the other hand, if $(\tlc_-,\tlc_+) \notin \Omega_1$
then the optimal value in $\Omega_1$
must occur on the boundary $c_+ = c_- + 2\ep$.
By the monotonicity analysis above,
any point off that boundary can necessarily be improved
either by increasing $c_-$ or by decreasing $c_+$
since either $c_- \leq \tlc_-$ or $c_+ \geq \tlc_+$
for any $(c_-,c_+) \in \Omega_1$
when $(\tlc_-,\tlc_+) \notin \Omega_1$.

\subsubsection{Characterizing when each candidate optimizer is globally optimal}
Now we compare the minimizers from the regions $\Omega_0$ and $\Omega_1$
to find globally optimal thresholds.
For this purpose, it turns out that
$\alpha = \pi_0/\sqrt{\pi_- \pi_+}$
and $\gamma = \sqrt{\pi_+/\pi_-}$
provide a more convenient parameterization
than $\pi_-$, $\pi_0$ and $\pi_+$.
Recall that $\pi_0 + \pi_- + \pi_+ = 1$ necessarily constrains those parameters.
Rewriting
\cref{eq:threeclassline:threshopt:plus,%
  eq:threeclassline:threshopt:minus,%
  eq:threeclassline:threshopt:equal}
in terms of $\alpha$ and $\gamma$ yields
\begin{align*}
  \tlc &= \revone{\frac{\lambda_+ + \lambda_-}{2}} - \frac{\ln\gamma}{\revone{|\lambda_+ - \lambda_-|/2}-\ep}
  , &
  \tlc_{\pm} &= \frac{\revone{\lambda_{\pm}}}{2} \pm \frac{\ln\alpha}{\revone{|\lambda_{\pm}|}-2\ep} - \frac{\ln\gamma}{\revone{|\lambda_{\pm}|}-2\ep}
  .
\end{align*}
Furthermore,
\begin{align*}
  &
  \tlc_+ > \tlc_- + 2\ep
  \\&
  \iff
  \revone{
  0 < \tlc_+ - \tlc_- - 2\ep
  = \frac{|\lambda_+ - \lambda_-| - 4\ep}{2}
  + \frac{|\lambda_+ - \lambda_-| - 4\ep}{(|\lambda_+|-2\ep)(|\lambda_-|-2\ep)} \ln\alpha
  + \frac{ \lambda_+ + \lambda_-        }{(|\lambda_+|-2\ep)(|\lambda_-|-2\ep)} \ln\gamma
  }
  \\&
  \iff
  \revone{
  \alpha
  >
  \bralpha
  \coloneqq
  \exp\bigg\{
    - \frac{(|\lambda_+|-2\ep)(|\lambda_-|-2\ep)}{2}
    - \frac{\lambda_+ + \lambda_-}{|\lambda_+ - \lambda_-| - 4\ep} \ln\gamma
  \bigg\}
  }
  ,
\end{align*}
yielding \revone{an} equivalent condition
for $(\tlc_-,\tlc_+) \in \interior\Omega_1$, 
where $\interior A$ denotes the interior of the set $A$ with respect to the standard topology induced by the Euclidean metric.
Thus, when $\alpha \leq \revone{\bralpha}$,
the optimal value in $\Omega_1$ occurs on the boundary $c_+ = c_- + 2\ep$,
but this boundary is also contained in $\Omega_0$
so it is no worse than \cref{eq:threeclassline:threshopt:equal}.
Namely, \cref{eq:threeclassline:threshopt:equal} is optimal
when $\alpha \leq \revone{\bralpha}$.

Now suppose $\alpha > \revone{\bralpha}$.
In this case,
$(\tlc_-,\tlc_+) \in \interior\Omega_1$ is optimal in $\Omega_1$,
so we compare $\robrisk(\tlc,\tlc)$
with $\robrisk(\tlc_-,\tlc_+) = \robrisk^-(\tlc_-) + \robrisk^+(\tlc_+)$.
For this comparison,
we study the sign of
\begin{equation} \label{eq:opt:threeclass:diff}
  \Delta(\alpha)
  \coloneqq
  \tlrobrisk^-(\tlc_-,\alpha) + \tlrobrisk^+(\tlc_+,\alpha) - \tlrobrisk(\tlc,\alpha)
  ,
\end{equation}
where we define
\begin{align*}
  \tlrobrisk^-(\tlc_-,\alpha)
  &\coloneqq
    \gamma^{-1} \Pr_-(\tlx > \tlc_- - \ep)
  + \alpha      \Pr_0(\tlx \leq \tlc_- + \ep)
  , \\
  \tlrobrisk^+(\tlc_+,\alpha)
  &\coloneqq
    \alpha      \Pr_0(\tlx \geq \tlc_+ - \ep)
  + \gamma      \Pr_+(\tlx < \tlc_+ + \ep)
  , \\
  \tlrobrisk(\tlc,\alpha)
  &\coloneqq
    \gamma^{-1} \Pr_-(\tlx > \tlc - \ep)
  + \gamma      \Pr_+(\tlx < \tlc + \ep)
  + \alpha
  .
\end{align*}
Note first that \revone{$\Delta(\bralpha) \geq 0$}
since, as established above,
$\robrisk(\tlc,\tlc) \leq \robrisk^-(\tlc_-) + \robrisk^+(\tlc_+)$
in this case.

Next, when $\alpha > \revone{\bralpha}$
\begin{align*}
  \frac{\partial \Delta(\alpha)}{\partial \alpha}
  &=
    \frac{\partial \tlrobrisk^-(\tlc_-,\alpha)}{\partial \tlc_-}
      \frac{\partial \tlc_-}{\partial \alpha}
  + \frac{\partial \tlrobrisk^-(\tlc_-,\alpha)}{\partial \alpha}
  + \frac{\partial \tlrobrisk^+(\tlc_+,\alpha)}{\partial \tlc_+}
      \frac{\partial \tlc_+}{\partial \alpha}
  + \frac{\partial \tlrobrisk^+(\tlc_+,\alpha)}{\partial \alpha}
  - \frac{\partial \tlrobrisk(\tlc,\alpha)}{\partial \alpha}
  \\
  &= \Pr_0(\tlx \leq \tlc_- + \ep) + \Pr_0(\tlx \geq \tlc_+ - \ep) - 1
  < 0
  .
\end{align*}
Both $\tlc_-$ and $\tlc_+$ are differentiable functions of $\alpha$ for $\alpha > 0$,
and $\tlrobrisk^-$, $\tlrobrisk^+$ and $\tlrobrisk$
are continuously differentiable functions of $\tlc_-$, $\tlc_+$ and $\alpha$.
Hence, $\Delta$ is also a differentiable function of $\alpha$.
The equality in the second line holds because $\tlc_-$ and $\tlc_+$
are critical points of $\robrisk^-$ and $\robrisk^+$, respectively,
and the inequality holds because $\tlc_- + \ep < \tlc_+ - \ep$
for $\alpha > \revone{\bralpha}$.
Namely, $\Delta(\alpha)$ is a decreasing function for $\alpha > \revone{\bralpha}$.

Next, $\tlc_-$ is a decreasing function of $\alpha$, and $\tlc_+$ is an increasing function of $\alpha$.
Hence, $\Pr_0(\tlx \leq \tlc_- + \ep)$ and $\Pr_0(\tlx \geq \tlc_+ - \ep)$
are both decreasing functions of $\alpha$,
and so $\partial \Delta(\alpha)/\partial \alpha$
is also a decreasing function of $\alpha$.
As a result, $\Delta < 0$ eventually
and $\Delta$ has exactly one root with respect to $\alpha$ in the domain  $\alpha \geq \revone{\bralpha}$.
Specifically, there is a unique $\alpha^* \geq \revone{\bralpha}$ for which $\Delta(\alpha^*) = 0$.
If $\alpha < \alpha^*$ then $\Delta > 0$, and if $\alpha > \alpha^*$ then $\Delta < 0$.

\revone{In particular,} we must solve the equation $\Delta(\alpha) = 0$ to obtain $\alpha^*$.
We conclude by deriving the alternative form \cref{eq:opt:threeclass:cutoff}
for this equation.
First, recall that $\tlc$ is the optimal two-class threshold
between the negative and positive classes so
\begin{align*}
  \robrisk(\tlc,\tlc)
  &=
    \pi_0
  + \pi_- \Pr_-(\tlx > \tlc - \ep)
  + \pi_+ \Pr_+(\tlx < \tlc + \ep)
  \\
  &=
    \pi_0
  + (\pi_+ + \pi_-)
  \bigg\{
    \frac{\pi_-}{\pi_+ + \pi_-} \Pr_-(\tlx > \tlc - \ep)
  + \frac{\pi_+}{\pi_+ + \pi_-} \Pr_+(\tlx < \tlc + \ep)
  \bigg\}
  \\
  &=
    \pi_0
  + (\pi_+ + \pi_-) \robrisk^*\bigg(\revone{\frac{|\lambda_+ - \lambda_-|}{2}},\frac{\pi_+}{\pi_+ + \pi_-};\ep\bigg)
  .
\end{align*}
Next, when $\alpha \geq \revone{\bralpha}$,
we have $\tlc_+ \geq \tlc_- + 2\ep$ so
\begin{align*}
  &\robrisk(\tlc_-,\tlc_+)
  = \robrisk^-(\tlc_-) + \robrisk^+(\tlc_+)
  \\
  &\qquad=
    \pi_+ \Pr_+(\tlx < \tlc_+ + \ep)
  + \pi_0 \Pr_0(\tlx \geq \tlc_+ - \ep)
  + \pi_0 \Pr_0(\tlx \leq \tlc_- + \ep)
  + \pi_- \Pr_-(\tlx > \tlc_- - \ep)
  \\
  &\qquad=
  (\pi_+ + \pi_0)
  \bigg\{
    \frac{\pi_0}{\pi_+ + \pi_0} \Pr_0(\tlx \geq \tlc_+ - \ep)
  + \frac{\pi_+}{\pi_+ + \pi_0} \Pr_+(\tlx < \tlc_+ + \ep)
  \bigg\}
  \\
  &\qquad\qquad
  + (\pi_- + \pi_0)
  \bigg\{
    \frac{\pi_-}{\pi_- + \pi_0} \Pr_-(\tlx > \tlc_- - \ep)
  + \frac{\pi_0}{\pi_- + \pi_0} \Pr_0(\tlx \leq \tlc_- + \ep)
  \bigg\}
  \\
  &\qquad=
    (\pi_+ + \pi_0) \robrisk^*\bigg(\frac{\revone{|\lambda_+|}}{2},\frac{\pi_+}{\pi_+ + \pi_0};\ep\bigg)
  + (\pi_- + \pi_0) \robrisk^*\bigg(\frac{\revone{|\lambda_-|}}{2},\frac{\pi_-}{\pi_- + \pi_0};\ep\bigg)
  ,
\end{align*}
since $\tlc_+$ and $\tlc_-$ are, respectively, optimal two-class thresholds:
i) between the zero and the positive class
and
ii) between the zero and the negative class.
Substituting these into \cref{eq:opt:threeclass:diff} and simplifying yields
\begin{align*}
  \Delta
  &=
  \frac{1}{\sqrt{\pi_- \pi_+}}
  \big\{
    \robrisk^-(\tlc_-) + \robrisk^+(\tlc_+) - \robrisk(\tlc,\tlc)
  \big\}
  \\
  &=
    (\gamma      + \alpha) \robrisk^*\bigg(
      \frac{\revone{|\lambda_+|}}{2}, \frac{\gamma     }{\gamma      + \alpha}; \ep
    \bigg)
  + (\gamma^{-1} + \alpha) \robrisk^*\bigg(
      \frac{\revone{|\lambda_-|}}{2}, \frac{\gamma^{-1}}{\gamma^{-1} + \alpha}; \ep
    \bigg)
  - \alpha
   \\&\qquad
  - (\gamma + \gamma^{-1}) \robrisk^*\bigg(
    \revone{\frac{|\lambda_+ - \lambda_-|}{2}}, \frac{\gamma}{\gamma + \gamma^{-1}}; \ep
  \bigg)
  ,
\end{align*}
and re-arranging gives \cref{eq:opt:threeclass:cutoff}.
This concludes the proof of \cref{thm:opt:threeclass:threshopt}.
\qed

\subsection{\revone{On the optimality} of linear \revone{interval} classifiers}
\label{sec:opt:l2:threeclass:optimality}

\revone{%
\Cref{thm:opt:threeclass:threshopt}
gave an optimal linear interval robust classifier
for the three-class setting \cref{eq:model:threeclass},
i.e.,
we derived a classifier
that minimizes the robust risk
subject to the constraint that
it is a linear interval classifier \cref{eq:threeclass:lindef}.
Naturally, one may wonder if this classifier
is only optimal among linear interval classifiers
or if it is in fact optimal across all classifiers.
In other words,
are linear interval classifiers optimal?
It is important to note here
that answering this question
was not needed to demonstrate a tradeoff
between the standard risk and the robust risk,
as discussed above.
Nevertheless, this is an important and interesting question in its own right.
Indeed,
given the symmetry of the three-class setting we consider,
we expect
that linear interval classifiers are optimal.%
\footnote{\revone{For Gaussians in arbitrary positions,
  i.e., with means that do not lie on a line,
  we do not expect linear interval classifiers to be optimal.
  See \cref{sec:threeclass:beyond:line} for a more detailed discussion
  with some conjectures.
}}
Namely, we have the following \lcnamecref{conj:opt:threeclass:linear}:

\medskip
\begin{conjecture}[Linear interval classifiers are optimal across all classifiers]
\label{conj:opt:threeclass:linear}
Under the assumptions of \cref{thm:opt:threeclass:threshopt},
the optimal linear interval classifier from \cref{thm:opt:threeclass:threshopt}
is also \underline{optimal across all classifiers}.
\end{conjecture}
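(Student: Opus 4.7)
The plan is to mirror and extend the combined Gaussian concentration of measure (GCM) plus Neyman--Pearson argument used for the two-class case (proof of \cref{thm:opt:twoclass:admissible}). Given an arbitrary classifier $\hat y$, I would construct a linear interval classifier with no worse robust risk, and then invoke \cref{thm:opt:threeclass:threshopt} to pin down the specific optimal classifier.

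First I would reduce to one dimension. Because each Gaussian $\clN(\lambda_c \mu, I_p)$ is rotationally invariant about the line spanned by $\mu$, and so is the $\ell_2$-ball $B_\ep$, given any $\hat y$ one can produce a classifier $\bar y$ depending only on the scalar $t = x^\top \mu / \|\mu\|_2$ whose robust risk is no larger. This extends the back-projection argument sketched in the two-class proof: replacing $\hat y$ by a rotationally symmetrized version cannot hurt the robust risk, and a rotationally symmetric classifier is essentially a 1D classifier on the $\mu$-axis.

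Working in 1D, the three densities enjoy the monotone likelihood ratio property (since $\lambda_- < 0 < \lambda_+$ and variances agree), strongly suggesting the optimal decision regions should be sorted: $S_-$ to the left, $S_+$ to the right, $S_0$ an interval in between. To realize this rigorously, given a 1D classifier with regions $S_-, S_0, S_+$, I would apply GCM to $S_-$ (with respect to $\Pr_-$) and to $S_+$ (with respect to $\Pr_+$) to obtain half-lines $\tilde S_- = (-\infty, \tilde c_-]$ and $\tilde S_+ = [\tilde c_+, \infty)$ satisfying $\Pr_\pm(\tilde S_\pm) = \Pr_\pm(S_\pm)$ and with robust misclassification of the $\pm$ classes no larger. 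A Neyman--Pearson argument analogous to the two-class case would then show $\tilde S_-$ and $\tilde S_+$ are disjoint, so $\tilde S_0 \coloneqq \bbR \setminus (\tilde S_- \cup \tilde S_+)$ is a well-defined interval and the three sets form a linear interval classifier $\tilde y$. Applying \cref{thm:opt:threeclass:threshopt} to $\tilde y$ would finish the argument.

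The main obstacle is verifying that this rearrangement does not increase the robust misclassification probability of the zero class, i.e., $\Pr_0(\tilde S_0^c + B_\ep) \leq \Pr_0(S_0^c + B_\ep)$. Since $\tilde S_\pm$ were chosen via GCM with respect to $\Pr_\pm$ rather than $\Pr_0$, this control is not automatic: the two independent rearrangements are optimal for the $\pm$ classes individually, but may disturb the geometry relevant to the $0$-class boundary. I expect the resolution requires either a joint/coupled rearrangement rather than two independent GCM applications, or a conditioning approach that first fixes $\Pr_0(S_0)$ and then reduces the remaining problem to a two-class-style argument between $\pm 1$ on the complement of a candidate $\tilde S_0$. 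Executing either strategy carefully is likely the main source of technical difficulty, and may explain why \cref{conj:opt:threeclass:linear} is stated as a conjecture rather than proven as a theorem.
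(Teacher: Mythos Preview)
Your proposal does not prove the statement, but that is expected: the paper itself does not prove it either. \Cref{conj:opt:threeclass:linear} is stated as a conjecture precisely because the argument you outline breaks down at the point you flag. The paper makes this obstruction concrete: in \cref{ex:not:dominate:all:classes} it exhibits a (non--ignore/separate) classifier $\hat y$ for which \emph{no} linear interval classifier matches the robust misclassification probabilities on all three classes simultaneously. So the step ``verify $\Pr_0(\tilde S_0^c + B_\ep) \le \Pr_0(S_0^c + B_\ep)$'' is not just technically delicate---it is provably false for the particular $\tilde S_\pm$ produced by your two independent GCM applications. Your instinct that a joint/coupled rearrangement would be needed is exactly the paper's conclusion as well.

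What the paper does instead is prove two partial results. First, \cref{thm:opt:threeclass:ignore:separate} runs your argument successfully over the restricted family of ``ignore/separate'' classifiers (those that either drop the zero class entirely or keep $S_-$ and $S_+$ at distance $>2\ep$); in that family the zero-class step goes through because $(S_-(\hat y)+B_\ep)\cap(S_+(\hat y)+B_\ep)=\emptyset$, so inclusion--exclusion plus two more GCM applications with respect to $\Pr_0$ handle the middle class. Second, \cref{thm:opt:threeclass:suff:small} uses a completely different technique (the perturbed-means lower bound of \cite{dan2020sharp}) to establish optimality across \emph{all} classifiers, but only when $\pi_0$ is sufficiently small; the paper then shows via \cref{ex:noaltmean} that this approach also cannot cover the full parameter range. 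Your proposal is essentially the first of these two strategies, and you have correctly located where and why it stalls.
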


While this \lcnamecref{conj:opt:threeclass:linear} is very natural,
a proof for it is still unknown
and appears to be surprisingly nontrivial.
The three-class setting introduces new challenges beyond
the two-class setting.
Indeed we are unaware of any existing optimality results like this
for robust classification beyond two-class settings.
The following subsections make progress
towards rigorously establishing the conjecture,
providing a pair of first results on
the optimality of linear interval classifiers
in multi-class settings.
The two results are obtained using two different theoretical approaches
and
are somewhat complementary as a result:
the first applies for any model parameters but does not consider all classifiers,
while
the second considers all classifiers but only applies for certain model parameters.
For each result,
we explain not only the approach used
to prove it
but also why that approach falls short of
fully establishing the conjecture.
Taken together,
we hope these results and insights
will provide a foundation for future work
on proving the conjecture.

\subsubsection{Optimality across ``ignore/separate'' classifiers}
\label{sec:opt:threeclass:ignore:separate}

Recall that the optimal linear interval classifier
derived in \cref{thm:opt:threeclass:threshopt}
takes one of two forms.
It either:
1) \emph{ignores} the zero class
(in the ``rare zero class'' case),
or
2) \emph{separates} the positive and negative classes by at least $2\ep$
(in the ``frequent zero class'' case).
In other words,
the optimal linear interval classifier
belongs to
the following general family of classifiers
that we will refer to as ``ignore/separate'' classifiers.

\medskip
\begin{definition}[Ignore/separate classifiers]
  A classifier $\hty : \bbR^p \to \{-1,0,1\}$
  is an ignore/separate classifier
  if either
  \begin{equation*}
    \forall_{x} \;\; \hty(x) \neq 0
    \qquad\qquad
    \text{or}
    \qquad\qquad
    \inf\big\{\|x_+ - x_-\|_2 : \hty(x_+) = 1 \text{ and } \hty(x_-) = -1\big\}
    > 2\ep
    ,
  \end{equation*}
  i.e., $\hty$ either ignores the zero class
  or separates its positive and negative decision regions
  by at least $2\ep$.
\end{definition}

This is a very large family of classifiers
and importantly includes numerous nonlinear classifiers.
Moreover,
the classifiers it omits
are all unlikely to be optimal.
The omitted classifiers
all include the zero class
but still have positive and negative decision regions
within $2\ep$.
Recall that
for linear interval classifiers,
doing so is always suboptimal;
the robust risk can be improved by absorbing the zero decision region
into one of the others,
i.e., by ignoring the zero class.
While this does not necessarily imply that
the same holds
for nonlinear classifiers,
it does make the family of ignore/separate classifiers
a very natural one to consider.

The main result of this subsection is that
the classifier from \cref{thm:opt:threeclass:threshopt}
is not only optimal among linear interval classifiers
but also across the family of ignore/separate classifiers.
The result places no additional
conditions on the model parameters;
it holds for any class proportions ($\pi_-$, $\pi_0$, $\pi_+$),
any spacing of the means ($\lambda_+$, $\lambda_-$), and so on.

\medskip
\begin{theorem}[Linear interval classifiers are optimal across ignore/separate classifiers]
\label{thm:opt:threeclass:ignore:separate}
Under the assumptions of \cref{thm:opt:threeclass:threshopt},
the optimal linear interval classifier from \cref{thm:opt:threeclass:threshopt}
is also optimal across all ignore/separate classifiers.
\end{theorem}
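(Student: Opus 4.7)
The plan is to handle the two defining possibilities for an ignore/separate classifier separately, reducing each case to an application of the two-class result \cref{orc}. Let $\hty$ be any ignore/separate classifier with decision regions $S_+, S_0, S_-$; the goal is to show $\robrisk(\hty,\ep,\|\cdot\|_2) \geq \robrisk(\intclass^*,\ep,\|\cdot\|_2)$, where $\intclass^*$ is the classifier from \cref{thm:opt:threeclass:threshopt}.

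First, I would handle the \emph{ignore} case, where $S_0 = \emptyset$. Every $y=0$ point is robustly misclassified regardless of perturbation, contributing a constant $\pi_0$ to the risk. The remainder is $\pi_+ \Pr_+(S_+^c + B_\ep) + \pi_- \Pr_-(S_-^c + B_\ep)$, which is $\pi_+ + \pi_-$ times the two-class robust risk of $\hty$ viewed as a classifier between classes $+1$ and $-1$, whose class-conditional distributions are symmetric (after a shift by $(\lambda_+ + \lambda_-)\mu/2$) around the midpoint with half-distance $|\lambda_+ - \lambda_-|/2$ and effective proportions $\pi_\pm/(\pi_+ + \pi_-)$. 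Applying \cref{orc} to this two-class problem and adding back $\pi_0$ yields the lower bound $\pi_0 + (\pi_+ + \pi_-) \robrisk^*(|\lambda_+ - \lambda_-|/2, \pi_+/(\pi_+ + \pi_-); \ep)$, which matches the Case 1 risk in \cref{thm:opt:threeclass:threshopt}.

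Next, I would handle the \emph{separate} case, where $d(S_+, S_-) > 2\ep$. The key observation is that separation forces $(S_+ + B_\ep) \cap (S_- + B_\ep) = \emptyset$, so
\begin{equation*}
  \Pr_0\{(S_+ \cup S_-) + B_\ep\} = \Pr_0(S_+ + B_\ep) + \Pr_0(S_- + B_\ep).
\end{equation*}
This cleanly decomposes $\robrisk(\hty) = \robrisk^+(\hty) + \robrisk^-(\hty)$, where $\robrisk^\pm(\hty) \coloneqq \pi_\pm \Pr_\pm(S_\pm^c + B_\ep) + \pi_0 \Pr_0(S_\pm + B_\ep)$. Each summand equals $(\pi_\pm + \pi_0)$ times a two-class robust risk (for the derived classifier with decision regions $S_\pm$ and $S_\pm^c$) between class $\pm 1$ and the zero class, with effective proportions $\pi_\pm/(\pi_\pm + \pi_0)$ and class means $\lambda_\pm \mu$ and $0$ (half-distance $|\lambda_\pm|/2$). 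Applying \cref{orc} to each subproblem yields the lower bound $(\pi_+ + \pi_0) \robrisk^*(|\lambda_+|/2, \pi_+/(\pi_+ + \pi_0); \ep) + (\pi_- + \pi_0) \robrisk^*(|\lambda_-|/2, \pi_-/(\pi_- + \pi_0); \ep)$, matching the Case 2 risk in \cref{thm:opt:threeclass:threshopt}.

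Finally, I would combine the two cases to conclude: every ignore/separate classifier has robust risk at least the smaller of the Case 1 and Case 2 risks in \cref{thm:opt:threeclass:threshopt}, and the proof of \cref{thm:opt:threeclass:threshopt} already establishes that $\intclass^*$ is selected (based on the threshold $\alpha^*$) to achieve precisely this minimum. The main step requiring care in this plan is the decomposition in the separate case: it is essential that the $2\ep$ separation between $S_+$ and $S_-$ is exactly what is needed to make $(S_+ + B_\ep)$ and $(S_- + B_\ep)$ disjoint, so that the zero-class contribution splits additively and the three-class robust risk becomes a clean sum of two-class robust risks to which \cref{orc} applies. The remaining work is careful bookkeeping of effective means and proportions when translating the symmetric two-class model in \cref{orc} to each asymmetric subproblem.
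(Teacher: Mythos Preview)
Your decomposition in the separate case is correct as a lower bound, but the combining step has a genuine gap. Write $V_1$ for the Case~1 risk and $V_2$ for the Case~2 risk in \cref{thm:opt:threeclass:threshopt}. Your argument establishes that an ignoring classifier has risk $\geq V_1$ and a separating classifier has risk $\geq V_2$, and you then assert that $\robrisk(\intclass^*) = \min(V_1,V_2)$. This last equality is not what \cref{thm:opt:threeclass:threshopt} proves, and it is false in part of the parameter space. The two applications of \cref{orc} in your separate case return optimal thresholds $\tlc_+$, $\tlc_-$ for the two subproblems \emph{independently}; when $\pi_0$ is small enough that $\alpha < \bralpha$, these satisfy $\tlc_+ < \tlc_- + 2\ep$ (possibly even $\tlc_+ < \tlc_-$), so no single classifier realizes both optima simultaneously. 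In that regime your bound $V_2$ is strictly below $\robrisk(\intclass^*) = V_1$: for instance, as $\pi_0 \to 0$ with $\pi_+,\pi_-$ bounded away from zero, $V_2 \to 0$ while $V_1$ stays bounded away from zero. So for a separating $\hty$ your inequality $\robrisk(\hty) \geq V_2$ does not imply $\robrisk(\hty) \geq \robrisk(\intclass^*)$, and the proof does not close.

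The paper avoids this by a different mechanism. Rather than bounding the separate case by the sum of two independently optimized two-class risks, it proves a domination lemma (\cref{thm:opt:threeclass:dominate:ignore:separate}): given any ignore/separate $\hty$, it constructs a \emph{single} linear interval classifier $\tly$ whose thresholds $c_\pm$ are tied to $\hty$'s own misclassification probabilities (via Gaussian quantiles), uses Neyman--Pearson to verify $c_- \leq c_+$ so that $\tly$ is valid, and then uses Gaussian isoperimetry plus the $2\ep$-separation to show $\tly$ dominates $\hty$ on all three classes. The conclusion $\robrisk(\hty) \geq \robrisk(\tly) \geq \robrisk(\intclass^*)$ then follows directly from \cref{thm:opt:threeclass:threshopt}. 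The essential idea you are missing is that the two half-space replacements must be coupled to the same classifier, not optimized separately.
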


We prove \cref{thm:opt:threeclass:ignore:separate}
using the same overall strategy as the proof of \cref{orc}.
Namely, \cref{thm:opt:threeclass:ignore:separate}
follows directly from
the optimal linear interval classifier
(\cref{thm:opt:threeclass:threshopt})
combined with the following three-class variant
of \cref{thm:opt:twoclass:admissible}
that shows that linear interval classifiers dominate
all ignore/separate classifiers.

\medskip
\begin{lemma}[Linear interval classifiers dominate ignore/separate classifiers]
\label{thm:opt:threeclass:dominate:ignore:separate}
Under the assumptions of \cref{thm:opt:threeclass:threshopt},
for any ignore/separate classifier $\hty : \bbR^p \to \{-1,0,1\}$,
there exists a linear interval classifier $\tly$
that dominates its robust misclassification on all three classes simultaneously, i.e.,
\begin{align*}
  \Pr_{x|y}\{\exists_{\delta : \|\delta\|_2 \leq \ep} \;\; \tly(x+\delta) \neq y\}
  \leq \Pr_{x|y}\{\exists_{\delta : \|\delta\|_2 \leq \ep} \;\; \hty(x+\delta) \neq y\}
  , \quad
  y \in \{-1,0,1\}
  .
\end{align*}
Thus, $\robrisk(\tly,\ep) \leq \robrisk(\hty,\ep)$,
i.e., the linear interval classifier $\tly$ also dominates the ignore/separate classifier $\hty$
with respect to the robust risk.
\end{lemma}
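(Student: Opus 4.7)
The plan is to adapt the proof strategy of \cref{thm:opt:twoclass:admissible}, which combined Gaussian concentration of measure (GCM) with the Neyman--Pearson lemma to construct a dominating linear classifier, and to add further steps to handle class $0$. Following the dichotomy in the definition of an ignore/separate classifier, I would split the proof into two corresponding cases.

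If $\hty$ ignores the zero class, I would apply \cref{thm:opt:twoclass:admissible} (after centering the means $\lambda_\pm \mu$ to $\pm \frac{\lambda_+-\lambda_-}{2}\mu$) to the two-class sub-problem between classes $\pm 1$. The resulting linear classifier $\tly(x) = \sign(x^\top \mu - c)$ is a linear interval classifier with $c_+ = c_- = c$, and it dominates $\hty$ on classes $\pm 1$ by that \lcnamecref{thm:opt:twoclass:admissible}. Since neither $\hty$ nor $\tly$ ever outputs $0$, the class-$0$ robust misclassification equals $1$ for both, and dominance on all three classes follows immediately.

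If instead $\hty$ separates its $\pm 1$ decision regions by more than $2\ep$, I would construct the linear interval classifier $\tly^*(x) = \intclass(x; \mu, \tilde c_+, \tilde c_-)$ with thresholds $\tilde c_+ \coloneqq \lambda_+ - \Phi^{-1}(\Pr_+(S_{+1}))$ and $\tilde c_- \coloneqq \lambda_- + \Phi^{-1}(\Pr_-(S_{-1}))$, so that the halfspaces $\tilde S_{+1}$ and $\tilde S_{-1}$ match $S_{+1}$ and $S_{-1}$ in $\Pr_+$ and $\Pr_-$ measure, respectively. The class-$\pm 1$ dominance then follows directly from GCM, since matching-measure halfspaces saturate the Gaussian isoperimetric lower bound on $\ep$-expansion. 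As in \cref{thm:opt:twoclass:admissible}, we may take the halfspace normals to lie along $\mu$, because the Gaussian expansion of a halfspace depends only on its signed distance to the mean, so the problem reduces to a one-dimensional computation.

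The main obstacle will be controlling the class-$0$ robust misclassification, since $\tilde c_\pm$ are defined via $\Pr_\pm$ rather than $\Pr_0$. This has two sub-steps. First, one must show $\tilde c_+ - \tilde c_- > 2\ep$, so that $\tly^*$ is a valid linear interval classifier and its class-$0$ robust misclassification decomposes additively as $\Phi(\tilde c_- + \ep) + \brPhi(\tilde c_+ - \ep)$; this follows by applying Gaussian isoperimetry at radius $2\ep$ under $\Pr_-$ to the strict separation $S_{+1} \cap (S_{-1} + B_{2\ep}) = \emptyset$ to obtain $\Phi^{-1}(\Pr_-(S_{+1})) + \Phi^{-1}(\Pr_-(S_{-1})) < -2\ep$, and then converting the $\Pr_-(S_{+1})$ term via the Neyman--Pearson lower bound on $\Pr_-(\cdot)$ subject to a fixed $\Pr_+(\cdot)$ constraint. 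Second, one must bound each class-$0$ piece by its counterpart in the original: for $\tilde S_{+1}$, this follows from GCM under $\Pr_0$ combined with a Neyman--Pearson lower bound of the form $\Pr_0(S_{+1}) \geq \Phi(\Phi^{-1}(\Pr_+(S_{+1})) - \lambda_+)$ (obtained from the fact that a one-sided halfspace minimizes $\Pr_0(\cdot)$ subject to a fixed $\Pr_+(\cdot)$ constraint), and analogously for $\tilde S_{-1}$. Summing these two bounds and using the separation of $S_{+1}, S_{-1}$ to write $\Pr_0(S_0^c + B_\ep) = \Pr_0(S_{+1} + B_\ep) + \Pr_0(S_{-1} + B_\ep)$ then yields the class-$0$ dominance and completes the proof.
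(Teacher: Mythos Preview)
Your proposal is correct and follows essentially the same route as the paper: construct the thresholds so that the halfspaces match $S_{\pm 1}$ in $\Pr_\pm$-measure, get class-$\pm 1$ dominance from Gaussian isoperimetry, and get class-$0$ dominance by combining Neyman--Pearson (to compare $\Pr_0$-measures) with GCM and the $2\ep$-separation on the $\hty$ side. The one noteworthy difference is that you spend extra effort proving the stronger fact $\tilde c_+ - \tilde c_- > 2\ep$ (via GCM at radius $2\ep$ plus Neyman--Pearson) so that the $\tly$ class-$0$ probability decomposes \emph{exactly}; the paper instead only proves $\tilde c_- \leq \tilde c_+$ (via Neyman--Pearson alone, using the zero class as the alternative) and then handles the $\tly$ side with the union bound $\Pr_0\{S_0^c(\tly)+B_\ep\}\le \Pr_0\{S_-(\tly)+B_\ep\}+\Pr_0\{S_+(\tly)+B_\ep\}$, which is enough since the $\hty$ side is an equality by separation. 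Your argument works, but the union-bound shortcut is a bit more economical.
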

}

The proof \revone{of this \lcnamecref{thm:opt:threeclass:dominate:ignore:separate}}
follows a similar strategy as the proof of \cref{thm:opt:twoclass:admissible}.

\begin{proof}[Proof of \cref{thm:opt:threeclass:dominate:ignore:separate}]
Let $\hty : \bbR^p \to \{-1,0,1\}$ be
\revone{an ignore/separate classifier}
and define
\begin{align*}
  c_- &\coloneqq
    \Phi^{-1}\Big( 1 - \Pr_-\{\hty(x) \neq -1\} \Big) + \revone{\lambda_-}
  , &
  c_+ &\coloneqq
    \Phi^{-1}\Big(     \Pr_+\{\hty(x) \neq  1\} \Big) + \revone{\lambda_+}
  ,
\end{align*}
where $\Phi$ is the cumulative distribution functions of the standard normal distribution (with $\Phi^{-1}(0) = -\infty$ and $\Phi^{-1}(1) = \infty$),
and where we denote the conditional probabilities
$\Pr_{x | y=-1}$, $\Pr_{x | y= 0}$ and $\Pr_{x | y= 1}$
by $\Pr_-$, $\Pr_0$ and $\Pr_+$, respectively.
As one can readily verify, $c_-$ and $c_+$ match the misclassification probabilities of $\hty$
on the negative and positive classes:
\begin{align} \label{thm:opt:threeclass:dominate:misclass}
  \Pr_-\{x^\top\revone{\mu} > c_-\} &= \Pr_-\{\hty(x) \neq -1\}
  , &
  \Pr_+\{x^\top\revone{\mu} < c_+\} &= \Pr_+\{\hty(x) \neq  1\}
  .
\end{align}
We first show that $c_- \leq c_+$
so that they define a valid linear interval classifier.
Let
\begin{align*}
  S_-^c &\coloneqq \{x : x^\top\revone{\mu} > c_-\}
  , &
  \htS_-^c &\coloneqq \{x : \hty(x) \neq -1\}
  ,
\end{align*}
where we view $S_-^c$ as the rejection region
of a hypothesis test of the negative class against the alternative of the zero class;
likewise for $\htS_-^c$.
Under this interpretation,
it follows from \cref{thm:opt:threeclass:dominate:misclass}
that the two tests have matching significance levels
$\Pr_-(S_-^c) = \Pr_-(\htS_-^c)$.
However, $S_-^c$ is the rejection region of the likelihood ratio test here.
Thus, it follows from the Neyman-Pearson lemma
that the test corresponding to $\htS_-^c$ must have power
less than or equal to that of the test corresponding to $S_-^c$.
Namely,
\begin{equation*}
  \Pr_0\{x^\top\revone{\mu} > c_-\}
  = \Pr_0(S_-^c)
  \geq
  \Pr_0(\htS_-^c)
  = \Pr_0\{\hty(x) \neq -1\}
  .
\end{equation*}
Repeating the analogous argument for $c_+$ yields
\begin{equation*}
  \Pr_0\{x^\top\revone{\mu} < c_+\} \geq \Pr_0\{\hty(x) \neq  1\}
  .
\end{equation*}
As a result,
\begin{equation*}
  \Pr_0\{x^\top\revone{\mu} > c_-\} + \Pr_0\{x^\top\revone{\mu} < c_+\}
  \geq \Pr_0\{\hty(x) \neq -1\} + \Pr_0\{\hty(x) \neq  1\}
  \geq 1
  ,
\end{equation*}
from which we conclude that $c_- \leq c_+$.

Since $c_- \leq c_+$,
the following linear \revone{interval} classifier
is well defined:
\begin{equation*}
  \tly(x)
  =
  \begin{cases}
    -1 & \text{if } x^\top\revone{\mu} \leq c_-, \\
     0 & \text{if } c_- < x^\top\revone{\mu} < c_+, \\
    +1 & \text{if } x^\top\revone{\mu} \geq c_+,
  \end{cases}
\end{equation*}
\revone{and} it remains to show that
$\Pr_{x|y}\{S_y^c(\tly) + B_\ep\} \leq \Pr_{x|y}\{S_y^c(\hty) + B_\ep\}$
for $y \in \{-1,0,1\}$.

Applying the equality case of the Gaussian concentration of measure, similar to equations \cref{gcm1,gcm2} in the proof of \cref{thm:opt:twoclass:admissible} in the two-class case, to the construction of $c_-$ and $c_+$
immediately yields
\begin{align*}
  \Pr_-\{S_-^c(\tly) + B_\ep\}
    &\leq \Pr_-\{S_-^c(\hty) + B_\ep\}
  , &
  \Pr_+\{S_+^c(\tly) + B_\ep\}
    &\leq \Pr_+\{S_+^c(\hty) + B_\ep\}
  ,
\end{align*}
so the result is shown for the negative and positive classes.

\revone{
For the zero class,
suppose first that $\hty : \bbR^p \to \{-1,0,1\}$
ignores the zero class,
i.e., $\forall_{x} \;\; \hty(x) \neq 0$.
Then, the decision region $S_0(\hty)$ is empty
so we immediately have
\begin{equation*}
  \Pr_0\{S_0^c(\tly) + B_\ep\}
  \leq 1
  = \Pr_0\{S_0^c(\hty) + B_\ep\}
  .
\end{equation*}
So it only remains to consider the case
where $\hty$ separates its positive and negative decision regions
by at least $2\ep$.
For this case,}
using $\Pr\{C \cup D\} \le \Pr\{C\} + \Pr\{D\}$ for any measurable sets $C,D$,
\revone{yields}
\begin{gather*}
  \Pr_0\{S_0^c(\tly) + B_\ep\}
  = \Pr_0[ \{S_-(\tly) + B_\ep\} \cup \{S_+(\tly) + B_\ep\} ]
  \le
    \Pr_0\{S_-(\tly) + B_\ep\}
  + \Pr_0\{S_+(\tly) + B_\ep\}
  ,
\end{gather*}
and another application of the equality case in \revone{the} Gaussian concentration of measure yields
\begin{align*}
  \Pr_0\{S_-(\tly) + B_\ep\}
    &\leq \Pr_0\{S_-(\hty) + B_\ep\}
  , &
  \Pr_0\{S_+(\tly) + B_\ep\}
    &\leq \Pr_0\{S_+(\hty) + B_\ep\}
  .
\end{align*}
Putting these together yields
\begin{align*}
  \Pr_0\{S_0^c(\tly) + B_\ep\}
  &\leq
    \Pr_0\{S_-(\tly) + B_\ep\}
  + \Pr_0\{S_+(\tly) + B_\ep\}
  \leq
    \Pr_0\{S_-(\hty) + B_\ep\}
  + \Pr_0\{S_+(\hty) + B_\ep\}
  \\&
  = \Pr_0\{S_-(\hty) + B_\ep\}
  + \Pr_0\{S_+(\hty) + B_\ep\}
  - \Pr_0\{(S_-(\hty) + B_\ep) \cap (S_+(\hty) + B_\ep)\}
  \\&
  = \Pr_0\{(S_-(\hty) + B_\ep) \cup (S_+(\hty) + B_\ep)\}
  = \Pr_0\{S_0^c(\hty) + B_\ep\}
  ,
\end{align*}
where the first equality holds because \revone{the} $2\ep$-separation of
\revone{the positive and negative decision regions of} $\hty$
implies that $(S_-(\hty) + B_\ep) \cap (S_+(\hty) + B_\ep) = \emptyset$,
the second equality is the inclusion-exclusion principle,
and
\begin{equation*}
  (S_-(\hty) + B_\ep) \cup (S_+(\hty) + B_\ep)
  = (S_-(\hty) \cup S_+(\hty)) + B_\ep
  = S_0^c(\hty) + B_\ep
  .
\end{equation*}
yields the third equality.
\end{proof}

\revone{
Rigorously extending \cref{thm:opt:threeclass:ignore:separate}
beyond ignore/separate classifiers
turns out to be quite nontrivial.
The proof technique used here encounters a major obstacle,
as we will explain in the remainder of this subsection.
The main issue is that
the proof of the core lemma (\Cref{thm:opt:threeclass:dominate:ignore:separate})
crucially uses the fact that
linear interval classifiers can match the robust misclassification
of any ignore/separate classifier
with respect to \emph{all three classes simultaneously}.
It is tempting to expect this fact to extend
beyond ignore/separate classifiers to all classifiers.
However, this is surprisingly not the case.
}%
\revone{Namely}, there \revone{can exist a (nonlinear) classifier}
for which \emph{all} linear \revone{interval} classifiers
have worse robust misclassification on at least one class,
as shown by the following counter-example.

\medskip
\begin{example}[A classifier for which \revone{no linear interval classifier has matching robust classification performance on all three classes simultaneously}]
\label{ex:not:dominate:all:classes}
Let $p=1$, $\mu = 1$, \revone{$\lambda_\pm = \pm 1$,} $\ep = 0.3$, \revone{$\pi_\pm = 1/4$ and $\pi_0 = 1/2$,}
and consider the classifier
\begin{equation*}
  \hty(x)
  =
  \begin{cases}
    -1, & \text{if } x < 1
    , \\
    1,  & \text{if } 1 \leq x < 2.15
    , \\
    0,  & \text{if } 2.15 \leq x < 4
    , \\
    1,  & \text{if } x \geq 4
    .
  \end{cases}
\end{equation*}
For this classifier, the robust misclassification probabilities are:
\begin{align*}
  \clM_- &=
  \Pr_{x|y=-1}\{\exists_{\delta : \|\delta\|_2 \leq \ep} \;\; \hty(x+\delta) \neq -1\}
  = \Pr_{x|y=-1}(x \geq 1-0.3)
  \approx 0.0446
  , \\
  \clM_0 &=
  \Pr_{x|y=0}\{\exists_{\delta : \|\delta\|_2 \leq \ep} \;\; \hty(x+\delta) \neq 0\}
  = \Pr_{x|y=0}(x < 2.15+0.3) + \Pr_{x|y=0}(x \geq 4-0.3)
  \approx 0.9930
  , \\
  \clM_+ &=
  \Pr_{x|y=1}\{\exists_{\delta : \|\delta\|_2 \leq \ep} \;\; \hty(x+\delta) \neq 1\}
  = \Pr_{x|y=1}(x < 1+0.3) + \Pr_{x|y=1}(2.15-0.3 \leq x < 4+0.3)
  \approx 0.8151
  ,
\end{align*}
where we use $\approx$ to indicate that the calculated values have been rounded to the digits shown.

Now for a linear \revone{interval} classifier
\begin{equation*}
  \tly(x)
  =
  \begin{cases}
    -1, & \text{if } x \leq c_-
    , \\
    0,  & \text{if } c_- < x < c_+
    , \\
    1,  & \text{if } x \geq c_+
    ,
  \end{cases}
\end{equation*}
to match the robust misclassification probabilities $\clM_-$ and $\clM_+$,
i.e., to have robust misclassification no worse on the negative and positive classes,
the thresholds must satisfy
\begin{align*}
  c_- &\geq \tlc_- \coloneqq \brPhi^{-1}(\clM_-) + \revone{\lambda_-} + \ep
  \approx 1.000
  , &
  c_+ &\leq \tlc_+ \coloneqq \Phi^{-1}(\clM_+) + \revone{\lambda_+} - \ep
  \approx 1.597
  ,
\end{align*}
where $\Phi$ is the cumulative distribution function of the standard normal distribution,
$\brPhi \coloneqq 1 - \Phi$,
and $\Phi^{-1}$ and $\brPhi^{-1}$ are their inverses.
Otherwise, if $c_- < \tlc_-$ then
\begin{equation*}
  \Pr_{x|y=-1}\{\exists_{\delta : \|\delta\|_2 \leq \ep} \;\; \tly(x+\delta) \neq -1\}
  = \Pr_{x|y=-1}(x > c_- - \ep)
  > \Pr_{x|y=-1}(x > \tlc_- - \ep)
  = \clM_-
  ,
\end{equation*}
and likewise if $c_+ > \tlc_+$ then
\begin{equation*}
  \Pr_{x|y=1}\{\exists_{\delta : \|\delta\|_2 \leq \ep} \;\; \tly(x+\delta) \neq 1\}
  = \Pr_{x|y=1}(x < c_+ + \ep)
  > \Pr_{x|y=-1}(x < \tlc_+ + \ep)
  = \clM_+
  .
\end{equation*}
However, if $c_- \geq \tlc_-$ and $c_+ \leq \tlc_+$ then
\begin{align*}
  &\Pr_{x|y=0}\{\exists_{\delta : \|\delta\|_2 \leq \ep} \;\; \tly(x+\delta) \neq 0\}
  \\
  &\quad
  = \Pr_{x|y=0}(x \leq c_- + \ep \text{ or } x \geq c_+ - \ep)
  \geq \Pr_{x|y=0}(x \leq \tlc_- + \ep \text{ or } x \geq \tlc_+ - \ep)
  = 1
  > \clM_0
  ,
\end{align*}
since $\tlc_- + \ep \geq \tlc_+ - \ep$ here.
Hence, there is no choice of $c_-$ and $c_+$,
i.e., there is no linear \revone{interval} classifier $\tly$,
that matches
the robust misclassification probabilities of $\hty$
for \emph{all} classes simultaneously.
\end{example}

\revone{
As a result,
the strategy used to prove \cref{thm:opt:threeclass:dominate:ignore:separate}
(and consequently \cref{thm:opt:threeclass:ignore:separate})
cannot be directly used to go beyond ignore/separate classifiers.
Note, however, that this does not mean that linear interval classifiers
are not in fact optimal;
it simply shows that the approach used before cannot be used here.
Indeed, the nonlinear classifier $\hty$
considered in \cref{ex:not:dominate:all:classes}
has robust risk
\begin{equation*}
  \robrisk(\hty, \ep)
  = \pi_- \clM_- + \pi_0 \clM_0 + \pi_+ \clM_+
  \approx 0.7114
  ,
\end{equation*}
while the corresponding optimal linear interval classifier $\intclass^*$
from \cref{thm:opt:threeclass:threshopt}
has worse robust misclassification probabilities
on the positive and negative classes
but still a better robust risk of $\robrisk(\intclass^*, \ep) \approx 0.4953$.
}

\newcommand{\robriskmean}[1]{R_{\mathrm{rob},#1}}
\newcommand{\stdriskmean}[1]{R_{\mathrm{std},#1}}
\newcommand{\bayesclassmean}[1]{\hty^*_{\mathrm{Bay},#1}}

\revone{
\subsubsection{Optimality in the ``sufficiently rare zero class'' case}
\label{sec:opt:threeclass:suff:small}

This subsection derives an optimality result
that does not restrict the family of classifiers,
but instead considers a restricted subset of model parameters.
In particular,
we show that
the classifier from \cref{thm:opt:threeclass:threshopt}
is optimal across all classifiers
if the zero class is sufficiently rare.

\medskip
\begin{theorem}[Linear interval classifiers are optimal if the zero class is sufficiently rare]
\label{thm:opt:threeclass:suff:small}
Under the assumptions of \cref{thm:opt:threeclass:threshopt},
suppose further that
$\pi_0 \leq \htalpha \sqrt{\pi_- \pi_+}$,
where
\begin{equation*}
  \htalpha
  \coloneqq
  \exp\bigg\{
    - \frac{(|\lambda_+|-\ep)(|\lambda_-|-\ep)}{2}
    - \frac{\lambda_+ + \lambda_-}{|\lambda_+ - \lambda_-| - 2\ep} \ln\gamma
  \bigg\}
  .
\end{equation*}
Then the optimal linear interval classifier from \cref{thm:opt:threeclass:threshopt}
(using the ``rare zero class'' case)
is also optimal across all classifiers.
\end{theorem}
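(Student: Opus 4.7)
The plan is to show that in the sufficiently rare zero class regime, any three-class classifier $\hty$ is dominated (in robust risk) by a classifier that ignores the zero class; then \cref{orc}, applied to the reduced two-class problem between the positive and negative classes, identifies $\intclass^*$ from case~1 of \cref{thm:opt:threeclass:threshopt} as the globally optimal such classifier. First I would reduce to the one-dimensional case by projecting onto the axis $\mu$, as in the proofs of \cref{orc,thm:opt:threeclass:threshopt}. Given a 1D classifier $\hty$ with zero region $S_0$, I would then construct a dominating two-class classifier $\tly$ via \emph{partial absorption}: each point $v \in S_0$ is reassigned to the label $\sigma(v) \in \{-1,+1\}$ that locally contributes more to the weighted Gaussian density, namely $\sigma(v) = +1$ if $\pi_+ \phi(v - \mu_+) \geq \pi_- \phi(v - \mu_-)$ and $\sigma(v) = -1$ otherwise, where $\phi$ is the standard normal density and $\mu_\pm = \lambda_\pm \mp \ep$ are ``effective means'' obtained by contracting each Gaussian toward the interior of $S_0$ by $\ep$.

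A change of variable $u = v \pm \ep$ then shows, for a canonical ``sandwiched'' interval of $S_0$ (with $S_-$ on its left and $S_+$ on its right), that
\begin{equation*}
  \robrisk(\tly,\ep) - \robrisk(\hty,\ep)
  = \pi_0 \bigl(1 - \clM_0(\hty)\bigr)
  - \int_{S_0} \max\bigl\{\pi_+ \phi(v - \mu_+),\, \pi_- \phi(v - \mu_-)\bigr\}\, dv .
\end{equation*}
The key geometric inequality is then the pointwise bound $\max\{\pi_+ \phi(v - \mu_+),\, \pi_- \phi(v - \mu_-)\} \geq \pi_0 \phi(v)$ for all $v \in \R$. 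By rearranging each branch, $\pi_+ \phi(v - \mu_+) \geq \pi_0 \phi(v)$ holds on a half-line $v \geq v_+$ and $\pi_- \phi(v - \mu_-) \geq \pi_0 \phi(v)$ on a half-line $v \leq v_-$ for explicit thresholds $v_\pm$; the union covers $\R$ iff $v_+ \leq v_-$, which after substituting $\alpha = \pi_0/\sqrt{\pi_-\pi_+}$, $\gamma = \sqrt{\pi_+/\pi_-}$, $|\mu_+| + |\mu_-| = |\lambda_+ - \lambda_-| - 2\ep$, and $|\mu_+| - |\mu_-| = \lambda_+ + \lambda_-$ reduces \emph{exactly} to $\alpha \leq \htalpha$. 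Under this condition the integral above dominates $\pi_0 \Pr_0(S_0) \geq \pi_0 \Pr_0(S_0 \ominus B_\ep) = \pi_0\bigl(1 - \clM_0(\hty)\bigr)$, so $\robrisk(\tly,\ep) \leq \robrisk(\hty,\ep)$. Applying \cref{orc} to the two-class classifier $\tly$ then yields $\robrisk(\intclass^*,\ep) \leq \robrisk(\tly,\ep) \leq \robrisk(\hty,\ep)$, as desired.

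The hard part is extending the identity for $\robrisk(\tly,\ep) - \robrisk(\hty,\ep)$ to arbitrary zero regions $S_0$, which in 1D are unions of intervals with four possible local patterns of neighboring labels (both $S_-$, both $S_+$, or mixed in either order). In each case the effective means entering the pointwise comparison differ by the sign of $\ep$, so the absorption rule $\sigma$ must be adapted per interval; intervals whose $\ep$-expansion would merge with neighboring same-label regions also introduce bookkeeping via inclusion--exclusion. I expect a unified treatment by applying Gaussian concentration of measure locally to each interval, combined with the same effective-means pointwise comparison, to handle all four cases simultaneously and preserve the clean form of the inequality $\alpha \leq \htalpha$. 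Aside from this case analysis, the remaining steps are essentially direct calculation.
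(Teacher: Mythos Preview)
Your proposal has a genuine gap at the very first step. The reduction to one dimension ``as in the proofs of \cref{orc,thm:opt:threeclass:threshopt}'' is not available here: in \cref{orc} the reduction was licensed only \emph{after} \cref{thm:opt:twoclass:admissible} had established that half-space classifiers dominate arbitrary ones (via Gaussian isoperimetry and Neyman--Pearson), and \cref{thm:opt:threeclass:threshopt} worked only within the linear interval class from the start. For an arbitrary classifier $\hty : \bbR^p \to \{-1,0,1\}$ there is no established way to replace it by one depending only on $x^\top\mu$ without increasing the robust risk; that would essentially be \cref{conj:opt:threeclass:linear}, which is left open. Even granting the one-dimensional reduction, the general-$S_0$ case analysis you defer is where the real work lies, and your own observation that ``the effective means differ by the sign of $\ep$'' per boundary pattern is a symptom of the difficulty: the change in $\Pr_\pm(S_\pm^c + B_\ep)$ under absorption depends on how the $\ep$-erosion of $S_\pm$ grows, which is governed by local boundary geometry and does not collapse to a single integral over $S_0$.

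The paper takes a different and much shorter, dimension-free route. For any classifier $\hty$ and any fixed per-class perturbations $\|\delta_y\|_2 \le \ep$, the robust risk is lower bounded by the \emph{standard} risk computed with each class mean shifted to $\lambda_y\mu - \delta_y$. Choosing $\delta_{\pm 1} = \mp\ep\mu$ and $\delta_0 = 0$ places the shifted means exactly at your effective means $\mu_\pm = \lambda_\pm \mp \ep$ (along $\mu$) and $0$. Your pointwise inequality $\max\{\pi_+\phi(v-\mu_+),\pi_-\phi(v-\mu_-)\} \ge \pi_0\phi(v)$ is then precisely the statement that the Bayes classifier for this shifted three-class problem never selects class $0$; under that condition the shifted Bayes classifier coincides with $\intclass^*$, and a direct check shows that its standard risk for the shifted problem equals its robust risk for the original one. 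Chaining the two lower bounds with this equality gives $\robrisk(\hty,\ep) \ge \robrisk(\intclass^*,\ep)$ for every $\hty$, with no projection step and no case analysis on $S_0$. So you isolated the right inequality, but deployed as a Bayes-optimality condition for perturbed means it does all the work in one stroke.
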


This result does not have a restriction on the classifier family
like \cref{thm:opt:threeclass:ignore:separate};
we find the optimal classifier across all classifiers.
However,
the additional condition that $\pi_0 \leq \htalpha \sqrt{\pi_- \pi_+}$
essentially limits this result to a subset of
the ``rare zero class'' case in \cref{thm:opt:threeclass:threshopt};
it does not apply to cases where we expect the optimal robust classifier
to assign points to all three classes.
As we explain at the end of this subsection
(after proving \cref{thm:opt:threeclass:suff:small}),
extending this result to remove the condition
turns out to be quite nontrivial.

We obtain \cref{thm:opt:threeclass:suff:small}
through a different approach
than \cref{orc,thm:opt:threeclass:ignore:separate}.
In particular,
we prove \cref{thm:opt:threeclass:suff:small}
by using the same overall strategy
as was used in \cite{dan2020sharp}
for two-class settings.
In \cite{dan2020sharp},
an optimal robust classifier
was derived by identifying
a careful $\ep$-perturbation of the means
for which the corresponding Bayes optimal classifier
has standard risk
matching the robust risk with respect to the unperturbed means.
Optimality then followed
by exploiting the insight
that the $\ep$-robust risk
of any classifier
is lower bounded by its standard risk
with respect to any $\ep$-perturbed means,
which is in turn lower bounded
by the standard risk of the corresponding Bayes optimal classifier.
The proof of \cref{thm:opt:threeclass:suff:small}
follows the same approach.

\begin{proof}[Proof of \cref{thm:opt:threeclass:suff:small}]
  Consider the following $\ep$-perturbed spacings for the means:
  \begin{align*}
    \lambda_-' &\coloneqq \lambda_- + \ep
    = -(|\lambda_-| - \ep)
    < 0
    , &
    \lambda_0' &\coloneqq \lambda_0 = 0
    , &
    \lambda_+' &\coloneqq \lambda_+ - \ep
    = |\lambda_+| - \ep
    > 0
    .
  \end{align*}
  We will first show that
  the Bayes optimal classifier
  $\bayesclassmean{\lambda'}(x)$
  for the resulting perturbed means
  has standard risk
  $\stdriskmean{\lambda'}(\bayesclassmean{\lambda'})$
  with respect to the perturbed means
  matching its robust risk
  $\robrisk(\bayesclassmean{\lambda'}, \ep)$
  with respect to the unperturbed means.
  Note first that
  $\bayesclassmean{\lambda'}(x)$ ignores the zero class
  when $\pi_0 \leq \tlalpha \sqrt{\pi_- \pi_+}$.
  Indeed,
  if $\pi_0 \leq \tlalpha \sqrt{\pi_- \pi_+}$,
  then
  \begin{equation*}
    \ln\bigg( \frac{\pi_0}{\sqrt{\pi_- \pi_+}} \bigg)
    \leq
    \ln \tlalpha
    =
    - \frac{(|\lambda_+|-\ep)(|\lambda_-|-\ep)}{2}
    - \frac{\lambda_+ + \lambda_-}{|\lambda_+ - \lambda_-| - 2\ep} \ln\gamma
    =
    \frac{\lambda_+' \lambda_-'}{2}
    - \frac{\lambda_+' + \lambda_-'}{\lambda_+' - \lambda_-'} \ln\gamma
    .
  \end{equation*}
  Now,
  adding $\ln\gamma$ then simplifying,
  and likewise subtracting $\ln\gamma$ then simplifying,
  yields the following two inequalities
  \begin{align*}
    \ln\bigg( \frac{\pi_0}{\pi_+} \bigg)
    &
    =
    \ln\bigg( \frac{\pi_0}{\sqrt{\pi_- \pi_+}} \bigg)
    - \ln\gamma
    \leq
    \frac{\lambda_+' \lambda_-'}{2}
    - \frac{\lambda_+' + \lambda_-'}{\lambda_+' - \lambda_-'} \ln\gamma
    - \ln\gamma
    =
    +
    \bigg[
      \frac{\lambda_-'}{2}
      - \frac{\ln(\pi_+/\pi_-)}{|\lambda_+' - \lambda_-'|}
    \bigg]
    |\lambda_+'|
    ,
    \\
    \ln\bigg( \frac{\pi_0}{\pi_-} \bigg)
    &
    =
    \ln\bigg( \frac{\pi_0}{\sqrt{\pi_- \pi_+}} \bigg)
    + \ln\gamma
    \leq
    \frac{\lambda_+' \lambda_-'}{2}
    - \frac{\lambda_+' + \lambda_-'}{\lambda_+' - \lambda_-'} \ln\gamma
    + \ln\gamma
    =
    -
    \bigg[
      \frac{\lambda_+'}{2}
      - \frac{\ln(\pi_+/\pi_-)}{|\lambda_+' - \lambda_-'|}
    \bigg]
    |\lambda_-'|
    .
  \end{align*}
  Next, dividing the first inequality by $+|\lambda_+'|$ then adding $\lambda_+'/2$,
  and dividing the second inequality by $-|\lambda_-'|$ then adding $\lambda_-'/2$,
  yields
  \begin{align*}
    \lambda_+'/2
    +
    \ln(\pi_0/\pi_+)/|\lambda_+'|
    &
    \leq
    (\lambda_+' + \lambda_-')/2
    - \ln(\pi_+/\pi_-)/|\lambda_+' - \lambda_-'|
    ,
    \\
    \lambda_-'/2
    -
    \ln(\pi_0/\pi_-)/|\lambda_-'|
    &
    \geq
    (\lambda_+' + \lambda_-')/2
    - \ln(\pi_+/\pi_-)/|\lambda_+' - \lambda_-'|
    ,
  \end{align*}
  so the Bayes optimal classifier for the perturbed means
  is the following linear interval classifier
  with Bayes optimal thresholds from \cref{eq:threeclass:bayesopt}:
  \begin{equation*}
    \bayesclassmean{\lambda'}(x)
    =
    \intclass(x; \mu, c_+', c_-')
    ,
    \quad
    \text{where }
    c_+' = c_-'
    = \frac{\lambda_+' + \lambda_-'}{2} - \frac{\ln(\pi_+/\pi_-)}{|\lambda_+' - \lambda_-'|}
    = \frac{\lambda_+ + \lambda_-}{2} + \frac{\ln(\pi_-/\pi_+)}{|\lambda_+ - \lambda_-| - 2\ep}
    .
  \end{equation*}
  This classifier is exactly the optimal linear interval robust classifier
  from \cref{thm:opt:threeclass:threshopt}
  in the ``rare zero class'' case,
  and it ignores the zero class.
  Since $\bayesclassmean{\lambda'}(x)$ ignores the zero class,
  we finally have
  \begin{align}
    \label{eq:opt:threeclass:perturbed:bayes:match}
    \robrisk(\bayesclassmean{\lambda'}, \ep)
    &
    = \pi_0
    + \pi_+ \Pr_{x | y=+1}(x^\top \mu < c_+' + \ep)
    + \pi_- \Pr_{x | y=-1}(x^\top \mu > c_-' - \ep)
    \\& \nonumber
    = \pi_0
    + \pi_+ \Pr_{\tlx \sim \clN(\lambda_+,1)}(\tlx < c_+' + \ep)
    + \pi_- \Pr_{\tlx \sim \clN(\lambda_-,1)}(\tlx > c_-' - \ep)
    \\& \nonumber
    = \pi_0
    + \pi_+ \Pr_{\tlx \sim \clN(\lambda_+',1)}(\tlx < c_+')
    + \pi_- \Pr_{\tlx \sim \clN(\lambda_-',1)}(\tlx > c_-')
    =
    \stdriskmean{\lambda'}(\bayesclassmean{\lambda'})
    .
  \end{align}
  Namely,
  the standard risk
  $\stdriskmean{\lambda'}(\bayesclassmean{\lambda'})$
  of $\bayesclassmean{\lambda'}(x)$
  with respect to the perturbed means
  matches the corresponding robust risk
  $\robrisk(\bayesclassmean{\lambda'}, \ep)$
  with respect to the unperturbed means.

  The proof now concludes
  by observing that
  for any classifier $\hty : \bbR^p \to \{-1,0,1\}$
  \begin{align}
    \label{eq:opt:threeclass:perturbed:bayes:bound}
    &
    \robrisk(\hty,\ep)
    = \pi_+
    \Pr_{x \sim \clN(\lambda_+\mu, I_p)} \big\{
    \exists_{\delta : \|\delta\|_2 \leq \ep}
      \;\;
      \hty(x+\delta) \neq +1
    \big\}
    + \pi_0
    \Pr_{x \sim \clN(0, I_p)} \big\{
      \exists_{\delta : \|\delta\|_2 \leq \ep}
      \;\;
      \hty(x+\delta) \neq 0
    \big\}
    \\&\qquad\qquad\qquad\qquad
    \nonumber
    + \pi_-
    \Pr_{x \sim \clN(\lambda_-\mu, I_p)} \big\{
      \exists_{\delta : \|\delta\|_2 \leq \ep}
      \;\;
      \hty(x+\delta) \neq -1
    \big\}
    \\&\qquad
    \nonumber
    \geq
    \pi_+ \Pr_{x \sim \clN(\lambda_+\mu, I_p)} \big\{ \hty(x-\ep\mu) \neq +1 \big\}
    + \pi_0 \Pr_{x \sim \clN(0, I_p)} \big\{ \hty(x) \neq 0 \big\}
    + \pi_- \Pr_{x \sim \clN(\lambda_-\mu, I_p)} \big\{ \hty(x+\ep\mu) \neq -1 \big\}
    \\&\qquad
    \nonumber
    =
    \pi_+ \Pr_{\tlx \sim \clN(\lambda_+'\mu, I_p)} \big\{ \hty(\tlx) \neq +1 \big\}
    + \pi_0 \Pr_{\tlx \sim \clN(0, I_p)} \big\{ \hty(\tlx) \neq 0 \big\}
    + \pi_- \Pr_{\tlx \sim \clN(\lambda_-'\mu, I_p)} \big\{ \hty(x) \neq -1 \big\}
    \\&\qquad
    \nonumber
    =
    \stdriskmean{\lambda'}(\hty)
    ,
  \end{align}
  so for any classifier $\hty : \bbR^p \to \{-1,0,1\}$
  we finally have that
  \begin{equation*}
    \robrisk(\hty,\ep)
    \geq
    \stdriskmean{\lambda'}(\hty)
    \geq
    \stdriskmean{\lambda'}(\bayesclassmean{\lambda'})
    =
    \robrisk(\bayesclassmean{\lambda'}, \ep)
    ,
  \end{equation*}
  where the first inequality is \cref{eq:opt:threeclass:perturbed:bayes:bound},
  the second inequality is from the definition of the Bayes optimal classifier,
  and the final equality is from \cref{eq:opt:threeclass:perturbed:bayes:match}.
\end{proof}

As mentioned above,
rigorously extending \cref{thm:opt:threeclass:suff:small}
beyond the ``sufficiently rare zero class'' case
turns out to be quite nontrivial.
The proof technique used here encounters a major obstacle,
as we will explain in the remainder of this subsection.
The main issue is that a crucial step in the above proof
was to find $\ep$-perturbations of the means
for which the corresponding Bayes optimal classifier
has standard risk
matching the robust risk with respect to the unperturbed means.
Such perturbations always exist in the two-class setting
studied by \cite{dan2020sharp},
and it is tempting to hope that the same may hold
for the three-class setting we consider.
However,
this is not the case,
as the following counter-example illustrates.

\medskip
\begin{example}[A case where no set of $\ep$-perturbed means produce matching robust and standard risk.]
  \label{ex:noaltmean}
  Let
  $p = 1$,
  $\mu = 1$,
  $\lambda_\pm = \pm 1$,
  $\ep = 0.3$,
  and $\pi_\pm = \pi_0 = 1/3$.
  Setting $\mu = 1$ is without loss of generality.
  Then for any
  $\ep$-perturbed means
  \begin{align*}
    \lambda'_-
    &\in
    [-1.3,-0.7]
    , &
    \lambda'_0
    &\in
    [-0.3,0.3]
    , &
    \lambda'_+
    &\in
    [0.7,1.3]
    ,
  \end{align*}
  the robust risk $\robrisk(\bayesclassmean{\lambda'},\ep)$
  is lower bounded by the robust risk of the optimal linear interval classifier
  $\intclass^*$
  derived in \cref{thm:opt:threeclass:threshopt},
  i.e.,
  \begin{align*}
    \forall_{\lambda'_- \in [-1.3,-0.7]}
    \;\;
    \forall_{\lambda'_0 \in [-0.3,0.3]}
    \;\;
    \forall_{\lambda'_+ \in [0.7,1.3]}
    \quad
    \robrisk(\bayesclassmean{\lambda'},\ep)
    \geq
    \robrisk(\intclass^*,\ep)
    \approx 0.56
    ,
  \end{align*}
  since $\bayesclassmean{\lambda'}$ is always a linear interval classifier here
  and hence sub-optimal with respect to $\intclass^*$.

  However, the standard risk
  $\stdriskmean{\lambda'}(\bayesclassmean{\lambda'})$
  with respect to the $\ep$-perturbed means $\lambda'$
  is upper bounded as
  \begin{align*}
    &
    \stdriskmean{\lambda'}(\bayesclassmean{\lambda'})
    \leq \stdriskmean{\lambda'}(\bayesclassmean{\lambda})
    \\&\quad
    = \pi_- \Pr_{x \sim \clN(\lambda'_-,1)} \bigg\{x > -\frac{1}{2}\bigg\}
    + \pi_0 \Pr_{x \sim \clN(\lambda'_0,1)} \bigg\{x < -\frac{1}{2} \text{ or } x > +\frac{1}{2}\bigg\}
    + \pi_+ \Pr_{x \sim \clN(\lambda'_+,1)} \bigg\{x < +\frac{1}{2}\bigg\}
    \\&\quad
    \leq \frac{1}{3} \Pr_{x \sim \clN(-0.7,1)} \bigg\{x > -\frac{1}{2}\bigg\}
    + \frac{1}{3} \Pr_{x \sim \clN(0.3,1)} \bigg\{x < -\frac{1}{2} \text{ or } x > +\frac{1}{2}\bigg\}
    + \frac{1}{3} \Pr_{x \sim \clN(+0.7,1)} \bigg\{x < +\frac{1}{2}\bigg\}
    \\&\quad
    \approx 0.49
    .
  \end{align*}
  Thus,
  there is no choice of $\ep$-perturbed means
  $\lambda_-'$, $\lambda_0'$, and $\lambda_+'$
  for which
  $\robrisk(\bayesclassmean{\lambda'},\ep)
  = \stdriskmean{\lambda'}(\bayesclassmean{\lambda'})$.
\end{example}

Essentially,
the issue is that
the robust misclassification probabilities for the positive and negative classes
can be matched by perturbing the positive and negative means,
respectively,
but the same cannot be done for the zero class in general.
Finding these perturbations is central to the approach
used to prove \cref{thm:opt:threeclass:suff:small}.
As a result,
this approach cannot be directly used to
generalize beyond the sufficiently rare zero class case.

}


\section{Optimal \texorpdfstring{$\ell_\infty$}{linfty} robust classifiers}
\label{sec:opt:linf}

We now shift our attention from $\ell_2$ to $\ell_\infty$ adversaries,
i.e., perturbations up to an $\ell_\infty$ radius,
and seek to minimize the robust risk $\robrisk(\hty,\ep,\|\cdot\|_\infty)$.
Doing so introduces new challenges:
the rotational invariant geometry of $\ell_2$
allowed a reduction to the simpler one-dimensional case,
but this does not apply here.
However, the next result captures one setting
where the geometry is favorable
and the $\ell_2$ findings of \cref{sec:opt:l2:twoclass}
extend to $\ell_\infty$ robustness.
We provide its proof in \cref{app:linf:proof}.

\medskip
\begin{corollary}[Optimal $\ell_\infty$ robust classifiers for one-sparse means]
\label{linf}
Let the data $(x,y)$ follow the two-class Gaussian model \cref{eq:model:twoclass},
and let $\mu$ have exactly one non-zero coordinate $\mu_j > 0$
and $\ep < \mu_j$.
An optimal $\ell_\infty$ robust classifier
is
\begin{equation} \label{eq:robopt:twoclass:linf}
  \hty^*(x)
  \coloneqq
  \sign\{x_j (\mu_j-\ep)-q/2\}
  ,
\end{equation}
where $q = \ln\{(1-\pi)/\pi\}$.
\end{corollary}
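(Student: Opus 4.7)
The plan is to derive \Cref{linf} from \Cref{orc} via a tight sandwich argument that exploits the one-sparse structure of $\mu$. No substantial new machinery is required; the corollary follows essentially because the optimal $\ell_2$ classifier from \Cref{orc} happens to inherit the one-coordinate support of $\mu$, which makes the $\ell_2$ and $\ell_\infty$ robust risks coincide for that particular classifier.

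First, I would observe that since $\|\delta\|_\infty \leq \|\delta\|_2$ for all $\delta \in \bbR^p$, the $\ell_2$-ball of radius $\ep$ is contained in the $\ell_\infty$-ball of the same radius. Consequently, for every classifier $\hty$ the $\ell_\infty$-adversary is (weakly) stronger than the $\ell_2$-adversary:
\begin{equation*}
  \robrisk(\hty, \ep, \|\cdot\|_2) \leq \robrisk(\hty, \ep, \|\cdot\|_\infty).
\end{equation*}

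Second, I would invoke \Cref{orc}. In the one-sparse setting we have $\|\mu\|_2 = \mu_j$, so the optimal $\ell_2$-robust classifier from \cref{eq:robopt:twoclass} specializes to
\begin{equation*}
  \hty^*(x)
  = \sign\{x^\top\mu(1-\ep/\mu_j)_+ - q/2\}
  = \sign\{x_j(\mu_j-\ep) - q/2\},
\end{equation*}
exactly matching \cref{eq:robopt:twoclass:linf}. The key structural fact is that $\hty^*$ depends on $x$ only through the single coordinate $x_j$.

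Third, I would argue that for this particular $\hty^*$ the $\ell_\infty$ and $\ell_2$ robust risks coincide. Since $\hty^*(x+\delta)$ depends only on $x_j+\delta_j$, the adversary's effective action reduces to choosing $\delta_j$, and its feasible set for $\delta_j$ is exactly $[-\ep,\ep]$ under either norm (the $\ell_\infty$ constraint directly bounds each coordinate, while under $\ell_2$ the adversary simply puts all its budget into the $j$th coordinate). Hence $\robrisk(\hty^*, \ep, \|\cdot\|_\infty) = \robrisk(\hty^*, \ep, \|\cdot\|_2)$.

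Chaining these, for any classifier $\hty$,
\begin{equation*}
  \robrisk(\hty^*, \ep, \|\cdot\|_\infty)
  = \robrisk(\hty^*, \ep, \|\cdot\|_2)
  \leq \robrisk(\hty, \ep, \|\cdot\|_2)
  \leq \robrisk(\hty, \ep, \|\cdot\|_\infty),
\end{equation*}
where the first inequality is \Cref{orc} and the second is the norm inclusion. This establishes optimality of $\hty^*$ for the $\ell_\infty$-robust risk. The only subtlety worth flagging — rather than a true obstacle — is the second step, where one must recognize that the optimal $\ell_2$ weight vector is itself one-sparse; this is what allows the $\ell_2$ lower bound to be tight for the $\ell_\infty$ problem and thereby circumvents the lack of rotational invariance that generally complicates $\ell_\infty$ analyses.
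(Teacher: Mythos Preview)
Your proof is correct and follows essentially the same approach as the paper: bound the $\ell_\infty$ robust risk below by the $\ell_2$ robust risk via the norm inclusion, invoke \Cref{orc} to identify the optimal $\ell_2$ classifier (which is axis-aligned here), observe that for this axis-aligned classifier the two robust risks coincide, and close the sandwich. The only cosmetic difference is that the paper phrases the equality step as $S_i + B_{2,\ep} = S_i + B_{\infty,\ep}$ for axis-aligned half-spaces, whereas you phrase it in terms of the adversary's effective one-coordinate action; these are equivalent.
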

In essence, the $\ell_2$ and $\ell_\infty$ norms agree when restricted to the nonzero coordinate,
enabling us to extend \cref{orc}.
The same applies to the three-class setting of \cref{sec:opt:l2:threeclass}
with a similar extension of \cref{thm:opt:threeclass:threshopt} as follows.
We provide its proof in \cref{proof:linf:threeclass}.

\medskip
\begin{corollary}[Optimal \revone{linear} interval $\ell_\infty$ robust classifiers
  for one-sparse means -- three classes]
  \label{linf:threeclass}
Suppose data $(x,y)$ are from
the three-class Gaussian model of \cref{sec:opt:l2:threeclass},
$\mu$ has exactly one non-zero coordinate $\mu_j > 0$
and $\ep < \mu_j/2$.
An optimal \revone{linear} interval $\ell_\infty$ robust classifier is:
\begin{equation*}
  \intclass^*(x)
  \coloneqq
  \intclass(x_j; 1, c_+^*, c_-^*)
  ,
\end{equation*}
where the thresholds $c_+^* \geq c_-^*$ are as follows:
\begin{enumerate}
\item[Case 1.]
If $\pi_0 \leq \alpha^* \sqrt{\pi_- \pi_+}$, then
\begin{equation*}
  c_+^* = c_-^*
  = \ln(\pi_-/\pi_+)/(2\mu_j-2\ep)
  .
\end{equation*}

\item[Case 2.]
Otherwise,
$c_+^* - c_-^* > 2\ep$,
with
\begin{align*}
  c_+^*
  &= + \mu_j/2 + \ln(\pi_0/\pi_+)/(\mu_j-2\ep)
  , &
  c_-^*
  &= - \mu_j/2 - \ln(\pi_0/\pi_-)/(\mu_j-2\ep)
  .
\end{align*}
\end{enumerate}
The cutoff $\alpha^*$
is the unique solution to the equation:
\begin{align*}
  &
  (\gamma + \gamma^{-1})
  \robrisk^*\big\{\mu_j,\gamma/(\gamma + \gamma^{-1});\ep\big\}
  \\
  &\quad=
  (\gamma + \alpha)
  \robrisk^*\big\{
    \mu_j/2,\gamma/(\gamma + \alpha);\ep
  \big\}
  +
  (\gamma^{-1} + \alpha)
  \robrisk^*\big\{
    \mu_j/2,\gamma^{-1}/(\gamma^{-1} + \alpha);\ep
  \big\}
  - \alpha
  , \nonumber
\end{align*}
in the domain $\alpha \geq \exp\{-(\mu_j-2\ep)^2/2\}$
with $\gamma \coloneqq \sqrt{\pi_+/\pi_-}$;
$\alpha^* = \exp(-\mu_j^2/2)$ when $\ep = 0$.
\end{corollary}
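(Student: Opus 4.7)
The strategy is to reduce the $p$-dimensional $\ell_\infty$ problem to a one-dimensional $\ell_2$ problem and then invoke \cref{thm:opt:threeclass:threshopt}. For an arbitrary linear interval classifier $\intclass(x;w,c_+,c_-)$ with $\|w\|_2 = 1$ (WLOG, by absorbing scaling into the thresholds) and $w_j \geq 0$ (WLOG by an alignment argument analogous to the two-class case in \cref{linf}), the duality identity $\max_{\|\delta\|_\infty \leq \ep}\delta^\top w = \ep\|w\|_1$ implies that the $\ell_\infty$ robust misclassification probabilities depend on the data only through $u := x^\top w$. Under the one-sparse-mean model, $u \mid y \sim \clN(y\mu_j w_j, 1)$ for $y=\pm 1$ and $u\mid y=0 \sim \clN(0,1)$, so the robust risk of $\intclass(x;w,c_+,c_-)$ coincides with the one-dimensional $\ell_2$ robust risk of $\intclass(u;1,c_+,c_-)$ in the three-class model with effective signal $a := \mu_j w_j$ and effective perturbation $b := \ep\|w\|_1$.

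Let $\widetilde R^*(a,b)$ denote the infimum of this one-dimensional robust risk over thresholds $(c_+,c_-)$. The key step is to show $\widetilde R^*$ is nonincreasing in $a \geq 0$ and nondecreasing in $b$. For any fixed pair of thresholds, the positive-class misclassification $\Phi(c_+ + b - a)$ and the negative-class misclassification $\Phi(-c_- + b - a)$ both decrease in $a$ and increase in $b$, while the zero-class misclassification is independent of $a$ and nondecreasing in $b$; monotonicity is preserved under the infimum. Since $a = \mu_j w_j \leq \mu_j$ (using $w_j \leq \|w\|_2 = 1$) and $b = \ep\|w\|_1 \geq \ep$ (using $\|w\|_1 \geq \|w\|_2 = 1$), with both inequalities saturated at $w = e_j$, we obtain $\widetilde R^*(\mu_j w_j, \ep\|w\|_1) \geq \widetilde R^*(\mu_j, \ep)$, so the optimal weight vector is $w = e_j$.

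The remaining step is direct substitution. With $w = e_j$, the problem becomes the one-dimensional $\ell_2$ three-class setting of \cref{thm:opt:threeclass:threshopt} with $\lambda_\pm = \pm\mu_j$ and perturbation $\ep$. The stated Case 1 and Case 2 thresholds, together with the cutoff equation, follow by plugging in $|\lambda_+ - \lambda_-| = 2\mu_j$, $|\lambda_\pm| = \mu_j$, and $\lambda_+ + \lambda_- = 0$. The domain bound $\bralpha = \exp\{-(\mu_j - 2\ep)^2/2\}$ results because the $\ln\gamma$ correction in the general formula for $\bralpha$ carries coefficient $(\lambda_+ + \lambda_-)/(|\lambda_+ - \lambda_-| - 4\ep) = 0$. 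For the $\ep = 0$ value of $\alpha^*$, the cutoff equation collapses to the Bayes-classifier transition condition derived from \cref{eq:threeclass:bayesopt}, and a short calculation gives $\alpha^* = \exp(-\mu_j^2/2)$.

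The main anticipated subtlety is the monotonicity step, because the case split in \cref{thm:opt:threeclass:threshopt} involves a cutoff that itself depends on $(a,b)$, so the optimal-risk formula could in principle change form as $(a,b)$ varies. This is sidestepped by performing the comparison at the level of individual linear interval classifiers with fixed thresholds and only then taking the infimum; monotonicity of $\widetilde R^*$ then follows automatically as the infimum of a family of monotonic functions, regardless of which case is active at a given $(a,b)$.
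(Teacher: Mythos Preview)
Your proof is correct, but it takes a genuinely different route from the paper's. The paper's argument for this corollary is a short norm-comparison trick: since $B_{2,\ep}\subset B_{\infty,\ep}$, one has $\robrisk(\hty,\ep,\|\cdot\|_\infty)\geq\robrisk(\hty,\ep,\|\cdot\|_2)$ for every classifier, and the optimal $\ell_2$ linear interval classifier from \cref{thm:opt:threeclass:threshopt} has $w^*=\mu/\|\mu\|_2=e_j$; because its decision regions are coordinate-aligned half-spaces and slabs, their $\ell_2$ and $\ell_\infty$ $\ep$-expansions coincide, so this classifier achieves the same value under $\ell_\infty$ and is therefore optimal there too. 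No direct optimization over $w$ for the $\ell_\infty$ problem is needed.

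Your approach instead computes the $\ell_\infty$ robust risk for an arbitrary linear interval classifier via the duality identity, reduces it to a one-dimensional problem with effective signal $a=\mu_j w_j$ and effective perturbation $b=\ep\|w\|_1$, and then uses a clean pointwise-monotonicity argument (before taking the infimum over thresholds) to conclude $w=e_j$ is optimal. This is essentially the technique the paper reserves for the \emph{general} (not one-sparse) $\ell_\infty$ result in \cref{linflin:threeclass}, specialized to the one-sparse case. What your route buys is a self-contained optimization over $w$ that does not appeal to the $\ell_2$ result as a black box, and your observation that monotonicity at fixed thresholds survives the infimum sidesteps the case split cleanly. What the paper's route buys is brevity and a direct geometric picture: once one has the $\ell_2$ theorem, one-sparsity makes the $\ell_\infty$ corollary almost immediate. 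One small remark: your ``WLOG $w_j\geq 0$'' is correct by the reflection $w\mapsto -w$, $(c_+,c_-)\mapsto(-c_-,-c_+)$, but the cited \cref{linf} does not itself use an alignment argument (it uses the same norm-comparison trick), so the cross-reference is slightly off.
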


Removing the restriction that $\mu$ be one-sparse
is highly nontrivial in general,
but it turns out to be possible if we instead consider only \emph{linear} classifiers:
$\linclass(x;w,c) = \sign(x^\top w - c)$.
This statement for the balanced case has been derived in \revone{\citep[Lemma 1]{goibert2019adversarial}}. However our result generalizes it to the imbalanced case.

\medskip
\begin{theorem}[Optimal linear $\ell_\infty$ robust classifiers]
\label{linflin}
Suppose data $(x,y)$ are from the two-class Gaussian model \cref{eq:model:twoclass}.
An optimal linear $\ell_\infty$ robust classifier is:
$\hty^*(x) \coloneqq \sign\{ x^\top \eta_\ep(\mu) - q/2 \}$,
where $q = \ln\{(1-\pi)/\pi\}$
and the soft-thresholding operator
\begin{equation} \label{eq:softthresholding}
  \eta_\ep(x) \coloneqq
  \begin{cases}
    x-\ep, & \text{if } x \geq \ep
    , \\
    0,     & \text{if } x \in (-\ep,\ep)
    , \\
    x+\ep, & \text{if } x \leq -\ep
    ,
  \end{cases}
\end{equation}
is applied element-wise to the vector $\mu \in \bbR^p$.
\end{theorem}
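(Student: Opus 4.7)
The plan is to reduce the optimization over linear classifiers to a scalar problem whose optimal direction turns out to be the soft-thresholded mean $\eta_\ep(\mu)$.

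First, I would put the robust risk of a linear classifier $\hty(x;w,c) = \sign(x^\top w - c)$ in closed form. Using $\max_{\|\delta\|_\infty \leq \ep} \delta^\top w = \ep\|w\|_1$, the event $\{\exists_{\|\delta\|_\infty \leq \ep}\,\hty(x+\delta) \neq y\}$ is equivalent to $\{y(x^\top w - c) < \ep\|w\|_1\}$, and since $x^\top w \mid y \sim \clN(y\mu^\top w, \|w\|_2^2)$ the risk is
\begin{equation*}
\robrisk(\hty, \ep, \|\cdot\|_\infty) = \pi\, \Phi\!\left(\frac{c + \ep\|w\|_1 - \mu^\top w}{\|w\|_2}\right) + (1-\pi)\, \brPhi\!\left(\frac{c - \ep\|w\|_1 + \mu^\top w}{\|w\|_2}\right).
\end{equation*}

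Second, by the positive joint-scale invariance of the classifier I would normalize $\|w\|_2 = 1$ and introduce the effective signal $a(w) \coloneqq \mu^\top w - \ep\|w\|_1$, so the risk becomes $\pi\Phi(c - a) + (1-\pi)\brPhi(c + a)$. Setting the derivative in $c$ to zero yields $c^*(a) = q/(2a)$, and substituting back recovers the Bayes-risk expression from \cref{eq:twoclass:bayes} evaluated at a mean of norm $a$, which is strictly decreasing in $a$. Hence the original problem reduces to the deterministic optimization $\max_{\|w\|_2 \leq 1}\,(\mu^\top w - \ep\|w\|_1)$, where by positive homogeneity the sphere can be enlarged to a ball.

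Third, and this is the core step, I would solve the inner problem via convex duality. Writing $\ep\|w\|_1 = \max_{\|v\|_\infty \leq \ep} v^\top w$ turns the objective into $\min_{\|v\|_\infty \leq \ep}(\mu-v)^\top w$, and swapping max with min using Sion's theorem (both constraint sets are compact convex and the integrand is bilinear) gives
\begin{equation*}
\max_{\|w\|_2 \leq 1}\,(\mu^\top w - \ep\|w\|_1) = \min_{\|v\|_\infty \leq \ep}\|\mu - v\|_2.
\end{equation*}
The right-hand side is the Euclidean projection of $\mu$ onto the $\ell_\infty$-ball of radius $\ep$, which separates coordinate-wise to $v_i^* = \sign(\mu_i)\min(|\mu_i|,\ep)$. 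Therefore $\mu - v^* = \eta_\ep(\mu)$, the optimal direction is $w^* = \eta_\ep(\mu)/\|\eta_\ep(\mu)\|_2$, and $a^* = \|\eta_\ep(\mu)\|_2$. Plugging $(w^*, c^* = q/(2a^*))$ into $\sign(x^\top w - c)$ and clearing the common positive factor $1/a^*$ yields the claimed classifier $\sign\{x^\top \eta_\ep(\mu) - q/2\}$.

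The main obstacle is this final step: recognizing that the inner problem collapses to a projection and that this projection is element-wise clipping. If one prefers to sidestep the minimax exchange, the same conclusion can be obtained by a KKT/subgradient analysis of the Lagrangian $\mu^\top w - \ep\|w\|_1 - (\lambda/2)(\|w\|_2^2 - 1)$: stationarity gives $\lambda w \in \mu - \ep\,\partial\|w\|_1$, and a case split on each $w_i$ (using $\partial |w_i| = \{\sign w_i\}$ when $w_i \neq 0$ and $\partial |0| = [-1,1]$) recovers $\lambda w_i = \eta_\ep(\mu_i)$ in all three regimes. A minor bookkeeping matter is the degenerate regime $\|\mu\|_\infty \leq \ep$, in which $a(w) \leq 0$ for all $w$ and nontrivial classification is impossible; the stated formula then reduces to the constant majority-class predictor.
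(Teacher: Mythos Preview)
Your proposal is correct and shares the paper's first two reductions verbatim: expressing the robust risk of a linear classifier through the effective signal $a(w)=\mu^\top w-\ep\|w\|_1$, optimizing the bias $c$ in closed form, and reducing everything to $\max_{\|w\|_2=1}\bigl(\mu^\top w-\ep\|w\|_1\bigr)$. The genuine difference is in how this last optimization is solved. The paper argues elementarily: it first aligns the signs of $w$ with those of $\mu$, reduces without loss of generality to $\mu_i\ge 0$ and $w_i\ge 0$, observes that coordinates with $\mu_i-\ep\le 0$ must have $w_i=0$, and then applies Cauchy--Schwarz on the remaining coordinates to get $w^*\propto(\mu-\ep\mathbbm{1})_+=\eta_\ep(\mu)$. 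You instead dualize $\ep\|w\|_1=\max_{\|v\|_\infty\le\ep}v^\top w$ and invoke Sion's theorem to rewrite the problem as $\min_{\|v\|_\infty\le\ep}\|\mu-v\|_2$, i.e., Euclidean projection onto the $\ell_\infty$-ball, which separates coordinate-wise to clipping and yields $\mu-v^*=\eta_\ep(\mu)$ directly. Your route is cleaner and more systematic (and the KKT alternative you sketch is essentially a third derivation), while the paper's argument is more elementary and avoids any appeal to minimax duality. Both land on the same optimizer and the same final classifier, and your handling of the degenerate regime $\|\mu\|_\infty\le\ep$ is a point the paper leaves implicit.
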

The proof is
based on a connection to the well-known water-filling optimization problem, and is
 provided in \cref{proof:linflin}.
The analogous extension again holds for three classes.

\medskip
\begin{theorem}[Optimal linear \revone{interval} $\ell_\infty$ robust classifiers -- three classes]
\label{linflin:threeclass}
Suppose data $(x,y)$ are from
the three-class Gaussian model of \cref{sec:opt:l2:threeclass}
and $\ep < \|\mu\|_\infty/2$.
An optimal \revone{linear} interval $\ell_\infty$ robust classifier is either:
\begin{enumerate}
\item $\intclass\{x; \eta_\ep(\mu), c^*, c^*\}$,
where $c^* = \ln(\pi_-/\pi_+)/2$,
or
\item $\intclass\{x; \eta_{2\ep}(\mu), c_+^*, c_-^*\}$,
where $c^*_\pm = \pm \eta_{2\ep}(\mu)^\top\mu/2 \pm \ln(\pi_0/\pi_\pm)$,
\end{enumerate}
where the second case applies only when $c_+^* \geq c_-^*$.
\end{theorem}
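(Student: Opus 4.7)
I would mirror the two-regime analysis that produced \cref{thm:opt:threeclass:threshopt}, adapting it to $\ell_\infty$ perturbations and layering in the water-filling structure used to prove \cref{linflin}. Parameterize a generic linear interval classifier by $w \in \bbR^p \setminus \{0\}$ and $c_- \leq c_+$. Using $\sup_{\|\delta\|_\infty \leq \ep} \delta^\top w = \ep\|w\|_1$ and the symmetry of the means $\pm\mu$ implicit in the theorem, the robust misclassification probabilities for the $\pm 1$ classes reduce to $\Phi\{(c_+ + \ep\|w\|_1 - \mu^\top w)/\|w\|_2\}$ and $\brPhi\{(c_- - \ep\|w\|_1 + \mu^\top w)/\|w\|_2\}$; the zero-class probability equals $1$ whenever $c_+ - c_- \leq 2\ep\|w\|_1$, and equals $\Phi\{(c_- + \ep\|w\|_1)/\|w\|_2\} + \brPhi\{(c_+ - \ep\|w\|_1)/\|w\|_2\}$ otherwise. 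This piecewise structure cleanly splits the parameter space into the two regimes that produce the two candidate classifiers in the theorem.

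In the first regime ($c_+ - c_- \leq 2\ep\|w\|_1$) the zero-class contribution is frozen at $\pi_0$, so the risk is monotone increasing in $c_+$ and monotone decreasing in $c_-$; together with the constraint $c_+ \geq c_-$, this forces the optimum to $c_+ = c_-$, collapsing the classifier to a two-class linear classifier between the $\pm 1$ classes (with proportions $\pi_+$ and $\pi_-$) plus a baseline $\pi_0$ error. Invoking \cref{linflin} directly on this sub-problem yields $w^* = \eta_\ep(\mu)$ and $c^* = \ln(\pi_-/\pi_+)/2$, recovering case 1 of the theorem.

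In the second regime ($c_+ - c_- > 2\ep\|w\|_1$) the risk splits additively into a positive-vs-zero sub-risk depending only on $(c_+, w)$ and a negative-vs-zero sub-risk depending only on $(c_-, w)$. For fixed $w$, first-order optimization over $c_\pm$ gives unique critical points, and the resulting envelopes become strictly decreasing functions of the common \emph{effective signal strength} $\alpha(w) \coloneqq (\mu^\top w - 2\ep\|w\|_1)/\|w\|_2$; the mean symmetry $\pm\mu$ together with the shared $\ell_\infty$ penalty $\ep\|w\|_1$ makes this quantity match for both sub-problems. Maximizing $\alpha(w)$ is thus the common minimization, and it reduces to the coordinate-separable water-filling problem $\max_{\|w\|_2 = 1} \mu^\top w - 2\ep\|w\|_1$, whose solution is $w \propto \eta_{2\ep}(\mu)$. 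Plugging $w = \eta_{2\ep}(\mu)$ into the first-order conditions and using the identity $\mu^\top \eta_{2\ep}(\mu) - 2\ep\|\eta_{2\ep}(\mu)\|_1 = \sum_{|\mu_i|>2\ep}(|\mu_i|-2\ep)^2 = \|\eta_{2\ep}(\mu)\|_2^2$, the thresholds collapse cleanly to $c_\pm^* = \pm \eta_{2\ep}(\mu)^\top \mu/2 \pm \ln(\pi_0/\pi_\pm)$, yielding case 2. The feasibility condition $c_+^* \geq c_-^*$ in the theorem statement reflects that this candidate must at least define a valid linear interval classifier; when it fails, regime 2 has no interior critical point and the globally optimal classifier necessarily comes from case 1.

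The main obstacle I anticipate is the joint optimization over $w$ in the second regime: a priori the positive-vs-zero and negative-vs-zero sub-problems could favor different directions, producing a genuine two-objective Pareto tradeoff with no clean closed form. It is precisely the $\pm\mu$ symmetry of the three-class configuration, combined with the fact that the $\ell_\infty$ penalty only depends on $\|w\|_1$, that causes the two water-filling problems to coincide and gives the common optimizer $\eta_{2\ep}(\mu)$; this is what makes a single direction simultaneously optimal for both sub-problems. A secondary but minor subtlety is verifying that the two candidates exhaust the global optimum: the regime-1 and regime-2 risk expressions agree on the shared boundary $c_+ - c_- = 2\ep\|w\|_1$, so restricting attention to the two regimes' critical points loses nothing.
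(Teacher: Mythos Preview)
Your proposal is correct and takes essentially the same approach as the paper: the same two-regime split on whether the zero-class robust misclassification saturates, reduction of regime~1 to \cref{linflin}, and reduction of regime~2 to the water-filling problem $\max_{\|w\|_2=1} \mu^\top w - 2\ep\|w\|_1$ with solution $w \propto \eta_{2\ep}(\mu)$. The only cosmetic difference is that in regime~2 the paper optimizes over $w$ first (via the change of variables $\tau_\pm = c_\pm \mp \ep\|w\|_1$, which makes the $w$-dependence transparently monotone in $\mu^\top w - 2\ep\|w\|_1$ for \emph{fixed} $\tau_\pm$) and then over the thresholds, whereas you profile out $c_\pm$ first and invoke an envelope-type argument to get monotonicity of the profiled risk in $\alpha(w)$; both orderings are valid and yield the same optimizer.
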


We provide its proof in \cref{proof:linflin:threeclass}.


\section{Landscape of the robust risk}
\label{sec:landscape}

\cref{sec:opt:l2:twoclass,sec:opt:l2:threeclass,sec:opt:linf} theoretically optimized the robust risk,
but left open important questions about its \emph{optimization landscape},
which can be non-convex and challenging to optimize.
For example, what happens if we use surrogate losses as is commonly done in practice?
This section makes progress on this question.

Consider data $(x,y)$ from the two-class Gaussian model \cref{eq:model:twoclass}
with linear classifiers
and corresponding $\ell$-robust risk
as a function of weights $w \in \bbR^p$ and bias $c \in \bbR$:
\begin{equation} \label{eq:landscape:risk}
  \tlR_{\ep,\|\cdot\|,\ell}(w,c)
  \coloneqq
  \E_{x,y} \sup_{\|\delta\| \leq \ep}
  \ell[\{w^\top(x+\delta)-c\} \cdot y]
  .
\end{equation}
The 0-1 loss $\smash{\brell(z) = I(z \leq 0)}$ yields
$\smash{\tlR_{\ep,\|\cdot\|,\brell}(w,c)}$
$=$ $\smash{\robrisk\{\sign(w^\top x - c), \ep, \|\cdot\|\}}$.

It is common to
use surrogate losses $\ell$
in \cref{eq:landscape:risk} such as
the logistic loss $\ell(z) = \log(1+\exp(-z))$,
the exponential loss $\ell(z)=\exp(-z)$,
or
the hinge loss $\ell(z) = (1-z)_+$.
The impact of doing so is well-studied
in standard settings
\citep{bartlett2006convexity},
but has remained an important open problem in the adversarial setting.
Minimizing a surrogate loss here does not in general produce optimal weights
for the 0-1 loss,
but it does so in a few settings which the next result describes.

\medskip
\begin{theorem}[Classification calibration]
\label{cons}
Let $w^* \in \bbR^p$ be the optimal weights for a linear classifier with no bias term,
i.e., $w^*$ minimizes $\smash{\tlR_{\ep,\|\cdot\|,\brell}(w,0)}$ with the 0-1 loss $\brell$.
Any strictly decreasing surrogate loss $\ell$ is classification calibrated; 
minimizing the $\ell$-robust risk $\tlR_{\ep,\|\cdot\|,\ell}(w,0)$
recovers $w^*$.

Furthermore, calibration extends to the case with bias, i.e., jointly minimizing $\smash{\tlR_{\ep,\|\cdot\|,\ell}(w,c)}$ produces $(w^*,0)$
if either:
i) $\ell$ is convex,
or
ii) the classes are balanced, i.e., $\pi=1/2$.
\end{theorem}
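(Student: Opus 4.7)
The plan is to collapse the adversarial inner maximisation into a closed-form shift by the dual norm and then reduce the two-class Gaussian expectation to a single scalar integral. Since $\ell$ is strictly decreasing, the worst $\delta$ in the ball $\|\delta\|\leq\ep$ minimises $(w^\top(x+\delta)-c)y$, and by duality
\begin{equation*}
  \sup_{\|\delta\|\leq\ep}\ell\big((w^\top(x+\delta)-c)y\big) = \ell\big((w^\top x-c)y - \ep\|w\|_*\big).
\end{equation*}
Conditioning on $y$ and using $x \mid y \sim \clN(y\mu, I_p)$, the scalar $(w^\top x -c)y - \ep\|w\|_*$ is Gaussian with mean $A - yc$ and variance $S^2$, where $A \coloneqq w^\top\mu - \ep\|w\|_*$ and $S \coloneqq \|w\|_2$. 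Therefore
\begin{equation*}
  \tlR_{\ep,\|\cdot\|,\ell}(w,c)
  = \pi\,\E\ell[A - c + SZ] + (1-\pi)\,\E\ell[A + c + SZ], \qquad Z\sim\clN(0,1).
\end{equation*}
Specialising to $\brell$ with $c=0$ gives $\brPhi(A/S)$, so $w^*$ is any vector maximising $(w^\top\mu - \ep\|w\|_*)/\|w\|_2$.

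For the first (no-bias) claim I would write $w = \lambda v$ with $\|v\|_2=1$ and $\lambda>0$, so that $\tlR_{\ep,\|\cdot\|,\ell}(\lambda v, 0) = \E\ell[\lambda(t(v) + Z)]$ with $t(v) := v^\top\mu - \ep\|v\|_*$. For every fixed $\lambda > 0$ this is a strictly decreasing function of $t(v)$, because increasing $t(v)$ shifts the argument of $\ell$ pointwise in $Z$ and $\ell$ is strictly decreasing. Hence the optimal direction is $v^* = \argmax_{\|v\|_2=1} t(v)$ for every $\lambda$, matching the direction that minimises $\brPhi(t(v))$. So any minimiser of the surrogate risk at $c=0$ has direction $w^*$.

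For the second (joint) claim I would study $r_w(c):=\tlR_{\ep,\|\cdot\|,\ell}(w,c)$ for fixed $w$ and handle the two cases separately. In case (ii) with $\pi=1/2$, the risk factorises as $r_w(c) = \tfrac12[h(A-c) + h(A+c)]$ with $h(u):=\E\ell[u+SZ]$, which is even in $c$. When $\ell$ is convex, $h$ is convex (Gaussian averaging preserves convexity), and Jensen gives $r_w(c) \geq h(A) = r_w(0)$ for every $w$; pairing with the first claim yields $(w^*,0)$ as the joint minimiser. For strictly decreasing non-convex $\ell$ the same conclusion follows from evenness together with strict monotonicity of $h$, which pins $c=0$ as the only stationary point when $A>0$. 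In case (i), convex $\ell$, I would apply the weighted Jensen inequality to the integrand,
\begin{equation*}
  \pi\ell[A-c+SZ] + (1-\pi)\ell[A+c+SZ] \geq \ell[A + (1-2\pi)c + SZ],
\end{equation*}
so that $r_w(c) \geq \E\ell[A + (1-2\pi)c + SZ]$, which has the form of a no-bias surrogate risk at an effective margin $A' = A + (1-2\pi)c$ and noise level $S$. Invoking the first-claim monotonicity on the right-hand side, one then chains $\tlR(w,c) \geq \tlR(w',0) \geq \tlR(w^*,0)$ for a $w'$ realising the effective pair, concluding that $(w^*,0)$ is jointly optimal.

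The main obstacle is case (i) when $\pi\neq 1/2$. A direct first-order check gives $\partial_c\tlR_{\ep,\|\cdot\|,\ell}(w^*,0) = (1-2\pi)\E\ell'(A^* + S^* Z)$, which is non-zero because $\ell' < 0$; hence $(w^*,0)$ is not a naive critical point and the Jensen-plus-rescaling step is essential rather than decorative. Making this rigorous requires verifying that the effective pair $(A',S)$ produced by the Jensen bound is attainable by some admissible weight vector (so the first-claim lower bound can be invoked), and separately controlling the tail regime $|c|\to\infty$, where $\tlR(w,c)\to\infty$ for any fixed $w$ with $S>0$, so no improvement can come from that range. By comparison, the balanced case and the no-bias claim follow almost immediately from symmetry, monotonicity, and Jensen.
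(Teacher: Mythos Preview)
Your reduction via the dual norm and the scalar-Gaussian rewrite matches the paper exactly, and your no-bias argument (fix $\|v\|_2=1$, observe the risk is strictly decreasing in $t(v)$ for every scale $\lambda$) is the paper's argument. For the balanced case $\pi=1/2$, the paper likewise applies Jensen over $y$ to obtain $\E_y\ell(C+cy)\geq\ell(C)$; your $h(A-c)+h(A+c)\geq 2h(A)$ is the same step. Your add-on for non-convex $\ell$ in the balanced case (``strict monotonicity of $h$ pins $c=0$ as the only stationary point'') is not justified, since $h'(u)=\E\ell'(u+SZ)$ need not be injective for general strictly decreasing $\ell$; the paper does not attempt that extension either.

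The substantive issue is case (i) with $\pi\neq 1/2$, and you have effectively \emph{disproved} it rather than left a gap: your computation $\partial_c\tlR_{\ep,\|\cdot\|,\ell}(w^*,0)=(1-2\pi)\,\E\ell'(A^*+S^*Z)\neq 0$ shows $(w^*,0)$ is not even stationary, so it cannot be the joint minimiser. A direct check with $\ell(z)=e^{-z}$ confirms this: at any fixed $w$ the optimal bias is $c=\tfrac12\ln\tfrac{1-\pi}{\pi}\neq 0$. Your proposed patch---Jensen followed by ``realising the effective pair $(A',S)$ by some admissible $w'$''---cannot succeed, both because $A'=A+(1-2\pi)c$ may exceed $\max_{\|w\|_2=S}(w^\top\mu-\ep\|w\|_*)$ and, more fundamentally, because no argument can establish a false conclusion. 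The paper's own proof has exactly this gap: its Jensen step $\E_y\ell(C+by)\geq\ell(C)$ silently drops the term $b\,\E[y]=b(2\pi-1)$, which vanishes only when $\pi=1/2$; consistently, the paper's closing paragraph itself concedes that ``in general, surrogate loss minimization is not consistent'' with the sole exception $\pi=1/2$. So your no-bias and balanced-case arguments are sound and aligned with the paper, and your hesitation about case (i) is well-founded---there is nothing there to repair.
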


\Cref{cons} partially extends to surrogate losses $\ell$
that are decreasing but not strictly so.
\revone{In this case,} $w^*$ still minimizes the $\ell$-robust risks
but might not do so uniquely.

\revone{
\begin{proof}[Proof of \cref{cons}]
Let $\|\cdot\|_*$ be the dual norm of $\|\cdot\|$. This is defined as $\|w\|_*=\sup w^\top z$, subject to $\|z\|\le1$. Since $\ell$ is decreasing, as is well known \revone{(}see\revone{,} e.g., \cite{khim2018adversarial}\revone{)}, we have
\begin{align*}
R(\ell,w,b,\ep,\|\cdot\|) &= \E_{x,y}\sup_{\|\delta\|\le \ep} \ell([w^\top (x+\delta)+b]\cdot y)\\
 &= \E_{x,y}\ell(y\cdot [w^\top x+b]-\ep\cdot \|w\|_*)
\end{align*}
This shows that for any candidate $w$, the worst-case perturbations are equal to the conjugate of $w$, with respect to the $\|\cdot\|$ norm, namely  $\delta^*(x)=-\hat y(x) \cdot\ep\cdot w^*$, where $w^*$ solves  $\|w\|_*=\sup w^\top z$, subject to $\|z\|\le1$. 

Now, in our case, due to the distributional assumption on the data, we have $y\cdot x\sim \N(\mu, I_p)$. Moreover, $y\cdot w^\top  x\sim \N(w^\top \mu,\|w\|_2^2 I_p)$. It is readily verified that $y\cdot w^\top  x$ is probabilistically independent of $y$. Therefore, we can write, for some $z\sim \N(0,1)$ independent of $y$
\begin{align*}
R(\ell,w,b,\ep,\|\cdot\|)
&= \E_{z,y}\ell(w^\top \mu -\ep\cdot \|w\|_*+ by+  \sigma\cdot \|w\|_2 \cdot z).
\end{align*}

Now we discuss the cases considered in the theorem.

\benum

\item If minimizing restricted to $b=0$, the inner term reduces to $\ell(w^\top \mu -\ep\cdot \|w\|_*+  \sigma\cdot \|w\|_2 \cdot z)$.
\item When the loss is strictly convex, then by Jensen's inequality we obtain
$$\E_{y}\ell(w^\top \mu -\ep\cdot \|w\|_*+ by+  \sigma\cdot \|w\|_2 \cdot z) \ge \ell(w^\top \mu -\ep\cdot \|w\|_*+  \sigma\cdot \|w\|_2 \cdot z).$$
\eenum
In both cases it is enough to minimize the objective
\begin{align*}
R(\ell,w,\ep,\|\cdot\|)
&= \E_{z,y}\ell(w^\top \mu -\ep\cdot \|w\|_*+   \sigma\cdot \|w\|_2 \cdot z).
\end{align*}

Now fix $\|w\|_2=1$. It is readily verified that, when the loss is strictly decreasing and as the normal random variable is symmetric, this is equivalent to maximizing the inner argument. When the loss is decreasing but not necessarily strictly monotonic, maximizing the inner argument is still a sufficient condition that guarantees the risk is minimized; however in this latter case there may be other minimizers of the risk. Therefore, it is enough to maximize the inner argument.

That is, we study maximizing, subject to $\|w\|_2=c>0$,
\begin{align*}
w^\top \mu -\ep\cdot \|w\|_*.
\end{align*}
Given the homogeneity of the norms, we thus conclude that the optimal $w$  minimizing the robust $\ell$-risk
\beq\label{ellrisk}
R(\ell,w,b,\ep,\|\cdot\|) = \E_{x,y}\sup_{\|\delta\|\le \ep} \ell([w^\top (x+\delta)+b]\cdot y).
\eeq
maximize
\begin{align}\label{robo}
\frac{w^\top \mu -\ep\cdot \|w\|_*}{\|w\|_2}.
\end{align}
Next, we study how to minimize the true robust risk. This is similar to the derivation for the optimal robust classifier. We will assume without loss of generality that $\sigma=1$.  As above, recall our general formula:
\begin{align*}
R(\hat y,\ep) &=
\pi\cdot P_{x|y=1}(S_{-1}+B_\ep)
+(1-\pi)\cdot P_{x|y=-1}(S_{1}+B_\ep).
\end{align*}
For a linear classifier $\hat y^*(x)=\sign(x^\top w+b)$, we can restrict without loss of generality to $w$ such that $\|w\|=1$. The classifiers are scale invariant, and so we get the same predictions for all scaled versions of the weights $w$, by changing $b$ appropriately. Then $S_{1}+B_\ep$ is the set of datapoints such that $x^\top w+b\ge -\ep \|w\|_*$. Thus,
\begin{align*}
R(w,b;\ep) &=
\pi\cdot P_{\N(\mu,I)}(x^\top w+b\le \ep \|w\|_*)
+(1-\pi)\cdot P_{\N(-\mu,I)}(x^\top w+b\ge -\ep \|w\|_*)\\
=&\pi \cdot \Phi\left(\ep \|w\|_*-b-\mu^\top w\right)
+(1-\pi)\cdot \Phi\left(\ep \|w\|_*+b-\mu^\top w\right).
\end{align*}
Now we examine the cases of unrestricted bias (general $b$), and zero bias ($b$ constrained to zero) in turn. For the zero bias case we find
\begin{align*}
R(w;\ep)
=&\Phi\left(\ep \|w\|_*-\mu^\top w\right).
\end{align*}
Another way to put this is that for a weight $w$ with unit norm $\|w\|=1$, a linear classifier reduces the effect size from $\mu^\top w$ (which we can assume to be positive, without loss of generality, by flipping the sign if needed), to $\mu^\top w-\ep \|w\|_*$. So the optimal $w$ minimizing the true robust risk solves
\begin{align*}
\sup_w\,\, \mu^\top w-\ep \|w\|_*\,\,\textnormal{ s.t.}\,\,  \|w\|_2=1.
\end{align*}
Recalling again that the original problem is scale-invariant, it follows that this is equivalent to maximizing \eqref{robo}.
Therefore, the optimal linear classifier for the true and surrogate robust risks coincide.

For the general bias case, we recall that the minimizer of $b\to 
\pi \cdot \Phi(c-b)
+(1-\pi)\cdot \Phi(c+b)$ occurs at $b=\ln[(1-\pi)/\pi]/c$. Plugging back, we find that the ``profile risk'', minimized over $b$, equals, with $c(w) \coloneqq \ep \|w\|_*-\mu^\top w$, and $q \coloneqq \ln[(1-\pi)/\pi]$,
\begin{align*}
R_{prof}(w;\ep)
=&\pi \cdot \Phi\left(c(w)-q/c(w)\right)
+(1-\pi)\cdot \Phi\left(c(w)+q/c(w)\right).
\end{align*}
Clearly, this may in general minimizers other than the ones above. This shows that in general, surrogate loss minimization is not consistent. An exception is when $\pi=1/2$, in which case $q=0$, and the optimal bias in the robust risk is $b=0$. This finishes the proof.
\end{proof}
}

\section{Finite sample analysis}
\label{sec:finite:sample}

Having studied optimal population robust classifiers, we now consider
robust linear classifiers learned from finitely many samples
$(x_1,y_1),\dots,(x_n,y_n) \in \bbR^p \times \{\pm 1\}$. 
This section does not assume Gaussianity;
much of our subsequent analysis turns out to not rely on it.
Here, we learn classifiers by minimizing the \emph{empirical} $\ell$-robust risk
with a decreasing loss functional $\ell$:
\oneortwo{%
\begin{equation} \label{eq:empirical:risk}
  \htR^{(n)}_{\ep,\|\cdot\|,\ell}(w,c)
  \coloneqq
  \frac{1}{n}
  \sum_{i=1}^n \sup_{\|\delta\| \leq \ep}
  \ell[\{w^\top(x_i+\delta)-c\} \cdot y_i]
  =
  \frac{1}{n}
  \sum_{i=1}^n
  \ell\{(w^\top x_i - c) \cdot y_i - \ep \|w\|_*\}
  ,
\end{equation}
}{%
\begin{align}
  &
  \htR^{(n)}_{\ep,\|\cdot\|,\ell}(w,c)
  \coloneqq
  \frac{1}{n}
  \sum_{i=1}^n \sup_{\|\delta\| \leq \ep}
  \ell[\{w^\top(x_i+\delta)-c\} \cdot y_i]
  \nonumber \\
  &\qquad=
  \frac{1}{n}
  \sum_{i=1}^n
  \ell\{(w^\top x_i - c) \cdot y_i - \ep \|w\|_*\}
  , \label{eq:empirical:risk}
\end{align}
}%
where $\|\cdot\|_*$ is the dual norm
and the equality holds because $\ell$ is decreasing;
see\revone{,} e.g., \citep{khim2018adversarial}.
Using the 0-1 loss $\smash{\brell(z) = I(z \leq 0)}$ yields
a non-convex and discontinuous empirical robust risk $\smash{\htR^{(n)}_{\ep,\|\cdot\|,\brell}(w,c)}$, 
making optimization challenging.
So one often uses \emph{convex} surrogates instead,
making $\smash{\htR^{(n)}_{\ep,\|\cdot\|,\ell}(w,c)}$ convex
(decreasing convex functions of concave functions are convex).

Hence, the empirical $\ell$-robust risk \cref{eq:empirical:risk}
can be efficiently minimized for $\ell_\infty$ adversaries
with convex decreasing surrogates such as the linear and hinge losses.  Given $n$ samples, this gives optimal weights $\htw_n \in \bbR^p$  and a classifier $\hty_n(x) = \sign(x^\top \htw_n)$, where throughout we will fix the bias $c = 0$.  We will study these classifiers for the linear and hinge losses.  

To study the tradeoff between standard and robust classifiers in finite samples, inspired by \citep{chen2020more}, in \cref{fig:opt-surrogate-gaps} we plot the mean gaps
between the population robust and standard risks,
i.e., $\robrisk(\hty_n,\ep,\|\cdot\|_\infty) - \stdrisk(\hty_n)$,
as a function of the number of samples $n$,
in the two-class Gaussian model \cref{eq:model:twoclass}.
If the gap is large, then the robust risk is much greater than the standard risk for the optimal robust classifiers, yielding an unfavorable tradeoff.
For the linear loss (constrained to $\|w\|_2 \leq 1$ to ensure boundedness),
the gap between the standard and robust risks
is large as $n$ grows, consistent with \citep{chen2020more}.
However under the hinge loss, regardless of the value of $\ep$,
the gap decreases, which had not been investigated in~\citep{chen2020more}.
This shows that the loss functional matters in robust risk minimization,
consistent with our landscape results and expanding on the observations in \citep{chen2020more}.

\oneortwo{%
\begin{figure}
\centering
\begin{minipage}{0.63\textwidth}
    \begin{subfigure}{0.48\textwidth} \centering
        \includegraphics[width=\textwidth]{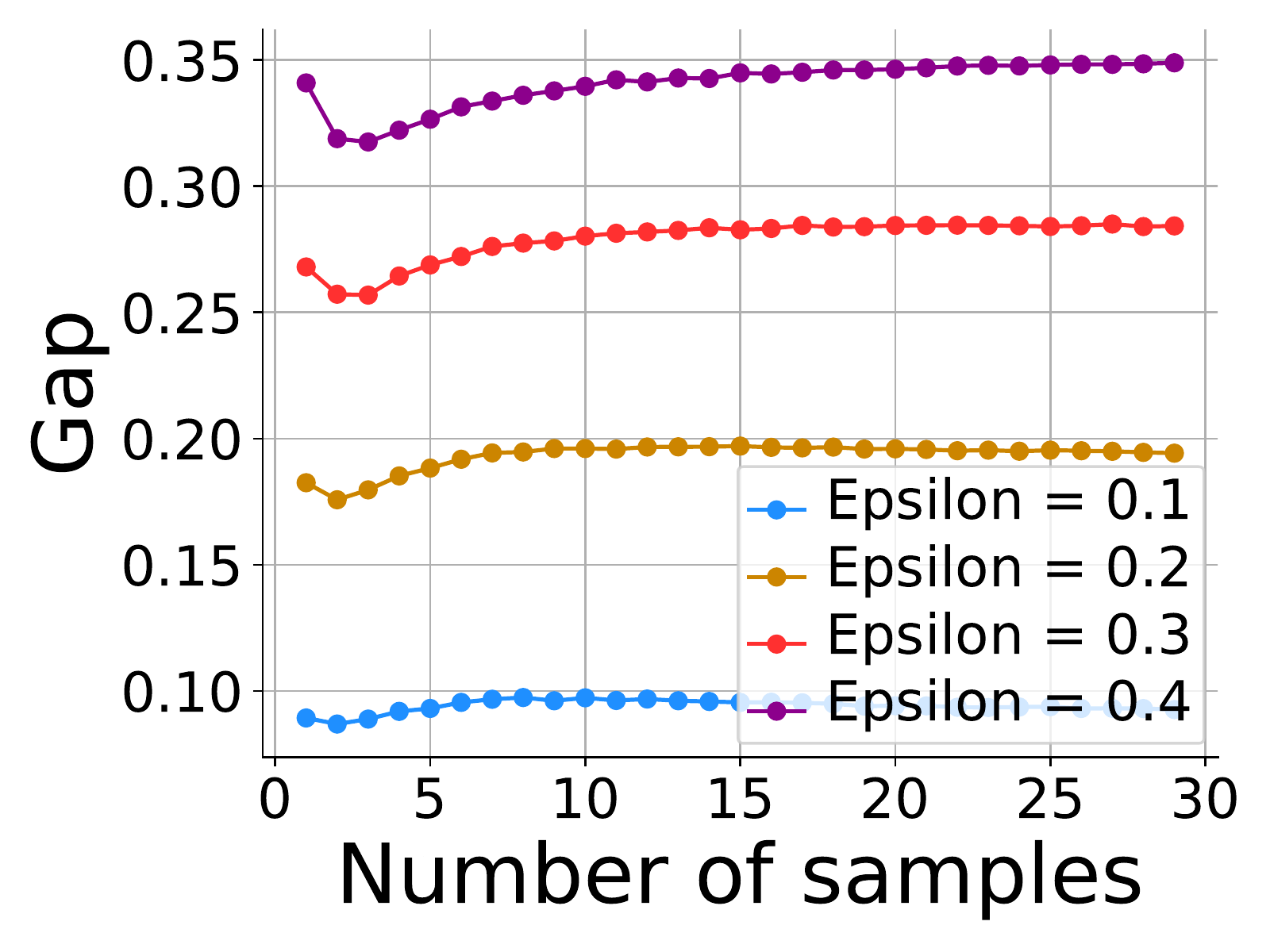}
        \caption{Linear surrogate loss.}
    \end{subfigure}\quad
    \begin{subfigure}{0.48\textwidth} \centering
        \includegraphics[width=\textwidth]{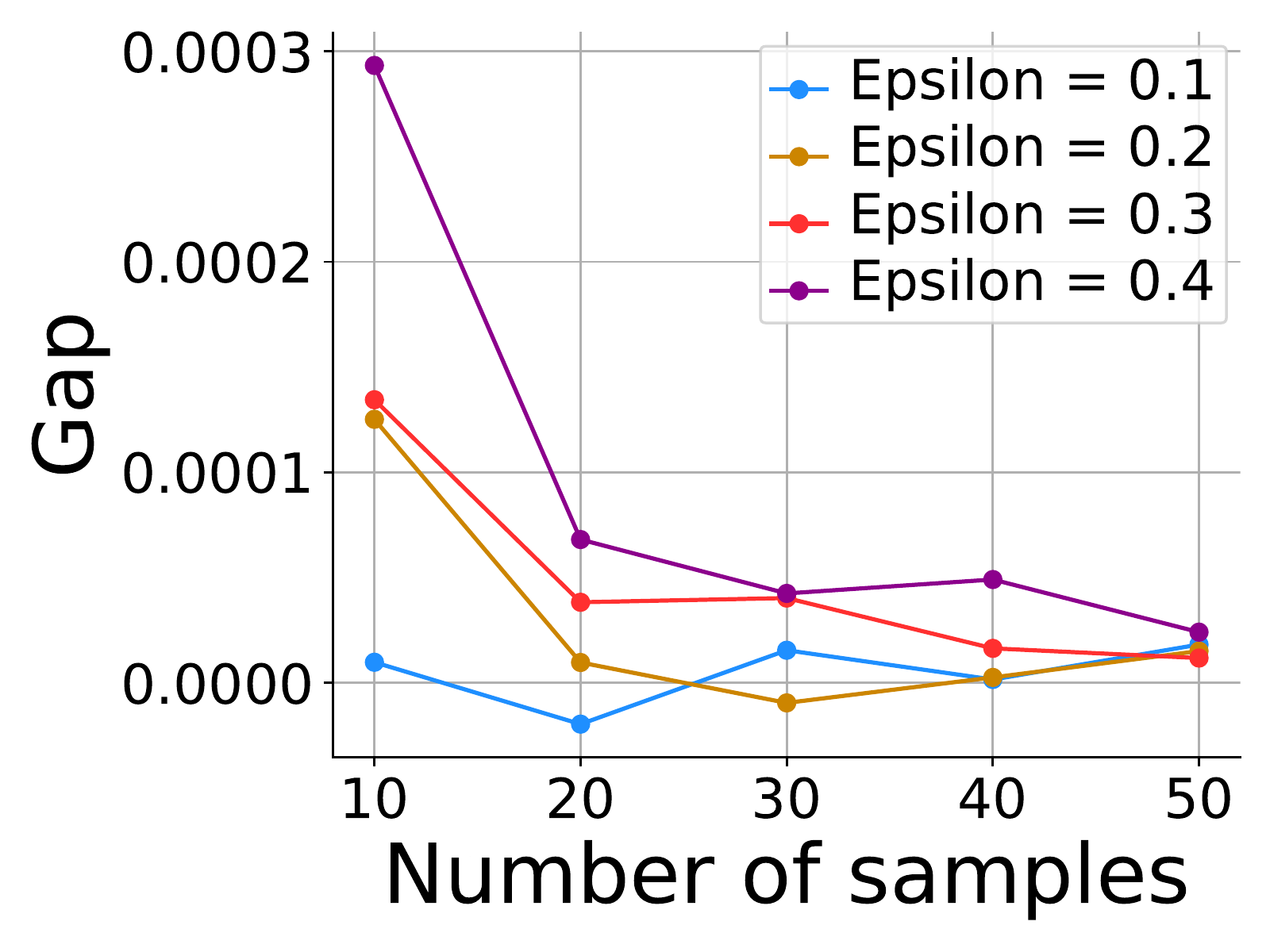}
        \caption{Hinge surrogate loss.}
    \end{subfigure}
    \caption{Mean gap between robust and standard risks
    of optimal finite-sample $\ell_\infty$ robust classifiers obtained via empirical robust risk minimization.  Here we set the dimension $p = 5$, mean vector $\mu = 1/2\cdot \mathbbm{1}$, and class proportion $\pi = 1/2$.}
    \label{fig:opt-surrogate-gaps}
\end{minipage} \quad
\begin{minipage}{0.33\textwidth} \centering
    \includegraphics[width=0.95\textwidth]{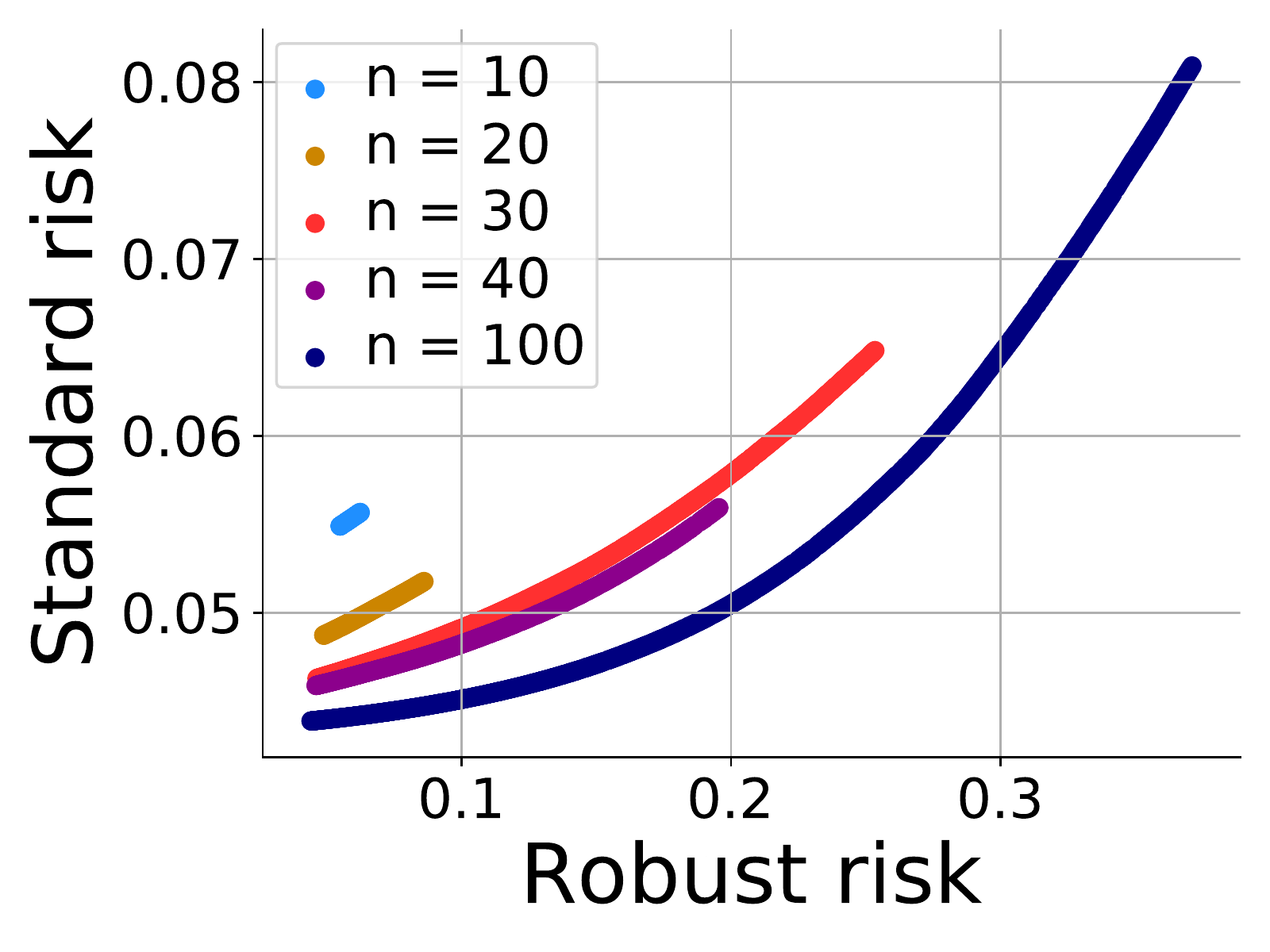}
    \caption{Trade-off between (population) standard and robust risk for $\ep\in[0,1]$ for classifiers obtained via Prop \ref{prop:empirical:risk:opt:linloss}.  Here we set $p = 5$, $\mu = 1/2\cdot \mathbbm{1}$, $\pi = 1/2$.}
\end{minipage}
\end{figure}
}{%
\begin{figure} \centering
  \begin{subfigure}{0.49\linewidth} \centering
    \includegraphics[width=\linewidth]{linear_gap.pdf}
    \caption{Linear surrogate loss.}
  \end{subfigure} \hfill
  \begin{subfigure}{0.49\linewidth} \centering
    \includegraphics[width=\linewidth]{hinge_gap.pdf}
    \caption{Hinge surrogate loss.}
  \end{subfigure}
  \caption{Mean gap between robust and standard risks
  of optimal finite-sample $\ell_\infty$ robust ERM classifiers.  Here we set the dimension $p = 5$, mean vector $\mu = 1/2\cdot \mathbbm{1}$, and class proportion $\pi = 1/2$.}
  \label{fig:opt-surrogate-gaps}
\end{figure}
\begin{figure} \centering
  \includegraphics[width=0.48\linewidth,trim=0 20pt 0 10pt]{normed-tradeoff.pdf}
  \caption{Trade-off between (population) standard and robust risk for $\ep\in[0,1]$ for classifiers obtained via Prop \ref{prop:empirical:risk:opt:linloss}.  Here we set $p = 5$, $\mu = 1/2\cdot \mathbbm{1}$, $\pi = 1/2$.}
\end{figure}
}%

\textbf{Optimal empirical robust classifiers.} The empirical risk-minimization perspective we have described gives an effective procedure for obtaining robust classifiers.  Moreover, in some special cases we can also derive explicit optimal empirical $\ell$-robust classifiers.  The next \lcnamecref{prop:empirical:risk:opt:linloss}
does so for $\ell_\infty$ adversaries with linear loss where we again drop the bias term, i.e.\revone{,} $c = 0$.

\medskip
\begin{proposition}
  \label{prop:empirical:risk:opt:linloss}
  The empirical $\ell_\infty$ robust risk 
  $\smash{\htR^{(n)}_{\ep,\|\cdot\|_\infty,\ell}(w,0)}$
  constrained to $\|w\|_2 \leq 1$
  is minimized for the linear loss $\ell(z) = -z$
  by $w^* \coloneqq \eta_\ep(\htmu) / \norm{\eta_\ep(\htmu)}_2$.  Here $\eta$ is the soft-thresholding operator \cref{eq:softthresholding}, which
  is applied element-wise to the empirical mean vector
  $\htmu \coloneqq (1/n) \sum_{i=1}^n y_i x_i \in \bbR^p$.
\end{proposition}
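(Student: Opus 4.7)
The plan is to rewrite the empirical robust risk in closed form, reduce it to a coordinate-wise decoupled problem, and then apply Cauchy--Schwarz to the soft-thresholded empirical mean.

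First, I would invoke the closed-form expression for the empirical $\ell$-robust risk recorded in \cref{eq:empirical:risk}, which holds because $\ell$ is decreasing and uses that the dual of $\|\cdot\|_\infty$ is $\|\cdot\|_1$. Substituting the linear loss $\ell(z) = -z$ together with $c = 0$ gives
\[
  \htR^{(n)}_{\ep,\|\cdot\|_\infty,\ell}(w,0)
  = \frac{1}{n}\sum_{i=1}^n \big\{ -y_i\, w^\top x_i + \ep \|w\|_1 \big\}
  = -w^\top \htmu + \ep \|w\|_1
  ,
\]
where $\htmu = (1/n)\sum_{i=1}^n y_i x_i$. Hence minimizing the empirical robust risk subject to $\|w\|_2 \leq 1$ is equivalent to maximizing $F(w) \coloneqq w^\top \htmu - \ep \|w\|_1$ over the same feasible set.

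Next, I would decouple $F(w) = \sum_{j=1}^p (w_j \htmu_j - \ep|w_j|)$ coordinate-wise. A short case analysis on $\sign(w_j)$ versus $\sign(\htmu_j)$ shows that the coordinate-wise contribution is at most $w_j\, \eta_\ep(\htmu_j)$ (equivalently, at most $|w_j|\, |\eta_\ep(\htmu_j)|$), with equality exactly when either $|\htmu_j| \leq \ep$ and $w_j = 0$, or $|\htmu_j| > \ep$ and $\sign(w_j) = \sign(\htmu_j)$; any other choice yields a strictly smaller (in fact non-positive) value. Verifying this reduction is the main technical step; it uses nothing more than the elementary bound $w_j \htmu_j - \ep|w_j| \leq |w_j|(|\htmu_j| - \ep)_+$, but it is what makes the $\ell_1$ penalty collapse into a simple inner product against $\eta_\ep(\htmu)$.

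With this reduction in hand, maximizing $F(w)$ is equivalent to
\[
  \max_{\|w\|_2 \leq 1} \; w^\top \eta_\ep(\htmu)
  ,
\]
since any maximizer automatically satisfies the sign and support conditions above. Cauchy--Schwarz then yields an upper bound of $\|\eta_\ep(\htmu)\|_2$, attained uniquely (in the nondegenerate case $\eta_\ep(\htmu) \neq 0$) at $w^* = \eta_\ep(\htmu)/\|\eta_\ep(\htmu)\|_2$, which is the claimed minimizer; the degenerate case $\eta_\ep(\htmu) = 0$ (i.e., $\|\htmu\|_\infty \leq \ep$) collapses to the trivial minimizer $w^* = 0$ and need only be handled as a brief aside.
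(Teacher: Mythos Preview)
Your proposal is correct and follows essentially the same approach as the paper: both reduce the empirical robust risk to maximizing $w^\top\htmu - \ep\|w\|_1$ over the unit $\ell_2$-ball, decompose coordinate-wise, use a sign/support argument to collapse the $\ell_1$ penalty into an inner product against $\eta_\ep(\htmu)$, and finish with Cauchy--Schwarz. Your coordinate-wise bound $w_j\htmu_j - \ep|w_j| \leq |w_j|(|\htmu_j|-\ep)_+$ is a slightly cleaner packaging of the paper's ``WLOG $u\succeq 0$'' reduction, and you additionally note the degenerate case $\eta_\ep(\htmu)=0$, but these are stylistic rather than substantive differences.
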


Interestingly,
these finite-sample weights can be viewed as plug-in estimates of
the \emph{population} optimal weights $\eta_\ep(\mu)$ from \cref{linflin}
for the two-class Gaussian model \cref{eq:model:twoclass},
where the \emph{empirical} mean $\htmu$
is substituted for the population mean $\mu$.  In Figure 4, we illustrate the tradeoff between population standard and robust risk for classifiers obtained via \cref{prop:empirical:risk:opt:linloss}.

\textbf{Convergence of robust risk minimization.}
Here
we quantify the concentration of
the empirical robust risk
$\smash{\htR^{(n)}_{\ep,\|\cdot\|,\brell}(w,c)}$
around its population analogue $\smash{\tlR_{\ep,\|\cdot\|,\brell}(w,c)}$,
where $\brell$ is again the 0-1 loss.  
Notably, in this result $x_i | y_i$
need not be Gaussian.
\medskip
\begin{theorem}[Convergence of empirical robust risk for linear classifiers]
\label{1dfs2}
For any $\delta > 0$,
\oneortwo{%
\begin{equation*}
  \Pr\Big\{
    \forall {(w,c) \in \bbR^p \times \bbR} \quad
    \Big| \htR^{(n)}_{\ep,\|\cdot\|,\brell}(w,c) - \tlR_{\ep,\|\cdot\|,\brell}(w,c) \Big|
    \leq \delta
  \Big\} \geq 1-\exp(C(p -\delta^2n))
\end{equation*}
}{%
\begin{align*}
  &\Pr\Big\{
    \forall {(w,c) \in \bbR^p \times \bbR} \quad
    \Big| \htR^{(n)}_{\ep,\|\cdot\|,\brell}(w,c) - \tlR_{\ep,\|\cdot\|,\brell}(w,c) \Big|
    \leq \delta
  \Big\}
  \\
  &\quad\geq 1-\exp(C(p -\delta^2n))
\end{align*}
}%
where $C$ is a constant independent of $n,d$, and the probability is with respect to the $n$ independent identically distributed samples
$(x_1,y_1),\dots,(x_n,y_n) \in \bbR^p \times \{\pm 1\}$.
\end{theorem}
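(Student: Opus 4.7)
The plan is to reduce the problem to uniform convergence over a VC class of sets in $\bbR^p \times \{\pm 1\}$ and then apply a standard Vapnik--Chervonenkis inequality. First, exploiting the fact that the 0-1 loss $\brell$ is decreasing (and the same dual-norm identity already used in \cref{sec:landscape,sec:finite:sample}), the worst-case $\ep$-perturbation can be computed in closed form: for any $(w,c)$,
\begin{equation*}
  \sup_{\|\delta\| \leq \ep} \brell\big[\{w^\top(x+\delta)-c\} \cdot y\big]
  = \brell\big\{y(w^\top x - c) - \ep \|w\|_*\big\}
  = I\big\{(x,y) \in A_{w,c}\big\},
\end{equation*}
where $A_{w,c} \coloneqq \{(x,y) \in \bbR^p \times \{\pm 1\} : y(w^\top x - c) \leq \ep \|w\|_*\}$. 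Thus $\htR^{(n)}_{\ep,\|\cdot\|,\brell}(w,c) = \htP_n(A_{w,c})$ and $\tlR_{\ep,\|\cdot\|,\brell}(w,c) = P(A_{w,c})$, where $\htP_n$ is the empirical measure of the $n$ i.i.d.\ samples and $P$ is the joint law of $(x,y)$. The desired bound is therefore the uniform deviation of $\htP_n$ from $P$ over the class $\clA \coloneqq \{A_{w,c} : w \in \bbR^p, c \in \bbR\}$.

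Next, I would bound the VC dimension of $\clA$. Splitting on $y$, each $A_{w,c}$ has the form $H^+_{w,c} \times \{+1\} \cup H^-_{w,c} \times \{-1\}$, where $H^+_{w,c} = \{x : w^\top x \leq c + \ep \|w\|_*\}$ and $H^-_{w,c} = \{x : w^\top x \geq c - \ep \|w\|_*\}$ are affine halfspaces in $\bbR^p$. Moreover, as $(w,c)$ ranges over $\bbR^p \times \bbR$, each family $\{H^\pm_{w,c}\}$ coincides with the full class of closed halfspaces in $\bbR^p$ (given any halfspace $\{x : v^\top x \leq b\}$, choose $w = v$ and $c = b - \ep\|v\|_*$). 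Since halfspaces in $\bbR^p$ have VC dimension $p+1$, a standard product/union argument (Sauer--Shelah applied separately to the $y=+1$ and $y=-1$ slices) gives growth function $S(\clA, n) \leq \{(en/(p+1))^{p+1}\}^2$, so $\clA$ has VC dimension $O(p)$.

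Then I would apply the classical Vapnik--Chervonenkis uniform deviation inequality: for any class $\clA$ of sets,
\begin{equation*}
  \Pr\Big\{\sup_{A \in \clA} |\htP_n(A) - P(A)| > \delta\Big\}
  \leq 8\, S(\clA, n)\, \exp(-n\delta^2/32).
\end{equation*}
Inserting the bound $S(\clA, n) \leq \exp\{C_1 p \log(n/p)\}$ from the previous paragraph yields
\begin{equation*}
  \Pr\Big\{\sup_{(w,c)} |\htR^{(n)}_{\ep,\|\cdot\|,\brell}(w,c) - \tlR_{\ep,\|\cdot\|,\brell}(w,c)| > \delta\Big\}
  \leq \exp\{C_2 p \log(n) - n\delta^2/32\},
\end{equation*}
which, after absorbing the $\log n$ factor into the universal constant $C$ (valid in the regime of interest where $n$ is not exponentially larger than $p$, or equivalently by tightening the VC bound), takes the form $\exp\{C(p - \delta^2 n)\}$ claimed in the theorem. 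Note in particular that Gaussianity of $x|y$ is nowhere used; only i.i.d.\ sampling is needed.

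The main obstacle in this plan is bookkeeping the logarithmic factor in the growth function so that the final bound matches the precise form stated in the theorem. The cleanest workaround is either to rephrase the bound as $\exp\{C(p\log n - \delta^2 n)\}$ (which is the standard VC form) or to restrict to the regime $n \geq p$ and absorb the $\log n$ factor into $C$; neither issue is substantive. A minor additional point is that the boundary case $w = 0$ reduces $A_{w,c}$ to $\{(x,y) : -cy \leq 0\}$, which is still a union of at most two halfspaces, so the VC argument is unaffected.
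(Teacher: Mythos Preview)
Your proposal is correct and follows essentially the same route as the paper: reduce the robust 0-1 risk to indicators of halfspace-type sets (via the dual-norm identity), observe that the relevant class has VC dimension $O(p)$, and invoke a standard VC uniform-deviation inequality. The only packaging difference is that you work directly on the joint space $\bbR^p\times\{\pm1\}$ with the sets $A_{w,c}$, whereas the paper conditions on $y$ and bounds $P_{n\pm}(S_{\mp1}+B_\ep)$ and $\pi_n$ separately before recombining; both amount to the same halfspace VC argument. Your candid flag about the $\log n$ factor is apt: the paper's proof invokes ``standard arguments from uniform-convergence theory'' without tracking this factor either, so the precise form $\exp\{C(p-\delta^2 n)\}$ is stated somewhat informally in both treatments.
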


Put another way, the empirical robust risk concentrates
\emph{uniformly} across all linear classifiers
at a rate $\smash{O(\sqrt{p/n})}$.
The proof uses that the $\ep$-expansions of half-spaces are still half-spaces, enabling arguments by VC-dimension.
Characterizing more general classifiers is highly nontrivial
since the $\ep$-expansion of a finite VC dimension hypothesis class
can have infinite VC dimension \citep{montasser2019vc}.
However, for one-dimensional data it turns out that
we can generalize
to classifiers that assign \emph{finite unions of intervals} to each class.

\medskip
\begin{theorem}[Convergence rate of empirical robust risk in 1D]
\label{1dfs}
In the setting of \cref{1dfs2} 
for any $\delta>0$, we have uniformly over all classifiers $\hty$
whose classification regions are unions of at most $2k$ intervals that
$|R_n(\hty,\ep) - R(\hat y,\ep)| \le \delta$
with probability at least $1-4\exp(-2n\delta^2/k^2)$, for the empirical robust risk $R_n$ and the population robust risk $R$.
\end{theorem}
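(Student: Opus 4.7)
The plan is to reduce the uniform-convergence claim to the classical Dvoretzky-Kiefer-Wolfowitz (DKW) inequality for empirical CDFs. The key geometric observation in one dimension is that if the decision region $S_y$ is a union of at most $2k$ intervals, then the corresponding \emph{robust misclassification set} $S_y^c + B_\ep$, with $B_\ep = [-\ep,\ep]$, is again a union of a comparable number of disjoint intervals: taking the complement and then expanding each interval $[a,b]$ to $[a-\ep,b+\ep]$ can only merge intervals, never split them. Writing
\begin{equation*}
  S_y^c + B_\ep = \bigsqcup_{j=1}^{m_y} [a_j^{(y)},b_j^{(y)}]
  \quad\text{with}\quad m_y \leq 2k
\end{equation*}
reduces the set-theoretic probability to a sum of CDF increments.

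Indeed, the class-conditional robust misclassification probability becomes
\begin{equation*}
  \Pr_{x|y}\{S_y^c + B_\ep\}
  = \sum_{j=1}^{m_y} \{F_y(b_j^{(y)}) - F_y(a_j^{(y)})\},
\end{equation*}
and the corresponding empirical quantity is the same expression with $F_y$ replaced by the empirical CDF $\hat F_y$. Subtracting and applying the triangle inequality interval by interval gives
\begin{equation*}
  \Big| \tfrac{1}{n_y}\sum_{i:y_i=y} I\{x_i \in S_y^c + B_\ep\} - \Pr_{x|y}\{S_y^c + B_\ep\} \Big|
  \leq 4k \sup_{t \in \bbR} |\hat F_y(t) - F_y(t)|.
\end{equation*}
Crucially, the right-hand side does \emph{not} depend on the particular classifier---only on $k$---which is precisely what yields uniformity across the entire hypothesis class.

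I would then finish by decomposing $R_n(\hat y,\ep) - R(\hat y,\ep)$ into (i) a class-imbalance piece, controlled via Hoeffding's inequality applied to $|n_+/n - \pi|$, and (ii) two class-conditional pieces, each controlled by the bound above. The DKW inequality states $\Pr\{\sup_t |\hat F_y(t) - F_y(t)| > \eta\} \leq 2\exp(-2n_y \eta^2)$; choosing $\eta$ of order $\delta/k$ and taking a union bound over the two classes produces the claimed exponential tail (up to constants). The principal bookkeeping obstacle is handling the random class sizes $n_\pm$ (and the at-most additive ambiguity of whether $m_y$ is exactly $2k$ or $2k + O(1)$ after complementation); both are routine, handled by an auxiliary Hoeffding step and by absorbing constants, and neither affects the $\exp(-cn\delta^2/k^2)$ scaling.
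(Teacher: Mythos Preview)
Your approach is essentially the same as the paper's: both reduce to the Dvoretzky--Kiefer--Wolfowitz inequality by observing that the $\ep$-expansion of a union of intervals is again a union of (no more) intervals, so the relevant probabilities are telescoping sums of CDF increments bounded uniformly by $O(k)\cdot\sup_t|\hat F(t)-F(t)|$. You are more explicit than the paper about the class-imbalance term $|\pi_n-\pi|$ and the random conditional sample sizes $n_\pm$, which the paper's proof elides; this is sound bookkeeping but does not change the argument.
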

Thus uniform concentration for $k$ intervals occurs at rate $\smash{O(k/\sqrt{n})}$.
The proof is based on the Dvoretzky–Kiefer–Wolfowitz inequality.

\revone{
\begin{proof}[Proof of \cref{1dfs}]

As we have previously argued, the robust risk can be expressed as follows
$$R(\hat y,\ep)
=
P(y=1)P_{x|y=1}(S_{-1}+B_\ep)
+P(y=-1)P_{x|y=-1}(S_{1}+B_\ep).
$$
Now let $(x_i,y_i)$ for $i=1,\ldots,n$ be sampled iid from a joint distribution $P_{x,y}$ for $i=1,\ldots,n$. Let the fraction of 1-s be $\pi_n\in[0,1]$. Let $P_{n\pm}$ be the empirical distributions of $x_i$ given $y_i=1$ and $-1$, respectively. We can write the finite sample robust risk as 
$$R_n(\hat y,\ep)
=
\pi_n \cdot P_{n+}(S_{-1}+B_\ep)
+(1-\pi_n) \cdot P_{n-}(S_{1}+B_\ep).
$$
Now  $P_{n\pm}$ are empirical distributions that will converge to the limiting distributions under certain conditions.

Consider classifiers $\hat y$ whose decision boundaries are at most $k$ points. For instance, if $k=1$, then these are linear classifiers. Then $S_{\pm 1}$ each consist of a union of at most $j=\lceil k/2 \rceil$ disjoint intervals (finite or semi-infinite). Let $I_j$ denote the collection of all such subsets of the real line, unions of at most $j$ disjoint finite or semi-infinite intervals. Thus $S_{\pm 1} \in I_{j}$. Critically, the $\ep$-expansions also have this property: by expanding the intervals, we still get intervals, merging them as needed. Thus, $S_{\pm 1}+B_\ep \in I_{j}$.  

Now, the classical Dvoretzky–Kiefer–Wolfowitz inequality \citep{dvoretzky1956asymptotic,kosorok2007introduction,shorack2009empirical} states the following. Let $F_n$ be the CDF of $n$ iid samples with CDF $F$.  For every $\delta>0$, 
$$\Pr(\sup_x |F_n(x)-F(x)|>\delta)\le 2\exp(-2n\delta^2).$$
Let $\delta_n(x) = P_{n+}(-\infty,x]-P_{+}(-\infty,x]$. Consider the event $\sup_{c}|\delta_n(c)|\le \delta$, which happens with probability at least $1-2\exp(-2n\delta^2)$. On this event, we have
\begin{align*}
&|P_{n+}(S_{-1}+B_\ep)-P_{+}(S_{-1}+B_\ep)| \le
\sup_{A\in I_{j}}|P_{n+}(A)-P_{+}(A)|\\
&=
\sup_{c_1<c_2<\ldots < c_{j}}|\delta_n(c_1)-\delta_n(c_2)+\delta_n(c_3)-\ldots+(-1)^{j-1}\delta_n(c_j)|\\
&\le j\cdot
\sup_{c}|\delta_n(c)| \le j\delta.
\end{align*}
A similar argument applies to $S_1$. Then, on the intersection of the two events, which happens with probability $1-4\exp(-2n\delta^2)$, we find that
\begin{align*}
|R_n(\hat y,\ep) - R(\hat y,\ep)|
&\le \max_i |P_{n+}(S_{i}+B_\ep)-P_{+}(S_{i}+B_\ep)|\le j\delta.
\end{align*}
as was to be shown.
\end{proof}
}


\section{Conclusion}

In this paper, we studied the tradeoffs inherent to robust classification in the fundamental setting of two- and three-class Gaussian classification models.  In particular, we leveraged that half-spaces are extremal sets with respect to Gaussian isoperimetry to derive $\ell_2$ and $\ell_\infty$ optimal robust classifiers in the imbalanced data setting.  This analysis revealed a fundamental trade-off between accuracy and robustness, which depends on the level of class imbalance in the data.  Indeed, we showed that in this setting, no classifier minimizes both the standard and robust risks simultaneously.  Furthermore, we analyzed the optimization landscape of the robust risk, demonstrating that the optimizers of various convex surrogate losses coincide with the nonconvex robust 0-1 loss.  Finally, we connected our results to empirical robust risk minimization by providing a finite-sample analysis with respect to the 0-1 and surrogate loss functionals.

\bibliographystyle{plainnat-abbrev}
\bibliography{references}

\clearpage
\appendix
\begin{center} \Large \bf
  Appendices
\end{center}


\section{Extensions of \cref{orc}}
\label{app:opt:l2:twoclass:proofs}

\subsection{Connections to randomized classifiers}
\label{app:randomized}

Adding random noise has been used as a heuristic to obtain robust classifiers \cite[see e.g][]{xie2018mitigating,athalye2017synthesizing}. While it has been shown to be attackable via gradient based methods \citep{athalye2018obfuscated}, we can still study it as a heuristic. It turns out that it has connections to optimal robust classifiers in our models.

 In this section, we suppose that the noise level in the data is $\sigma^2$, so $x_i|y_i\sim \N(y\mu,\sigma^2 I_p)$. Suppose we add noise $Z\sim \N(0,\tau^2 I_p)$, for some $\tau^2 >0$, and then train a standard classifier. Note that the Bayes-optimal classifier depends only on the SNR $s(\mu,\sigma^2) = \|\mu\|_2/\sigma$. Thus we get that the Bayes-optimal classifier with noise is the optimal $\ep$-robust classifier if (assuming $\ep<\|\mu\|_2$)
$$s(\|\mu\|_2,\sigma^2+\tau^2)=s(\|\mu\|_2-\ep,\sigma^2)$$
or equivalently if
$$\tau = \sigma \sqrt{\frac{\|\mu\|_2^2}{(\|\mu\|_2-\ep)^2}-1}.$$
Put it another way, our results show that robust classifiers reduce the signal strength. Equivalently, randomized classifiers increase the noise level. However, note that for this the added noise level has to be tuned very carefully.

\subsection{Extension to weighted combinations}
\label{app:weighted:risk}

Given a distribution $Q$ over $\ep$, we can try to minimize $R(\hat y,Q) = \E_{\ep\sim Q} R(\hat y,\ep)$. This leads to classifiers that can achieve various trade-offs between robustness to different sizes of perturbations. For instance, we can minimize $R(\hat y,0)+\lambda \cdot R(\hat y,\ep)$ for some $\lambda>0$. 

It is readily verified that \cref{thm:opt:twoclass:admissible} still holds for $R(\hat y,Q)$, as long as $Q$ is supported on $[0,\|\mu\|)$. This is because the linear classifier $\hat y$ found in the proof of that result does not depend on $\ep$, and reduces the $\ep$ robust risk for all $\ep<\|\mu\|$. Hence, linear classifiers are admissible for $R(\hat y,Q)$.

However, in general there is no analytical expression for the optimal linear classifier. Following \cref{orc}, it is readily verified that  the threshold $c$ in the optimal linear classifier is the unique solution of the equation $\E_{\mu'}\exp(-\mu'^2/2)[\pi \exp(c\mu') + (1-\pi) \exp(-c\mu')]=0$, where $\mu' = \mu-\ep$, and $\ep\sim Q$.

As before, it is enough to solve the 1-D problem. Thus, we want to find the value of the threshold $c$ that minimizes
  \begin{align*}
  R(\hat y_c,Q)
  &=
  P(y=1) \E_{\ep\sim Q} P_{\mu-\ep}(x\le c)
  +P(y=-1)\E_{\ep\sim Q} P_{-\mu+\ep}(x\ge c)\\
    &=
  \pi\cdot \E_{\ep\sim Q} P_{\mu-\ep}(x\le c)
  +(1-\pi)\cdot \E_{\ep\sim Q} P_{-\mu+\ep}(x\ge c)\\
      &=
  \pi\cdot \E_{\mu'} P_{\mu'}(x\le c)
  +(1-\pi)\cdot \E_{\mu'} P_{-\mu'}(x\ge c).
  \end{align*}
Differentiating with respect to $c$, we find that
  \begin{align*}
  R'(c) &\coloneqq dR(\hat y_c,Q)/dc
  \\
  &=
  \pi\cdot \E_{\mu'} \phi(c-\mu')
  -(1-\pi)\cdot \E_{\mu'} \phi(c+\mu')\\
    &=
  (2\pi)^{-1/2}[\pi\cdot \E_{\mu'} \exp[-(c-\mu')^2/2]
  -(1-\pi)\cdot \E_{\mu'} \exp[-(c-\mu')^2/2]].
  \end{align*}
Up to the factor $(2\pi)^{-1/2}$, and also factoring out the term $\exp[-c^2/2]$, which cannot be zero, we find that $R'(c)=0$ iff
  \begin{align*}
  a(c)=\E_{\mu'}  \exp[-\mu'^2/2][\pi\cdot  \exp(c\mu')
  -(1-\pi)\cdot\exp(-c\mu')]=0.
  \end{align*}
  This is exactly the claimed equation for $c$. Now, it is not hard to see that $a(c)$ is strictly increasing with limits $\pm\infty$ at $\pm\infty$. Hence, the solution $c$ exists and is unique.

\subsection{Data with a general covariance}
\label{app:l2:gencov}

A natural question is whether optimality extends to data with general covariance.  To this end, suppose that the data is distributed according to the two-class Gaussian model with an invertible covariance matrix $\Sigma$ so that $x_i\sim\mathcal{N}(y_i\mu, \Sigma)$.  In this setting, we can study the setting when the difference between the population means aligns with the smallest eigenvectors of $\Sigma$.

\medskip
\begin{theorem}[Optimal robust classifiers, general covariance]\label{gencov}  Consider finding $\ell_2$ robust classifiers in the two-class Gaussian classification problem with data $(x_i,y_i)$, $i=1,\ldots,n$, where $y_i=\pm 1$, $x_i\sim \N(y_i \mu,\Sigma)$, where $\Sigma$ is an invertible covariance matrix.  Let $V$ be the span of eigenvectors of $\Sigma$ corresponding to its smallest eigenvalue, and note that this is a nonempty linear space. Suppose that $\mu \in V$.  The optimal $\ell_2$ robust classifiers are linear classifiers
\begin{align*}
    \hat y^*(x)=\sign\left(x^\top \mu\left[1-\frac{\ep}{\|\mu\|}\right]-\lambda^{1/2}\frac{q}{2}\right),
\end{align*}
where $\lambda$ is the smallest eigenvalue of $\Sigma$, and the other symbols are as in \cref{orc}.
\label{thm:opt:twoclass:gencov}
\end{theorem}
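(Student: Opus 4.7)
The plan is to reduce \cref{gencov} to the isotropic setting of \cref{orc} by exploiting the assumption $\mu \in V$. Decompose $\bbR^p = V \oplus V^\perp$ and write $x = x_V + x_{V^\perp}$, where $x_V$ and $x_{V^\perp}$ denote the orthogonal projections of $x$ onto $V$ and $V^\perp$. Because $\mu \in V$ and because eigenspaces of $\Sigma$ associated to distinct eigenvalues are orthogonal, $x_V \mid y \sim \clN(y\mu, \lambda I_V)$ is isotropic within $V$ with variance $\lambda$, while $x_{V^\perp} \mid y \sim \clN(0, \Sigma_{V^\perp})$ is independent of both $y$ and $x_V$. All discriminative information about $y$ is thus contained in $x_V$.

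Step one reduces to classifiers depending only on $x_V$. Fix any classifier $\hty \colon \bbR^p \to \{\pm 1\}$, and for each $\xi \in V^\perp$ define $\hty_\xi(x) \coloneqq \hty(x_V + \xi)$, which depends on $x$ only through $x_V$. Since $\delta \in B_\ep$ implies $P_V\delta$ lies in the $\ell_2$-ball of radius $\ep$ within $V$, restricting the adversary to $V$-perturbations only weakens it; combining this with Fubini and the independence of $x_{V^\perp}$ from $y$ yields
\begin{equation*}
  \E_{\xi \sim \clN(0,\Sigma_{V^\perp})} \robrisk(\hty_\xi, \ep, \|\cdot\|_2)
  \leq \robrisk(\hty, \ep, \|\cdot\|_2).
\end{equation*}
Averaging produces some $\xi^\star \in V^\perp$ with $\robrisk(\hty_{\xi^\star}, \ep, \|\cdot\|_2) \leq \robrisk(\hty, \ep, \|\cdot\|_2)$, and because $\hty_{\xi^\star}$ ignores $x_{V^\perp}$ the adversary gains nothing from budget in $V^\perp$, so its $\bbR^p$-robust risk coincides with the robust risk of its restriction to $V$ under adversarial radius $\ep$. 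Hence the optimum may be taken among classifiers depending only on $x_V$.

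Step two applies \cref{orc} on $V$. After rescaling $z \coloneqq x_V/\sqrt{\lambda}$, the restricted problem becomes the standard isotropic two-class Gaussian model with means $\pm \mu/\sqrt{\lambda}$, unit covariance, and $\ell_2$-adversarial radius $\ep/\sqrt{\lambda}$, which is strictly less than $\|\mu/\sqrt{\lambda}\|_2 = \|\mu\|_2/\sqrt{\lambda}$ by the assumption $\ep < \|\mu\|_2$. \Cref{orc} then yields an optimal linear classifier in $z$; translating back via $z = x_V/\sqrt{\lambda}$ together with $x_V^\top\mu = x^\top\mu$ (the latter because $\mu \in V$) recovers the linear classifier stated in the theorem, up to a positive rescaling of the argument of $\sign$.

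The main obstacle is step one, which crucially uses $\mu \in V$: without this assumption $x_{V^\perp}$ would carry signal and marginalizing over $V^\perp$ would be genuinely lossy. A natural alternative, mirroring the strategy of \cref{thm:opt:twoclass:admissible}, would be to invoke a Gaussian isoperimetric inequality for anisotropic $\Sigma$ with $\ell_2$-enlargement; however, the half-spaces extremal for that enlargement are perpendicular to the \emph{maximum}-variance eigenvector of $\Sigma$ rather than to $\mu$, so a direct isoperimetric argument would not pick out the correct direction without substantial additional work. The marginalization argument in step one sidesteps this difficulty by exploiting that $V^\perp$ is truly uninformative about $y$ under the hypothesis $\mu \in V$.
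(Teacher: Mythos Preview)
Your approach is correct and takes a genuinely different route from the paper. The paper whitens via $z=\Sigma^{-1/2}(x-\mu)+\mu$, turning the $\ell_2$-ball into an ellipsoid, and then tries to rerun the isoperimetry/Neyman--Pearson argument of \cref{thm:opt:twoclass:admissible} in the whitened coordinates. You instead split $\bbR^p=V\oplus V^\perp$, average over the $V^\perp$-component to reduce losslessly to classifiers depending only on $x_V$, and then invoke \cref{orc} directly on the isotropic subspace $V$. Your reduction is clean: the distributional identity $(x_V,\xi,y)\stackrel{d}{=}(x_V,x_{V^\perp},y)$ together with the fact that restricting the adversary to $V$ only weakens it yields the averaging inequality immediately.

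Your closing remark about the anisotropic-isoperimetry alternative is more than motivation---it identifies a real issue in the paper's sketch. For $\clN(\mu,\Sigma)$ with Euclidean enlargement, a halfspace $\{v^\top x\le c\}$ of fixed probability has enlarged probability $\Phi(\Phi^{-1}(\alpha)+\ep/\sqrt{v^\top\Sigma v})$, minimized when $v$ lies in the eigenspace of the \emph{largest} eigenvalue; equivalently, in whitened coordinates the expansion is $\ep\|\Sigma^{-1/2}v\|$, not the $\ep\|\Sigma^{1/2}v\|$ written in the paper, which flips the conclusion. Hence the isoperimetric direction and the Neyman--Pearson direction $\Sigma^{-1}\mu\propto\mu$ do not coincide when $\mu\in V$, and the paper's argument does not go through as written. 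Your marginalization bypasses this by first discarding $V^\perp$, after which the problem on $V$ is genuinely isotropic and \cref{orc} applies verbatim.

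One small point: carrying your Step~2 through explicitly gives
\[
\sign\!\Bigl(x^\top\mu\bigl(1-\ep/\|\mu\|\bigr)-\lambda\,q/2\Bigr),
\]
with $\lambda$ rather than the $\lambda^{1/2}$ in the theorem statement (and this is also what the Fisher discriminant for $\clN(\pm\mu,\Sigma)$ gives at $\ep=0$). This discrepancy is not removable by a positive rescaling of the whole argument of $\sign$, so your phrase ``up to a positive rescaling'' is slightly loose; the mismatch is with the stated constant, not with your derivation.
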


This theorem generalizes the result of \cref{orc}.  When the covariance matrix $\Sigma = \sigma^2 I_p$ is diagonal, the optimal robust $\ell_2$ classifier from \cref{thm:opt:twoclass:gencov} is identical to that from \cref{orc}. 

\begin{proof}[Proof of \cref{gencov}]
The proof proceeds along the lines of \cref{orc}, checking that it extends to this setting. We will only sketch the key steps.

We define the $\ep$-expansion of a set $A$ in a norm $\|\cdot\|$ to be the Minkowski sum $A+B_\ep = \{a+b: \, a\in A,\, b\in B_\ep\}$, where $B_\ep=\{x: \, \|x\| \le \ep\}$ is the $\ep$-ball in the given norm.

The key insight is that $\ep$-expansions in $\ell_2$ norm will turn into $\ep$-expansions in the Mahalanobis metric $d_\Sigma(a,b) = [(a-b)^\top \Sigma (a-b)]^{1/2}$. To put it another way, by changing coordinates from $x\to \Sigma^{-1/2}x$, the $\ell_2$ ball transforms to a Mahalanobis ball, i.e., an ellipsoid. We explain this below.

The first critical step was the existence of optimal linear classifiers (\cref{thm:opt:twoclass:admissible}). For any fixed set $S$, the key is to be able to solve the problem \eqref{gcmsol}. This requires us to find the optimal isoperimetric set, i.e., the one with minimal probability under $\ep$-expansion, with respect to the probability measure $\N(\mu,\Sigma)$. 

Now, letting $z-\mu = \Sigma^{-1/2}(x-\mu)$, we can write 
$$P_{x\sim \N(\mu,\Sigma)}(x\in S) = P_{z\sim \N(\mu,I)}(z \in \mu + \Sigma^{-1/2}(S-\mu))$$
and
$$P_{x\sim \N(\mu,\Sigma)}(x\in S+B_\ep) = P_{z\sim \N(\mu,I)}(z \in \mu + \Sigma^{-1/2}(S+B_\ep-\mu)).$$
Let $S' = \mu + \Sigma^{-1/2}(S-\mu)$.
We can write 
$$\mu + \Sigma^{-1/2}(S+B_\ep-\mu) = S' + \Sigma^{-1/2}B_\ep.$$
Now, $\Sigma^{-1/2}B_\ep$ is precisely the $\ep$-ball in the Mahalanobis metric, $\Sigma^{-1/2}B_\ep = \{\Sigma^{-1/2}x:\|x\|_2\le \ep\} = \{z: \|z\|_{\Sigma} \le \ep\}$. Let us call this set $B_{\Sigma,\ep}$.

So problem \eqref{gcmsol} can equivalently be written as
\begin{align*}
\min_{S'} P_{z\sim \N(\mu,I)}(S'+B_{\Sigma,\ep})\,\,
s.t.\,\, P_{z\sim \N(\mu,I)}(S')=\alpha.
\end{align*}

Consider any set $S'$ of the form $v^\top z\le c$, with $\|v\|_2=1$ (i.e., a hyperplane). Then (as can be readily seen by drawing a picture)
$$S'+B_{\Sigma,\ep} = \{z+z': v^\top z\le c, \|\Sigma^{-1/2} z'\|_2\le \ep\}$$
equals the set 
$$S'' = \{v^\top z\le c + \ep\cdot\|\Sigma^{1/2} v\|_2\}.$$
For fixed $c,\ep,\Sigma$, the size of this set (with respect to any probability measure absolutely continuous with respect to Lebesgue measure) can be minimized in $v$ by taking $v$ to lie in the span of the eigenvectors of $\Sigma$ with smallest eigenvalue. 

Thus the sets $S'$ minimizing the expansion are hyperplanes orthogonal to the eigenvectors with \emph{smallest} eigenvalues of $\Sigma$. Then $S = \Sigma^{1/2}\cdot \{z: v^\top z\le c\} = \{x: v^\top \Sigma^{-1/2} x\le c'\}$. Now, since $v$ is an eigenvector of $\Sigma$, i.e., $\Sigma v = \lambda v$, we have that $v'=\Sigma^{-1/2} v  = \lambda^{-1/2} v$ is still a scaled version of $v$. Hence, even in the original coordinate system, the sets $S'$ have the same interpretation.

Next, the second critical step in the proof was to reduce the problem to a one-dimensional classification along the direction of $\mu$. 
This is the case if the eigenvectors align with $\mu$. In that case, after the one-dimensional projection, we have the same problem that we already solved in Theorem \ref{orc}. One can easily verify that the remaining steps go through. This finishes the proof.
\end{proof}

\subsection{Data on a low-dimensional subspace}
\label{app:lowdim}

We can extend the above analysis to low-dimensional data. Suppose that the data $x_i,y_i$ live in a lower dimensional linear space. For simplicity, suppose that $x_i=(x_i^1,0_d)$, so only the first $p'$ coordinates are nonzero, and the remaining $d \revone{\coloneqq} p-p'$ dimensions are zero. This is a model of a low-dimensional manifold. For rotationally invariant problems like $\ell_2$ norm robustness, we can consider instead any low-dimensional affine space, and the same conclusions apply. However, for non-rotationally invariant problems like $\ell_\infty$ norm robustness (studied in detail later), the conclusions only apply to this specific space.

Intuitively, decision boundaries that are not perpendicular to the manifold $M=(x,0_d)$ can have a larger ``expansion'' projected down into the manifold. Hence, their adversarial risk can be larger. This will imply that we can restrict to decision boundaries perpendicular to the manifold, and thus reduce the problem to the previous case.  To this end, we provide the following lemma. We emphasize that this is a purely geometric fact, and holds for any classification problem (not just Gaussian), and any norm (not just $\ell_2$.)

\medskip
\begin{lemma}[Low-dimensional classifiers are admissible]\label{low-dim} Consider any classification problem and robust classifiers for the low-dimensional data model above. For any classifier $\hat y$, the low-dimensional classifier  $\hat y^*(x_i^1,x_i^2)=\hat y(x_i^1,0_d)$ has robust risk is less than or equal to that of the original classifier with respect to any norm $\|\cdot\|$:
$$R(\hat y,\ep)\ge R(\hat y^*,\ep).$$
\end{lemma}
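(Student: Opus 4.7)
The plan is to prove the risk inequality by a pointwise comparison of the robust misclassification sets. All data is supported on the coordinate subspace $M \coloneqq \mathbb{R}^{p'} \times \{0_d\}$, and by construction $\hat y^*$ agrees with $\hat y$ on $M$ (so the standard risks already coincide). Hence for each class label $y$, it suffices to show the set inclusion that any $(x^1, 0_d) \in M$ lying in the robust misclassification set of $\hat y^*$ also lies in the robust misclassification set of $\hat y$. Integrating this inclusion against the class-conditional distribution of $x^1$ and summing over $y$ then yields $R(\hat y^*, \ep) \leq R(\hat y, \ep)$.

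For the set inclusion, I fix $y$ and a point $(x^1, 0_d) \in M$, and suppose there exists $\delta = (\delta^1, \delta^2) \in \mathbb{R}^{p'} \times \mathbb{R}^d$ with $\|\delta\| \leq \ep$ satisfying $\hat y^*((x^1, 0_d) + \delta) \neq y$. By the definition of $\hat y^*$, this reads $\hat y(x^1 + \delta^1, 0_d) \neq y$. The natural candidate perturbation for $\hat y$ at the same data point is $\delta' \coloneqq (\delta^1, 0_d)$: it reproduces the misclassification, $\hat y((x^1, 0_d) + \delta') = \hat y(x^1 + \delta^1, 0_d) \neq y$, and it remains only to show it is admissible, i.e., $\|\delta'\| \leq \ep$.

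This norm bound, which reduces to $\|(\delta^1, 0_d)\| \leq \|(\delta^1, \delta^2)\|$, is the main (geometric) obstacle and is the only step that uses anything beyond definitions. I would handle it via a convexity-plus-symmetry argument on the unit ball $B_\ep$. For any norm invariant under sign flips of the last $d$ coordinates (in particular all $\ell_p$ norms, and specifically the $\ell_2$ and $\ell_\infty$ norms considered in this paper), $(\delta^1, -\delta^2) \in B_\ep$ whenever $(\delta^1, \delta^2) \in B_\ep$; convexity of $B_\ep$ then forces the midpoint
\[
  (\delta^1, 0_d) = \tfrac{1}{2}(\delta^1, \delta^2) + \tfrac{1}{2}(\delta^1, -\delta^2)
\]
into $B_\ep$ as well, so $\|\delta'\| \leq \ep$.

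Once the pointwise set inclusion is established, the risk inequality follows by monotonicity of the class-conditional probabilities and a weighted sum over classes. The argument is purely geometric: it uses nothing about the class-conditional distributions beyond their being supported on $M$, matching the lemma's claim that it applies to any classification problem (and, modulo the coordinate-reflection invariance noted above, any norm of interest here).
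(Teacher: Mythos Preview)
Your proof is correct and follows essentially the same approach as the paper's: establish a pointwise inclusion of robust misclassification sets on $M$ and then integrate against the class-conditional distributions. You are in fact more careful than the paper about the key geometric step $\|(\delta^1,0_d)\|\le\|(\delta^1,\delta^2)\|$, which the paper asserts for ``any norm'' but which genuinely requires the coordinate-reflection symmetry you invoke (holding in particular for all $\ell_p$ norms).
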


The above claim shows that for low-dimensional data as above, even if we have the data represented as full-length vectors, we can restrict to classifiers that depend only on the first coordinates. This reduces the problem to the one considered before, and all the results derived above are applicable.  In particular, for a low-dimensional two-class Gaussian mixture, low-dimensional linear classifiers are optimal, under the previous conditions.

\begin{proof}[Proof of \cref{low-dim}]
Suppose $S_1$ is the decision region $x:\hat y(x)=1$ where the original classifier outputs the first class. The modified classifier $\hat y^*$ makes the same decision as $\hat y$ restricted to the first $p'$ coordinates.

Then the decision region $S^*_1$ where the modified classifier outputs the first class is the set of vectors $x=(x^1,x^2)$ such that $(x^1,0)\in S_1 \cap M$. We can write $S^*_1$ as the direct product $S^*_1=S^{*,p'}_1\times \R^d$.

Then, the $\ep$-expansion of $S_1^*$ within $M$ is $S_1^{*,p'}+B_\ep^{p'}$, where $B_\ep^{p'}$ is a $p'$-dimensional ball, and we can compute the sum in $p'$-dimensional space. Then, it is readily verified that, by denoting $R_d$ the restriction to the first $p'$ coordinates of a subset of $M$ (i.e., ignoring the last $d$ zero coordinates),

$$S_1^{*,p'}+B_\ep^{p'} \subset R_d[(S_1+B_\ep^{p})\cap M],$$

or equivalently, viewing this as embedded in the $p$-dimensional space,

$$[S_1 \cap M]+(B_\ep^{p'},0_d) \subset (S_1+B_\ep^{p})  \cap M.$$

Indeed, if $z\in [S_1 \cap M]+(B_\ep^{p'},0)$, then $z = x+\delta$, where $x\in S_1 \cap M$ and $\delta\in (B_\ep^{p'},0)$. Then it is clear that $z\in (S_1+B_\ep^{p})  \cap M$. Here we only use that $B_\ep^{p'}$ is the restriction of the $p$-dimensional $\ep$-ball $B_\ep^{p}$ onto the first $p'$ coordinates.  This shows that the $\ep$-expansion of $S_1$ is contained within the $\ep$-expansion of $S_1^*$.  The same reasoning applies to $S_{-1}$.

This shows that the classifier $\hat y^*$ has robust risk at most as large as that of the original classifier $\hat y$. This finishes the proof.
\end{proof}


\section{Pointwise calculation of a Bayes optimal classifier} \label{opt:threeclass:bayesopt}

Here we derive a Bayes optimal classifier,
i.e., a classifier that minimizes the standard non-robust risk $\stdrisk$
for the three-class setting \cref{eq:model:threeclass} of \cref{sec:opt:l2:threeclass}.
Recall that Bayes optimal classification is achieved by maximizing the posterior probability pointwise:
\begin{equation*}
  \bayesclass(x)
  \in \argmax_{c \in \clC} \Pr_{y|x}(y=c)
  = \argmax_{c \in \clC} \frac{p_{x|y=c}(x) \Pr(y=c)}{p(x)}
  = \argmax_{c \in \clC} \Pr(y=c) p_{x|y=c}(x)
  .
\end{equation*}
Hence it remains to identify the associated classification regions
\begin{align*}
  S_+ &\coloneqq \Big\{ x \in \bbR^p :
    \Pr(y=+1) p_{x|y=+1}(x) \geq \max\{\Pr(y= 0) p_{x|y= 0}(x), \Pr(y=-1) p_{x|y=-1}(x)\}
  \Big\}
  , \\
  S_- &\coloneqq \Big\{ x \in \bbR^p :
    \Pr(y=-1) p_{x|y=-1}(x) \geq \max\{\Pr(y= 0) p_{x|y= 0}(x), \Pr(y=+1) p_{x|y=+1}(x)\}
  \Big\}
  ,
\end{align*}
where the complementary region $(S_+ \cup S_-)^c$
will be classified as the remaining zero class.

\revone{Starting with $S_+$,}
note that
\begin{align*}
  &
  \Pr(y=+1) p_{x|y=+1}(x) \geq \Pr(y= 0) p_{x|y= 0}(x)
  \\&\qquad
  \iff
  \pi_+ \exp(-\|x-\revone{\lambda_+}\mu\|_2^2/2) \geq \pi_0 \exp(-\|x\|_2^2/2)
  \iff
  \|x-\revone{\lambda_+}\mu\|_2^2-\|x\|_2^2 \leq 2\ln(\pi_+/\pi_0)
  \\&\qquad \primetranspose
  \iff
  -2\revone{\lambda_+}x'\mu+ \revone{\lambda_+^2} \leq 2\ln(\pi_+/\pi_0)
  \iff
  x'\revone{\mu} \geq \revone{\lambda_+}/2-\ln(\pi_+/\pi_0)/\revone{|\lambda_+|}
  ,
\end{align*}
and similarly
\begin{align*}
  &
  \Pr(y=+1) p_{x|y=+1}(x) \geq \Pr(y=-1) p_{x|y=-1}(x)
  \\&
  \iff
  \pi_+ \exp(-\|x-\revone{\lambda_+}\mu\|_2^2/2) \geq \pi_- \exp(-\|x-\revone{\lambda_-}\mu\|_2^2/2)
  \iff
  \|x-\revone{\lambda_+}\mu\|_2^2-\|x-\revone{\lambda_-}\mu\|_2^2 \leq 2\ln(\pi_+/\pi_-)
  \\& \primetranspose
  \iff
  -\revone{2(\lambda_+-\lambda_-)}x'\mu +\revone{\lambda_+^2 - \lambda_-^2} \leq 2\ln(\pi_+/\pi_-)
  \iff
  x'\revone{\mu} \geq \revone{(\lambda_+ + \lambda_-)/2} - \ln(\pi_+/\pi_-)/\revone{|\lambda_+-\lambda_-|}
  .
\end{align*}
Therefore,
\revone{$\primetranspose S_+ = \{ x \in \bbR^p : x'\revone{\mu} \geq c_+ \}$
where}
\begin{equation*} \primetranspose
  c_+ \coloneqq \max\big\{
    \revone{\lambda_+}/2+\ln(\pi_0/\pi_+)/\revone{|\lambda_+|},
    \revone{(\lambda_+ + \lambda_-)/2 - \ln(\pi_+/\pi_-)/|\lambda_+-\lambda_-|}
  \big\}
  .
\end{equation*}
In the same way, we obtain
\revone{$\primetranspose S_- = \{ x \in \bbR^p : x'\revone{\mu} \leq c_- \}$
where}
\begin{equation*} \primetranspose
  c_- \coloneqq \min\big\{
    \revone{\lambda_-}/2-\ln(\pi_0/\pi_-)/\revone{|\lambda_-|},
    \revone{(\lambda_+ + \lambda_-)/2 - \ln(\pi_+/\pi_-)/|\lambda_+-\lambda_-|}
  \big\}
  .
\end{equation*}
Putting these together yields
\begin{equation*}
  \bayesclass(x)
  =
  \begin{cases}
    +1 & \text{if } x \in S_+
    , \\
    -1 & \text{if } x \in S_-
    , \\
    0 & \text{otherwise}
    ,
  \end{cases}
  =
  \begin{cases}
    +1 & \text{if } x^\top \revone{\mu} \geq c_+
    , \\
     0 & \text{if } c_- < x^\top \revone{\mu} < c_+
    , \\
    -1 & \text{if } x^\top \revone{\mu} \leq c_-
    ,
  \end{cases}
\end{equation*}
which is a linear interval classifier \cref{eq:threeclass:lindef},
as illustrated in \cref{fig:robopt:threeclass:linregions}.
At the boundaries between regions,
the posterior probabilities of the two corresponding classes are equal.
Indeed, assigning the boundary to either class
yields a Bayes optimal classifier, since the boundaries have zero probability under the marginal distribution of $x$.

\section{\revone{T}hree-class linear \revone{interval} classification for \texorpdfstring{$\ep \geq \revone{\min\{|\lambda_+|,|\lambda_-|\}}/2$}{ep >= min(|lambda+|,|lambda-|)/2}} \label{opt:threeclass:large:eps}

Here we show that three-class linear interval classification reduces \revone{down to several} two-class problems
when $\ep \geq \revone{\min\{|\lambda_+|,|\lambda_-|\}}/2$.
Specifically, we show the following \lcnamecref{prop:threeclass:large:eps}.

\medskip
\begin{proposition} \label{prop:threeclass:large:eps}
  Suppose $\ep \geq \revone{\min\{|\lambda_+|,|\lambda_-|\}}/2$.
  Then for any $w \neq 0$
  and $c_+ \geq c_-$,
  \begin{equation*}
    \robrisk\{\intclass(\cdot; w, c_+, c_-),\ep\}
    \geq
    \min\Big\{
      \robrisk(\hty_{-1,+1}^*,\ep),
      \robrisk(\hty_{ 0,+1}^*,\ep),
      \robrisk(\hty_{ 0,-1}^*,\ep)
    \Big\},
  \end{equation*}
  where
  \begin{align}
    \label{eq:threeclass:twoclass:mp}
    \hty_{-1,+1}^*(x)
    &\coloneqq
    \begin{cases}
      +1 , & \text{if }
        \revone{
        [x - \mu(\lambda_+ + \lambda_-)/2]^\top [\mu(\lambda_+ - \lambda_-)/2]
        (1 - 2\ep/|\lambda_+ - \lambda_-|)_+
        }
        -
        \ln(\pi_-/\pi_+)/2
        \geq 0
      , \\
      -1 , & \text{otherwise}
      ,
    \end{cases}
    \\
    \label{eq:threeclass:twoclass:p0}
    \hty_{ 0,+1}^*(x)
    &\coloneqq
    \begin{cases}
      +1 , & \text{if }
        \revone{
        [x - \mu\lambda_+/2]^\top [\mu\lambda_+/2]
        (1 - 2\ep/|\lambda_+|)_+
        }
        -
        \ln(\pi_0/\pi_+)/2
        \geq 0
      , \\
       0 , & \text{otherwise}
      ,
    \end{cases}
    \\
    \label{eq:threeclass:twoclass:m0}
    \hty_{ 0,-1}^*(x)
    &\coloneqq
    \begin{cases}
      -1 , & \text{if }
        \revone{
        [x - \mu\lambda_-/2]^\top [\mu\lambda_-/2]
        (1 - 2\ep/|\lambda_-|)_+
        }
        -
        \ln(\pi_0/\pi_-)/2
        \geq 0
      , \\
       0 , & \text{otherwise}
      ,
    \end{cases}
  \end{align}
  are the optimal robust two-class classifiers
  obtained by applying \cref{eq:robopt:twoclass} from \cref{orc}
  to each pair of classes.
\end{proposition}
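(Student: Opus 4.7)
The plan is to adapt the strategy from the proof of \cref{thm:opt:threeclass:threshopt}. The optimization over $w$ in \cref{proof:thm:opt:threeclass:threshopt:weights} does not use $\ep < \min\{|\lambda_+|,|\lambda_-|\}/2$, so it still gives $w = \mu$ as optimal. Then split the feasible set for the thresholds into $\Omega_0 = \{c_- \leq c_+ \leq c_- + 2\ep\}$ and $\Omega_1 = \{c_+ > c_- + 2\ep\}$. On $\Omega_0$, as in \cref{proof:thm:opt:threeclass:threshopt:thresh}, $\clM_0 = 1$ and the robust risk reduces to $\pi_0$ plus a two-class problem between the $\pm 1$ classes; its minimum (at $c_- = c_+$) equals $\robrisk(\hty_{-1,+1}^*, \ep)$, which proves the bound on $\Omega_0$.

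On $\Omega_1$, the robust risk decomposes as $\robrisk^-(c_-) + \robrisk^+(c_+)$ exactly as in \cref{proof:thm:opt:threeclass:threshopt:thresh}. By symmetry, assume without loss of generality that $|\lambda_+| \leq 2\ep$. Then $c_+ + \ep - \lambda_+ \geq c_+ - \ep$, so $\robrisk^+(c_+) \geq f(c_+ - \ep)$ where $f(u) \coloneqq \pi_+ \Phi(u) + \pi_0 \brPhi(u) = \pi_0 + (\pi_+ - \pi_0) \Phi(u)$ is monotone in $u$ (with direction determined by $\operatorname{sign}(\pi_+ - \pi_0)$). This is the key observation that exploits the degeneracy induced by $|\lambda_+| \leq 2\ep$.

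The analysis then branches on $|\lambda_-|$. If $|\lambda_-| \leq 2\ep$ too, an analogous inequality gives $\robrisk^-(c_-) \geq g(c_- + \ep)$ where $g(u) = \pi_- + (\pi_0 - \pi_-) \Phi(u)$. Setting $a = c_- + \ep$ and $b = c_+ - \ep$, the constraint becomes $b > a$, and a short case analysis on the signs of $\pi_+ - \pi_0$ and $\pi_- - \pi_0$ shows $\inf_{b > a} [f(b) + g(a)] = \min\{\robrisk(\hty_{0,+1}^*, \ep), \robrisk(\hty_{0,-1}^*, \ep)\}$ in each of the four sign patterns. If instead $|\lambda_-| > 2\ep$, split further on whether $\pi_0 \geq \pi_+$: when $\pi_0 \geq \pi_+$, $f$ is decreasing so $f(b) \geq \pi_+$, giving robust risk $\geq \min_{c_-} \robrisk^-(c_-) + \pi_+ = \robrisk(\hty_{0,-1}^*, \ep)$; when $\pi_0 < \pi_+$, $f$ is increasing and $f(b) \geq f(a)$ for $b > a$, so minimizing $\robrisk^-(c_-) + f(c_- + \ep)$ over $c_-$ simplifies (after cancellation of $\pi_0 \Phi(c_-+\ep) + \pi_0 \brPhi(c_-+\ep) = \pi_0$) to $\pi_0 + (\pi_- + \pi_+) \robrisk^*(|\lambda_-|/2, \pi_+/(\pi_- + \pi_+); \ep)$. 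Since $\robrisk^*(\cdot, \pi; \ep)$ is non-increasing in its first argument and $|\lambda_-|/2 \leq |\lambda_+ - \lambda_-|/2$, this is bounded below by $\robrisk(\hty_{-1,+1}^*, \ep)$.

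The main obstacle will be the sub-case $|\lambda_+|, |\lambda_-| \leq 2\ep$: the naive bound in terms of $\inf \robrisk^- + \inf \robrisk^+$ is too weak because the two infima are only approached in limits incompatible with $b > a$. The sign-based case analysis relies crucially on $b > a$ to prevent $\robrisk^-$ and $\robrisk^+$ from simultaneously reaching their degenerate limits.
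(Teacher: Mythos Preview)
Your proposal is correct, and on $\Omega_0$ it is identical to the paper's argument. On $\Omega_1$ you and the paper diverge in organization, though both are valid.

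The paper's $\Omega_1$ argument is more compact. It writes the robust risk as $1$ minus the correct-classification probability, assumes (without loss of generality) $\ep\ge|\lambda_+|/2$, and observes that under this assumption the interval $[\tlc_+-(\lambda_+-\ep),\infty)$ starts to the right of $\tlc_+-\ep$. This lets it bound
\[
\pi_+\Pr_+(\tlx\ge\tlc_++\ep)+\pi_0\Pr_0(\tlc_-+\ep<\tlx<\tlc_+-\ep)
\]
by either $\pi_0\Pr_0(\tlx>\tlc_-+\ep)$ or $\pi_+\Pr_+(\tlx>\tlc_-+\ep)$, depending only on whether $\pi_+\le\pi_0$ or $\pi_+>\pi_0$. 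Either way, what remains is exactly the robust risk of a linear two-class classifier (between $\{0,-1\}$ or between $\{-1,+1\}$) evaluated at threshold $\tlc_-$, with the \emph{true} means, so it is immediately $\ge\robrisk(\hty_{0,-1}^*,\ep)$ or $\ge\robrisk(\hty_{-1,+1}^*,\ep)$. No sub-casing on $|\lambda_-|$ is needed.

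Your route instead bounds $\robrisk^+(c_+)\ge f(c_+-\ep)$ with $f(u)=\pi_0+(\pi_+-\pi_0)\Phi(u)$ and then branches further on whether $|\lambda_-|\le 2\ep$. This is correct but introduces extra cases, and in the sub-case $\pi_0<\pi_+$, $|\lambda_-|>2\ep$ you end up comparing to an auxiliary two-class problem with means $\lambda_-$ and $0$ (not $\lambda_+$), which you then have to relate to $\robrisk(\hty_{-1,+1}^*,\ep)$ via the monotonicity of $\robrisk^*(\nu,\pi;\ep)$ in $\nu$. That monotonicity is true and easy (it follows from $\robrisk^*(\nu,\pi;\ep)=\bayesrisk((\nu-\ep)_+,\pi)$ and the fact that Bayes risk decreases with separation), but it is an extra ingredient the paper avoids. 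The paper's trick of converting the $\Pr_0$ term into a $\Pr_+$ term (rather than the other way around) is what buys the more direct reduction.
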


\begin{proof}[Proof of \cref{prop:threeclass:large:eps}]
Let $\ep \geq \revone{\min\{|\lambda_+|,|\lambda_-|\}}/2$ be arbitrary.
Considering the negative and positive classes alone
(i.e., ignoring the zero class)
and applying \cref{eq:robopt:twoclass}
yields the optimal robust two-class classifier \cref{eq:threeclass:twoclass:mp}.
Similarly, considering the zero and positive classes alone
and applying \cref{eq:robopt:twoclass}
to $x - \revone{\mu\lambda_+/2}$ (for which the two classes have means $\pm \revone{\mu\lambda_+/2}$)
yields the optimal robust two-class classifier \cref{eq:threeclass:twoclass:p0}.
\revone{Likewise with $x - \mu\lambda_-/2$
for the optimal robust two-class classifier \cref{eq:threeclass:twoclass:m0}.}

Now, let $w \neq 0$ and $c_+ \geq c_-$ be arbitrary.
Recall first that
$\intclass(x; w, c_+, c_-) = \intclass(x; \tlw, \tlc_+, \tlc_-)$,
where
\begin{align*}
  \tlw &= w/\|w\|_2
  , &
  \tlc_+ &= c_+/\|w\|_2
  , &
  \tlc_- &= c_-/\|w\|_2
  ,
\end{align*}
and so
$\robrisk\{\intclass(\cdot; w, c_+, c_-),\ep\}
= \robrisk\{\intclass(\cdot; \tlw, \tlc_+, \tlc_-),\ep\}$.
Next, recall from \cref{proof:thm:opt:threeclass:threshopt:weights} that
for any $\tlc_+ \geq \tlc_-$,
the robust risk is optimized subject to $\|\tlw\|_2 = 1$
by $\tlw = \revone{\mu}$
so we have
\begin{align*}
  \robrisk\{\intclass(\cdot; \tlw, \tlc_+, \tlc_-),\ep\}
  &
  \geq
  \robrisk\{\intclass(\cdot; \revone{\mu}, \tlc_+, \tlc_-),\ep\}
  \\&
  = \pi_- \Pr_-(\tlx > \tlc_- - \ep)
  + \pi_+ \Pr_+(\tlx < \tlc_+ + \ep)
  + \pi_0 \Pr_0(\tlx \leq \tlc_- + \ep \text{ or } \tlx \geq \tlc_+ - \ep)
\end{align*}
where the shorthand notations
$\Pr_- \coloneqq \Pr_{\tlx \sim \clN(\revone{\lambda_-},1)}$,
$\Pr_+ \coloneqq \Pr_{\tlx \sim \clN(\revone{\lambda_+},1)}$,
and $\Pr_0 \coloneqq \Pr_{\tlx \sim \clN(0,1)}$
are taken from \cref{proof:thm:opt:threeclass:threshopt:thresh}.
It now remains to bound \revone{the robust risk} $\robrisk\{\intclass(\cdot; \revone{\mu}, \tlc_+, \tlc_-),\ep\}$.

\revone{To begin, consider the case where} $\tlc_- \leq \tlc_+ \leq \tlc_- + 2\ep$.
In this case,
note that
\revone{$\Pr_0(\tlx \leq \tlc_- + \ep \text{ or } \tlx \geq \tlc_+ - \ep) = 1$,
so} the robust risk \revone{can be rewritten and bounded as}
\revone{%
\begin{align*}
  \robrisk\{\intclass(\cdot; \mu, \tlc_+, \tlc_-),\ep\}
  &
  = \pi_- \Pr_-(\tlx > \tlc_- - \ep)
  + \pi_+ \Pr_+(\tlx < \tlc_+ + \ep)
  + \pi_0
  \\&
  \geq \pi_- \Pr_-(\tlx > \tlc_+ - \ep)
  + \pi_+ \Pr_+(\tlx < \tlc_+ + \ep)
  + \pi_0
  .
\end{align*}
}%
The final expression is the robust risk of a linear two-class classifier
that assigns points to only the positive and negative classes
with threshold $\tlc_+$,
so is no better than that of $\hty_{-1,+1}^*$.
As a result, we have that
\begin{align*}
  \robrisk\{\intclass(\cdot; \tlw, \tlc_+, \tlc_-),\ep\}
  &\geq
  \pi_- \Pr_-(\tlx > \tlc_+ - \ep)
  + \pi_+ \Pr_+(\tlx < \tlc_+ + \ep)
  + \pi_0
  \geq
  \robrisk(\hty_{-1,+1}^*,\ep)
  ,
\end{align*}
which completes the proof in this case.

\revone{Next, consider the case where} $\tlc_+ \geq \tlc_- + 2\ep$\revone{,
in which case we can}
rewrite the robust risk as
\begin{equation*}
  \revone{\robrisk\{\intclass(\cdot; \mu, \tlc_+, \tlc_-),\ep\}}
  =
  1 -
  \Big\{
    \pi_- \Pr_-(\tlx \leq \tlc_- - \ep)
  + \pi_+ \Pr_+(\tlx \geq \tlc_+ + \ep)
  + \pi_0 \Pr_0(\tlc_- + \ep < \tlx < \tlc_+ - \ep)
  \Big\}
  .
\end{equation*}
\revone{
Now note that
$\ep \geq |\lambda_+|/2$ or $\ep \geq |\lambda_-|/2$
since $\ep \geq \min\{|\lambda_+|,|\lambda_-|\}/2$.
Suppose first that $\ep \geq |\lambda_+|/2$,
and note that
if $\pi_+ \leq \pi_0$ then
\begin{align*}
  &
  \pi_+ \Pr_+(\tlx \geq \tlc_+ + \ep)
  + \pi_0 \Pr_0(\tlc_- + \ep < \tlx < \tlc_+ - \ep)
  \leq
  \pi_0 \Big\{
    \Pr_+(\tlx \geq \tlc_+ + \ep)
    + \Pr_0(\tlc_- + \ep < \tlx < \tlc_+ - \ep)
  \Big\}
  \\&\qquad\qquad
  =
  \pi_0 \Big\{
    \Pr_0(\tlx \geq \tlc_+ - (\lambda_+ - \ep))
    + \Pr_0(\tlc_- + \ep < \tlx < \tlc_+ - \ep)
  \Big\}
  \leq
  \pi_0 \Pr_0(\tlx > \tlc_- + \ep)
  ,
\end{align*}
where the final inequality
used the fact that $\tlc_+ - (\lambda_+ - \ep) \geq \tlc_+ - \ep$
when $\ep \geq |\lambda_+|/2$.
Consequently,
\begin{equation*}
  \robrisk\{\intclass(\cdot; \mu, \tlc_+, \tlc_-),\ep\}
  \geq
  \pi_+
  + \pi_- \Pr_-(\tlx > \tlc_- - \ep)
  + \pi_0 \Pr_0(\tlx \leq \tlc_- + \ep)
  \geq
  \robrisk(\hty_{ 0,-1}^*,\ep)
  ,
\end{equation*}
where the final inequality holds because
the middle term is
the robust risk of a linear two-class classifier
with threshold $\tlc_-$
that assigns points to only the negative and zero classes,
so is no better than that of $\hty_{0,-1}^*$.
Likewise,
if $\pi_+ > \pi_0$
then
$\pi_+ \Pr_+(\tlx \geq \tlc_+ + \ep) + \pi_0 \Pr_0(\tlc_- + \ep < \tlx < \tlc_+ - \ep)
\leq \pi_+ \Pr_+(\tlx > \tlc_- + \ep)$
and
\begin{equation*}
  \robrisk\{\intclass(\cdot; \mu, \tlc_+, \tlc_-),\ep\}
  \geq
  \pi_0
  + \pi_- \Pr_-(\tlx > \tlc_- - \ep)
  + \pi_+ \Pr_+(\tlx \leq \tlc_- + \ep)
  \geq
  \robrisk(\hty_{-1,+1}^*,\ep)
  .
\end{equation*}
As a result,
whether $\pi_+ \leq \pi_0$ or $\pi_+ > \pi_0$,
\begin{equation*}
  \robrisk\{\intclass(\cdot; \tlw, \tlc_+, \tlc_-),\ep\}
  \geq
  \min\Big\{
    \robrisk(\hty_{ 0,-1}^*,\ep),
    \robrisk(\hty_{-1,+1}^*,\ep)
  \Big\}
  ,
\end{equation*}
completing the proof when $\ep \geq |\lambda_+|/2$.

Repeating the analogous argument for $\ep \geq |\lambda_-|/2$
yields that in that case
\begin{equation*}
  \robrisk\{\intclass(\cdot; \tlw, \tlc_+, \tlc_-),\ep\}
  \geq
  \min\Big\{
    \robrisk(\hty_{ 0,+1}^*,\ep),
    \robrisk(\hty_{-1,+1}^*,\ep)
  \Big\}
  ,
\end{equation*}
completing the proof.
}%
\end{proof}

\revone{
\section{Beyond Gaussians that lie on a line}
\label{sec:threeclass:beyond:line}

For the one-dimensional setting $p = 1$,
the three Gaussian are always as in \cref{eq:model:threeclass},
i.e., their means lie along a line.
However, when $p > 1$,
the means can be in more general locations
and one naturally wonders what the optimal robust classifier
would be in such settings.

When the means do not lie on a line,
the Bayes optimal classifier
is no longer linear in general
and so the optimal robust classifier
is likely no longer linear as well.
Moreover,
as shown above,
the optimal robust classifier
does not coincide with the Bayes optimal classifier
in general,
making it nontrivial to even
conjecture what the optimal robust classifier is in such cases.
However, we do expect
them to coincide when
the location of the means is symmetric,
as described by the following \lcnamecref{conj:gaussian:triangle}.

\medskip
\begin{conjecture}[Optimal robust classifier for means arranged in a triangle]
\label{conj:gaussian:triangle}
Consider three balanced Gaussian classes $\clC = \{0,1,2\}$
with means at the corners of an equilateral triangle:
\begin{align*}
  x|y &\sim
  \clN\Big(
    \mu \cdot [\cos(y \cdot 2\pi/3) e_1 + \sin(y \cdot 2\pi/3) e_2]
    ,
    I_p
  \Big)
  , &
  y &=
  \begin{cases}
    0 & \text{with probability } 1/3
    , \\
    1 & \text{with probability } 1/3
    , \\
    2 & \text{with probability } 1/3
    ,
  \end{cases}
\end{align*}
where $\mu \in \bbR_{>0}$ defines the size of the triangle,
and $e_1, e_2 \in \{0,1\}^p$ are the canonical basis vectors.

We conjecture that the optimal robust classifier
is the same as the Bayes optimal classifier,
i.e.,
\begin{equation*}
  \hty^*(x)
  \coloneqq
  \argmax_y
  \|x - \mu \cdot [\cos(y \cdot 2\pi/3) e_1 + \sin(y \cdot 2\pi/3) e_2]\|_2
  ,
\end{equation*}
as shown in the following diagram:
\begin{center}
\begin{tikzpicture}
  \draw[Gray,->,thick] (-2.25,0) -- (2.5,0) node [Black,anchor=west] {$e_1$};
  \draw[Gray,->,thick] (0,{-2.5*0.866}) -- (0,{2.5*0.866}) node [Black,anchor=south] {$e_2$};

  \filldraw[OliveGreen,opacity=0.2] (0,0) -- ({2.5*0.5},{2.5*0.866})
    -- (2.5,{2.5*0.866}) -- (2.5,{-2.5*0.866})
    -- ({2.5*0.5},{-2.5*0.866}) -- cycle;
  \filldraw[OliveGreen] (1,0) circle (2pt) node [anchor=south west,outer sep=0pt] {$y=0$};
  
  \filldraw[Red,opacity=0.2] (0,0) -- ({2.5*0.5},{2.5*0.866})
    -- (-2.25,{2.5*0.866}) -- (-2.25,0) -- cycle;
  \filldraw[Red] (-0.5,+0.866) circle (2pt) node [anchor=south,outer sep=2pt] {$y=1$};
  
  \filldraw[Blue,opacity=0.2] (0,0) -- ({2.5*0.5},{-2.5*0.866})
    -- (-2.25,{-2.5*0.866}) -- (-2.25,0) -- cycle;
  \filldraw[Blue] (-0.5,-0.866) circle (2pt) node [anchor=north,outer sep=2pt] {$y=2$};

  \draw[Black,<->] (0,0) -- (0.5,0.866) node [midway, sloped, below] {$\mu$};
  \draw[Black,dashed] (0,0) circle (1);
\end{tikzpicture}
\end{center}
\end{conjecture}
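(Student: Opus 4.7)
The plan is to combine a dimension reduction with the recent Gaussian multi-bubble isoperimetric inequality of Milman and Neeman, which characterizes the minimum-perimeter partition of Gaussian space into three equal-measure pieces.

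First, I would reduce to $\mathbb{R}^2$. Since the class means all lie in $V = \mathrm{span}(e_1, e_2)$ and the class-conditional Gaussians are $I_p$-isotropic, the projection $x_\perp = \Pi_{V^\perp} x$ is independent of $y$. By an argument analogous to \cref{low-dim}, any classifier can be replaced by one depending only on $x_{12} = \Pi_V x$ without increasing the robust risk: the adversary's $\ep$-budget should only be spent in $V$, since perturbations in $V^\perp$ can neither correct nor flip a classifier that ignores $x_\perp$. This reduces the problem to 2D classifiers and 2D class-conditional Gaussians $\mathcal{N}(\mu_y, I_2)$ with means at the vertices of an equilateral triangle.

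Second, I would recast the question as a multi-bubble inequality. Write $S_y = \{x \in \mathbb{R}^2 : \hty(x) = y\}$ and let $W_y = R^y W_0$ denote the candidate optimal wedges, where $W_0 = \{x : \arg(x) \in (-\pi/3, \pi/3)\}$ and $R$ is rotation by $2\pi/3$. The robust risk can be written as
\begin{equation*}
  \robrisk(\hty, \ep)
  = \frac{1}{3} \sum_{y=0}^{2} \gamma^{(\mu_y)}\big(S_y^c + B_\ep\big)
  = 1 - \frac{1}{3} \sum_{y=0}^{2} \gamma\big((S_y - \mu_y) \ominus B_\ep\big),
\end{equation*}
where $\ominus$ denotes Minkowski erosion. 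The Milman--Neeman multi-bubble theorem states that among measurable partitions $(T_0, T_1, T_2)$ of $\mathbb{R}^n$ with $\gamma(T_y) = 1/3$, the total Gaussian perimeter $\sum_y \gamma^+(\partial T_y)$ is uniquely minimized (up to null sets) by the Y-simplicial-cone partition. Combined with Minkowski's formula $\gamma(A + B_\ep) = \gamma(A) + \ep\, \gamma^+(\partial A) + o(\ep)$, this yields infinitesimal ($\ep \to 0^+$) optimality of the Y-partition. For general $\ep$, I would try to iterate the infinitesimal inequality along the $\ep$-erosion semigroup, exploiting closure of symmetric wedge partitions under erosion.

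The main obstacle is that the Milman--Neeman theorem controls perimeters of a \emph{single common} partition $(T_y)$ under a single centered Gaussian, whereas our robust risk involves perimeters of the \emph{shifted} families $(S_y - \mu_y)$, which no longer form a partition of $\mathbb{R}^2$. This mismatch is the essential new obstruction of the three-class setting. A direct imitation of the perturbation-of-means strategy from \cref{thm:opt:threeclass:suff:small}, taking $\delta_y = -\ep\, \mu_y / \|\mu_y\|_2$ to radially contract the triangle by $\ep$, is not tight: a direct computation shows that the $\ep$-erosion of the $120^\circ$ wedge shifts its apex by $2\ep/\sqrt{3} > \ep$, so the standard risk of the Bayes classifier for the shrunken triangle is strictly smaller than the robust risk of the Y-partition for the original triangle. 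Closing this $(2/\sqrt{3} - 1)\ep$ gap, whether by an adaptive class-dependent perturbation, a multi-bubble variant tailored to shifted Gaussians, or a genuinely new approach, is the essential challenge any proof of \cref{conj:gaussian:triangle} must surmount.
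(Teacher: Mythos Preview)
The statement you are addressing is a \emph{conjecture} in the paper, not a theorem; the paper does not provide a proof. The surrounding discussion in \cref{sec:threeclass:beyond:line} explicitly explains why the authors' existing tools fail: the Gaussian concentration of measure approach from \cref{orc} and \cref{thm:opt:threeclass:ignore:separate} does not apply because the conjectured regions are wedges rather than half-spaces, and the mean-perturbation approach of \cref{thm:opt:threeclass:suff:small} (following \cite{dan2020sharp}) requires shrinking $\mu$ by $\ep/\sin(\pi/3) = 2\ep/\sqrt{3}$, which exceeds the admissible $\ep$-budget. Your final paragraph recovers exactly this $2\ep/\sqrt{3}$ obstruction, so you and the paper agree on why the problem is open.

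Your proposal does not close the gap either, and you correctly say so. The dimension reduction to $\bbR^2$ is sound and matches the spirit of \cref{low-dim}. The new ingredient you bring, the Milman--Neeman Gaussian multi-bubble theorem, is a natural tool to try, but as you already observe, it controls a single partition under a single centered Gaussian measure, whereas the robust risk involves three shifted Gaussians, and the shifted sets $S_y - \mu_y$ are not a partition. Your suggestion to iterate an infinitesimal inequality along the erosion semigroup is not a proof sketch but a hope: the erosion of a generic partition need not remain a partition, and the perimeter minimality at $\ep=0$ does not by itself propagate to positive $\ep$ without an additional monotonicity or stability argument that you have not supplied. In short, your write-up is a reasonable survey of obstacles, consistent with the paper's own assessment, but it is not a proof of the conjecture, and neither is anything in the paper.
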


As before,
we have taken the within-class variance to be unity
without loss of generality.
Furthermore,
we have centered the triangle at the origin
without loss of generality.

This \lcnamecref{conj:gaussian:triangle} is natural,
but turns out to be nontrivial to rigorously establish.
In particular, the Gaussian concentration of measure approach
used to prove optimality in \cref{orc,thm:opt:threeclass:ignore:separate}
does not directly apply here
since the conjectured optimal classifier is not composed of half-spaces.
It would appear that
the method used in \cite{dan2020sharp}
may be used here
since the Bayes classifier
for perturbed means $\mu' = \mu - \ep/\sin(\pi/3)$
has standard risk
with respect to the perturbed means
matching the robust risk
with respect to the unperturbed means.
However, the $\ep/\sin(\pi/3)$ size of the perturbation
is too large and cannot be used.

Going beyond three-classes to settings
with even more classes,
we conjecture that the same holds
in corresponding symmetric settings,
as described by the following \lcnamecref{conj:gaussian:moreclasses}

\medskip
\begin{conjecture}[Optimal robust classifier for $n$ means arranged in a regular polygon]
\label{conj:gaussian:moreclasses}
Consider $n$ balanced Gaussian classes $\clC = \{0,1,\dots,n-1\}$
with means at the corners of a regular $n$-sided polygon:
\begin{align*}
  x|y &\sim
  \clN\Big(
    \mu \cdot [\cos(y \cdot 2\pi/n) e_1 + \sin(y \cdot 2\pi/n) e_2]
    ,
    I_p
  \Big)
  , &
  y &=
  \begin{cases}
    0 & \text{with probability } 1/n
    , \\
    \vdots & \vdots \\
    n-1 & \text{with probability } 1/n
    ,
  \end{cases}
\end{align*}
where $\mu \in \bbR_{>0}$ defines the size of the polygon,
and $e_1, e_2 \in \{0,1\}^p$ are the canonical basis vectors.

We conjecture that the optimal robust classifier
is the same as the Bayes optimal classifier,
i.e.,
\begin{equation*}
  \hty^*(x)
  \coloneqq
  \argmax_y
  \|x - \mu \cdot [\cos(y \cdot 2\pi/n) e_1 + \sin(y \cdot 2\pi/n) e_2]\|_2
  .
\end{equation*}
\end{conjecture}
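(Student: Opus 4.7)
The plan is to combine the cyclic symmetry of the problem with a suitable generalization of the perturbed-means technique from the proof of \cref{thm:opt:threeclass:suff:small}. First, by \cref{low-dim}, it suffices to consider classifiers that depend only on the projection onto the two-plane spanned by $e_1$ and $e_2$, since all class means lie in this plane. Second, the mixture distribution, the label set, and the robust risk are all invariant under the cyclic action of rotation $R_n$ by angle $2\pi/n$ (which permutes the labels by $y \mapsto y+1 \pmod{n}$), so by a standard symmetrization argument any candidate classifier can be replaced by one whose decision regions satisfy $V_y = R_n^y V_0$ without increasing the robust risk. The problem then reduces to finding the optimal two-dimensional base wedge $V_0$.

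To establish a lower bound, I would first imitate the proof of \cref{thm:opt:threeclass:suff:small} by introducing the inward-perturbed means $\mu_y' \coloneqq (\mu-\ep)[\cos(y \cdot 2\pi/n) e_1 + \sin(y \cdot 2\pi/n) e_2]$. These again form a regular $n$-gon, so the Bayes classifier $\bayesclassmean{\lambda'}$ for the perturbed means coincides with the conjectured $\hty^*$. The one-sided bound used in \cref{eq:opt:threeclass:perturbed:bayes:bound} still yields
\begin{equation*}
  \robrisk(\hty, \ep)
  \;\geq\;
  \stdriskmean{\lambda'}(\hty)
  \;\geq\;
  \stdriskmean{\lambda'}(\bayesclassmean{\lambda'})
  \;=\;
  \stdriskmean{\lambda'}(\hty^*)
\end{equation*}
for every classifier $\hty$, giving an explicit lower bound on the robust risk.

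The hard part is showing that this bound is attained by $\hty^*$, i.e., that $\stdriskmean{\lambda'}(\hty^*) = \robrisk(\hty^*, \ep)$. For a half-space $H$ with unit normal $\hat n$ one has the key identity $H^c + B_\ep = H^c + \ep\hat n$, but the Voronoi wedges of a regular polygon do not satisfy the analogous identity: an axis point at distance less than $\ep/\sin(\pi/n)$ from the origin lies in $V_y^c + B_\ep$ but not in $V_y^c + \ep \hat\mu_y$. In other words, the apex of the wedge at the origin contributes a genuine excess that the translation-based argument cannot capture, so the displayed chain is strict at $\hty^*$ and the perturbed-means approach alone cannot close the proof. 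To bridge this gap I would attempt to prove a dedicated isoperimetric statement for cyclically symmetric partitions of $\bbR^2$: among all measurable partitions $\{V_y\}_{y=0}^{n-1}$ with $V_y = R_n^y V_0$, the Voronoi wedge minimizes $\sum_y \gamma_{\mu_y}(V_y^c + B_\ep)$, where $\gamma_{\mu_y}$ denotes the Gaussian measure centered at $\mu_y$. Such a statement would be the natural $n$-class analogue of the Gaussian concentration of measure used in \cref{orc}; establishing it, perhaps via an Ehrhard-type symmetrization of the base region $V_0$ or via the Gaussian-deficit machinery of \cite{cianchi2011isoperimetric} leveraged in \cref{thm:approx:opt:l2:twoclass}, would be the main technical obstacle. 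With this wedge-isoperimetric inequality in hand, the conjecture would follow directly from the symmetrization reduction.
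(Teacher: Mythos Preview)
The statement is a \emph{conjecture} in the paper, and the paper provides no proof. The paper's own discussion (immediately following the conjecture) explains why known techniques fail: Gaussian concentration of measure does not apply because the Voronoi wedges are not half-spaces, and the perturbed-means method of \cite{dan2020sharp} fails because the perturbation size needed for the standard risk under shifted means to match the robust risk is $\ep/\sin(\pi/n)$, which exceeds $\ep$ and is therefore inadmissible.

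Your proposal is not a proof either, and you are candid about this: you correctly derive the one-sided bound $\robrisk(\hty,\ep) \geq \stdriskmean{\lambda'}(\hty^*)$ from the radial $\ep$-perturbation, and you correctly observe that this bound is \emph{strict} at $\hty^*$ because the apex of the wedge creates excess robust misclassification that a pure translation cannot account for. This is exactly the obstruction the paper identifies (phrased slightly differently). Your suggested remedy, a cyclic-symmetric wedge-isoperimetric inequality for Gaussian measures, is a reasonable target, but you do not prove it, and no such result is currently available in the literature; it is precisely the missing ingredient that keeps this a conjecture.

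One additional gap worth flagging: your ``standard symmetrization argument'' that reduces to classifiers with $V_y = R_n^y V_0$ is not as routine as you suggest. Symmetry of the problem implies that each rotated-and-relabeled version of a classifier has the \emph{same} robust risk, not that some symmetric classifier does at least as well. Passing from ``the orbit has constant risk'' to ``there exists a symmetric minimizer'' requires either an explicit symmetric-replacement construction that does not increase the robust risk (nontrivial here, since decision regions cannot simply be averaged) or an existence/compactness argument for minimizers combined with a selection of a fixed point under the group action. This step would need to be made precise in any complete proof.
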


As with \cref{conj:gaussian:triangle},
this \lcnamecref{conj:gaussian:moreclasses}
is natural but highly nontrivial to rigorously establish.
Once again,
the Gaussian concentration of measure approach
used to prove optimality in \cref{orc,thm:opt:threeclass:ignore:separate}
does not directly apply
since the conjectured optimal classifier is not composed of half-spaces.
Likewise,
the method used in \cite{dan2020sharp}
does not seem to apply
since the Bayes classifier
for perturbed means $\mu' = \mu - \ep/\sin(\pi/n)$
has standard risk
with respect to the perturbed means
matching the robust risk
with respect to the unperturbed means.
However, the $\ep/\sin(\pi/n)$ size of the perturbation
is again too large and cannot be used.

Developing new theoretical approaches
that can prove \cref{conj:gaussian:triangle,conj:gaussian:moreclasses}
is an exciting direction for future work.
}


\section{Proofs for optimal \texorpdfstring{$\ell_\infty$}{linfty} robust classifiers}
\label{app:opt:linf:proofs}

\subsection{Proof of \cref{linf}}
\label{app:linf:proof}
This follows because $\ell_\infty$ norm is upper bounded by the $\ell_2$ norm. Thus for any fixed $\ep$, the $\ell_\infty$ robust risk is upper bounded by the $\ell_2$ robust risk:

\begin{align*}
R(\hat y,\ep,\|\cdot\|_\infty) &\ge R(\hat y,\ep,\|\cdot\|_2)\\
R^*(\ep,\|\cdot\|_\infty)&\ge R^*(\ep,\|\cdot\|_2)
.
\end{align*}
From \cref{orc}, we know the optimal $\ell_2$ robust classifiers, i.e., the ones minimizing the upper bound, are based on $x_j\cdot [\mu_j-\ep]$.
Now, it follows that for the decision sets $S_i$ of these classifiers (axis aligned half-planes), $S_i+B_{2,\ep}=S_i+B_{\infty,\ep}$, where $B_{q,\ep}$ denotes the $\ep$-ball in the $\ell_q$ norm. Thus, the $\ell_\infty$ robust risk is \emph{equal} to the $\ell_2$ risk for this specific classifier. This implies that it also minimizes the $\ell_\infty$ risk, and that the two risks are the same:
\begin{align*}
R^*(\ep,\|\cdot\|_\infty)&=R^*(\ep,\|\cdot\|_2).
\end{align*}
This finishes the proof.
\qed

\subsection{Proof of \cref{linf:threeclass}}
\label{proof:linf:threeclass}

As before,
$\robrisk(\hty,\ep,\|\cdot\|_\infty) \geq \robrisk(\hty,\ep,\|\cdot\|_2)$
for any classifier $\hty$ and radius $\ep$.
Now, by \cref{thm:opt:threeclass:threshopt}
the weights $w^* \coloneqq \mu/\|\mu\|_2$ optimize
$\robrisk\{\intclass(x;w^*,c_+^*,c_-^*),\ep,\|\cdot\|_2\}$
where the formulae for the two cases of $c_+^*$ and $c_-^*$,
as well as the cutoff $\alpha^*$,
are simplified by noting that $\|\mu\|_2 = \mu_j$.
Moreover,
\begin{equation*}
  \intclass^*(x)
  \coloneqq \intclass(x;w^*,c_+^*,c_-^*)
  = \intclass(x^\top w^*;1,c_+^*,c_-^*)
  = \intclass(x_j;1,c_+^*,c_-^*)
  ,
\end{equation*}
since $w^*$ has one non-zero coordinate $w_j^* = 1$ ($w^*$ is 1-sparse).
Finally,
\begin{equation*}
  \robrisk(\intclass^*,\ep,\|\cdot\|_\infty)
  = \robrisk(\intclass^*,\ep,\|\cdot\|_2)
\end{equation*}
since $S_y^c(\intclass^*) + B_{2,\ep} = S_y^c(\intclass^*) + B_{\infty,\ep}$
for $y \in \{-1,0,1\}$;
the misclassification sets are coordinate-aligned.
Thus, it follows that $\intclass^*$ also optimizes $\robrisk(\intclass^*,\ep,\|\cdot\|_\infty)$.
\qed

\subsection{Proof of \cref{linflin}}
\label{proof:linflin}

Recall our general formula:
\begin{align*}
R(\hat y,\ep) &=
\pi\cdot P_{x|y=1}(S_{-1}+B_\ep)
+(1-\pi)\cdot P_{x|y=-1}(S_{1}+B_\ep).
\end{align*}
Take a linear classifier $\hat y^*(x)=\sign(x^\top w-c)$ for some some $w,c$, and $P_{x|y}=\N(y\mu,I_p)$. Then $S_1$ is the set of datapoints such that $x^\top w-c\ge 0$. So, $S_{1}+B_\ep$ is the set of datapoints such that $x^\top w-c\ge -\ep \|w\|_1$. Thus, restricting without loss of generality to $w$ such that $\|w\|_2=1$,
\begin{align*}
R(w,c;\ep) &=
\pi\cdot P_{\N(\mu,I)}(x^\top w-c\le \ep \|w\|_1)
+(1-\pi)\cdot P_{\N(-\mu,I)}(x^\top w-c\ge -\ep \|w\|_1)\\
=&\pi \cdot \Phi\left(c+ \ep \|w\|_1-\mu^\top w\right)
+(1-\pi)\cdot \Phi\left(-c+ \ep \|w\|_1-\mu^\top w\right).
\end{align*}
The minimizer is
$$c^* = \frac{q}{2\cdot (\mu^\top w-\ep \|w\|_1)}$$
where recall that $q=\log[(1-\pi)/\pi]$. This applies when $\mu^\top w-\ep \|w\|_1> 0$. If that does not happen, then the weight $w$ is not aligned properly with the problem, in the sense that it reduces the ``effective'' effect size to a negative value. Thus, we do not need to consider those cases.

Another way to put this is that for a weight $w$ with unit norm $\|w\|_2=1$, a linear classifier reduces the effect size from $\mu^\top w$ (which we can assume to be positive, without loss of generality, by flipping the sign if needed), to $\mu^\top w-\ep \|w\|_1$. So we can solve the problem:
\begin{align*}
\sup_w\,\, & \mu^\top w-\ep \|w\|_1\\
s.t.\,\, & \|w\|_2=1.
\end{align*}

First, we can WLOG restrict to weights $w$ which have the same sign as $\mu$, because for any $w$, flipping a sign of a coordinate such that it has the same sign as $\mu_i$ increases (or does not decrease, in the extreme case where $\mu_i$ or $w_i$ are zero), the objective. Moreover, we can also solve first the problem where all coordinates of $\mu$ are non-negative. (Then we can flip the signs of $w$ according to the sign of $\mu$ to recover the solution).

These simplifications lead to the problem with $\mu_i\ge0$
\begin{align*}
\sup_w\,\, & \sum_i[\mu_i-\ep]\cdot w_i\\
s.t.\,\, & \|w\|_2=1, \,\, w_i\ge 0.
\end{align*}
If, for some $i$, $\mu_i-\ep\le0$, then we need to set $w_i=0$. For the remaining coordinates, we can upper bound the objective value by the Cauchy-Schwarz inequality: $v^\top w\le \|v\|_2 \cdot \|w\|_2 = \|v\|_2$; with $v = \mu - \ep \cdot 1$ restricted to the positive coordinates. Moreover, to satisfy the unit norm constraint, we need to set $w^* = v/\|v\|_2$.

More generally, with negative coordinates, the solution will depend on the \emph{soft thresholding} operator $v = \eta(\mu,\lambda)$ well known in signal processing and statistics.

Specifically, we will have $v = \eta(\mu,\ep)$, and $w = v/\|v\|_2$. Then we also get 
\begin{align*}
c^* &= \frac{q}{2\cdot (\mu^\top w-\ep \|w\|_1)} = \frac{q}{2\cdot \|\eta(\mu,\ep)\|}
\end{align*}
This shows that the optimal classifier is $\sign\{\eta(\mu,\ep)^\top x - q/2\}$, as desired.
\qed

\subsection{Proof of \cref{linflin:threeclass}}
\label{proof:linflin:threeclass}

If $\|w\|_2 = 1$,
then the \revone{linear} interval classifier is
$\intclass(x;w,c_+,c_-) = \intclass(x^\top w;1,c_+,c_-)$,
and the problem effectively reduces
to a one-dimensional problem with new variable $\tlx_w \coloneqq x^\top w \in \bbR$,
which is the mixture of Gaussians $\tlx_w | y \sim \clN(y w^\top \mu,1)$,
where $\ep \|w\|_1$ is the corresponding one-dimensional perturbation.

Hence, the robust risk to minimize
with respect to weights $\|w\|_2 = 1$ and thresholds $c_+ \geq c_-$
is
\begin{align*}
  \tlR(w,c_+,c_-)
  &\coloneqq \robrisk\big\{ \intclass(\tlx_w;1,c_+,c_-), \ep\|w\|_1 \big\}
  \\
  &=
    \pi_- \Pr_{\tlx_w | y=-1}(\tlx_w > c_- - \ep\|w\|_1)
  + \pi_+ \Pr_{\tlx_w | y= 1}(\tlx_w < c_+ + \ep\|w\|_1)
  \\
  &\qquad
  + \pi_0 \Pr_{\tlx_w | y= 0}(
      \tlx_w \leq c_- + \ep\|w\|_1 \text{ or } \tlx_w \geq c_+ - \ep\|w\|_1
    )
  \\
  &=
    \pi_- \Pr_{\tlx_w | y=-1}(\tlx_w > c_- - \ep\|w\|_1)
  + \pi_+ \Pr_{\tlx_w | y= 1}(\tlx_w < c_+ + \ep\|w\|_1)
  \\
  &\qquad
  + \pi_0
    \min\Big\{1,
      \Pr_{\tlx_w | y= 0}(\tlx_w \leq c_- + \ep\|w\|_1)
      + \Pr_{\tlx_w | y= 0}(\tlx_w \geq c_+ - \ep\|w\|_1)
    \Big\}
  \\
  &=
    \pi_- \brPhi(c_- - \ep\|w\|_1 + w^\top \mu)
  + \pi_+   \Phi(c_+ + \ep\|w\|_1 - w^\top \mu)
  \\
  &\qquad
  + \pi_0
    \min\big\{1, \Phi(c_- + \ep\|w\|_1) + \brPhi(c_+ - \ep\|w\|_1) \big\}
  \\
  &= \min\{ \tlR_1(w,c_+,c_-), \tlR_2(w,c_+,c_-) \}
  ,
\end{align*}
where $\Phi$ is the normal CDF, its complement is $\brPhi \coloneqq 1-\Phi$,
and
\begin{align*}
  \tlR_1(w,c_+,c_-)
  &\coloneqq
    \pi_- \brPhi(c_- - \ep\|w\|_1 + w^\top \mu)
  + \pi_+   \Phi(c_+ + \ep\|w\|_1 - w^\top \mu)
  + \pi_0
  , \\
  \tlR_2(w,c_+,c_-)
  &\coloneqq
    \pi_- \brPhi(c_- - \ep\|w\|_1 + w^\top \mu)
  + \pi_+   \Phi(c_+ + \ep\|w\|_1 - w^\top \mu)
  \\
  &\qquad
  + \pi_0
    \big\{ \Phi(c_- + \ep\|w\|_1) + \brPhi(c_+ - \ep\|w\|_1) \big\}
  .
\end{align*}
Now, $\tlR_1$ amounts to the two-class setting in \cref{linflin}
and is likewise minimized by
\begin{align*}
  \tlw_1^* &= \frac{\eta_\ep(\mu)}{\|\eta_\ep(\mu)\|_2}
  , &
  c_+ &= c_-
  = \tlc^*
  = \frac{\ln(\pi_-/\pi_+)}{2\|\eta_\ep(\mu)\|_2}
  ,
\end{align*}
since $\tlR_1$
is a decreasing function (for $c_+ \geq c_-$ fixed)
in $w^\top \mu - \ep \|w\|_1$,
which is itself maximized by $\eta_\ep(\mu)$.
Assuming $\ep < \|\mu\|_\infty/2$
prevents the degenerate case where $\eta_\ep(\mu) = 0$,
and with $w=\tlw_1^*$ fixed,
minimization with respect to $c_+ \geq c_-$
is as in the proof of \cref{thm:opt:threeclass:threshopt};
note that $\eta_\ep(\mu)^\top \mu - \ep \|\eta_\ep(\mu)\|_1 = \|\eta_\ep(\mu)\|_2^2$.
Thus,
\begin{equation*}
  \inf_{\substack{\|w\|_2 = 1\\\;c_+ \geq c_-}} \tlR_1(w,c_+,c_-)
  = \tlR_1(\tlw_1^*,\tlc^*,\tlc^*)
  .
\end{equation*}
Next, note that $\tlR_2(w,c_+,c_-) \geq \tlR_1(w,c_+,c_-)$
when $c_- + \ep\|w\|_1 \geq c_+ - \ep\|w\|_1$
so we need only minimize $\tlR_2(w,c_+,c_-)$
over $c_- + \ep\|w\|_1 \leq c_+ - \ep\|w\|_1$,
which is equivalently expressed
via change of variables as
\begin{equation*}
  \inf_{\substack{\|w\|_2 = 1\\\;c_+ \geq c_- + 2\ep\|w\|_1}} \tlR_2(w,c_+,c_-)
  = \inf_{\substack{\|w\|_2 = 1\\\;\tau_+ \geq \tau_-}}
    \tlR_2(w,\tau_+ + \ep\|w\|_1,\tau_- - \ep\|w\|_1)
  .
\end{equation*}
For any $\tau_+ \geq \tau_-$,
\begin{align*}
  &\tlR_2(w,\tau_+ + \ep\|w\|_1,\tau_- - \ep\|w\|_1)
  \\
  &\qquad=
    \pi_- \brPhi(\tau_- - 2\ep\|w\|_1 + w^\top \mu)
  + \pi_+   \Phi(\tau_+ + 2\ep\|w\|_1 - w^\top \mu)
  + \pi_0
    \big\{ \Phi(\tau_-) + \brPhi(\tau_+) \big\}
\end{align*}
is a decreasing function of $w^\top \mu - 2\ep\|w\|_1$,
which is maximized by
$\tlw_2^* \coloneqq \eta_{2\ep}(\mu)/\|\eta_{2\ep}(\mu)\|_2$;
again the case $\eta_{2\ep}(\mu) = 0$ is prevented
by $\ep < \|\mu\|_\infty/2$.
Fixing $w=\tlw_2^*$,
minimization with respect to $c_+ \geq c_- + 2\ep\|w\|_1$
is as in the proof of \cref{thm:opt:threeclass:threshopt}.
Namely,
\begin{align*}
  \tlc_+^* &\coloneqq
  + \frac{(\tlw_2^*)^\top \mu}{2}
  + \frac{\ln(\pi_0/\pi_+)}{\|\eta_{2\ep}(\mu)\|_2}
  , &
  \tlc_-^* &\coloneqq
  - \frac{(\tlw_2^*)^\top \mu}{2}
  - \frac{\ln(\pi_0/\pi_-)}{\|\eta_{2\ep}(\mu)\|_2}
  ,
\end{align*}
are optimal if $\tlc_+^* \geq \tlc_-^* + 2\ep\|\tlw_2^*\|_1$,
and setting $c_+ = c_- + 2\ep\|\tlw_2^*\|_1$ is optimal otherwise.
Thus
\begin{equation*}
  \inf_{\substack{\|w\|_2 = 1\\\;c_+ \geq c_- + 2\ep\|w\|_1}} \tlR_2(w,c_+,c_-)
  =
  \begin{cases}
    \tlR_2(\tlw_2^*,\tlc_+^*,\tlc_-^*)
    , & \text{if } \tlc_+^* \geq \tlc_-^* + 2\ep\|\tlw_2^*\|_1
    , \\
    \inf_{c \in \bbR} \tlR_2(\tlw_2^*,c + 2\ep\|\tlw_2^*\|_1,c)
    , & \text{otherwise}
    .
  \end{cases}
  .
\end{equation*}
Putting it all together, we conclude that
\begin{align*}
  &
  \inf_{\substack{\|w\|_2 = 1\\\;c_+ \geq c_-}} \tlR(w,c_+,c_-)
  = \min\Bigg\{
    \inf_{\substack{\|w\|_2 = 1\\\;c_+ \geq c_-}} \tlR_1(w,c_+,c_-),
    \inf_{\substack{\|w\|_2 = 1\\\;c_+ \geq c_- + 2\ep\|w\|_1}} \tlR_2(w,c_+,c_-)
  \Bigg\}
  \\
  &\qquad
  =
  \begin{cases}
    \min\{\tlR_1(\tlw_1^*,\tlc^*,\tlc^*), \tlR_2(\tlw_2^*,\tlc_+^*,\tlc_-^*)\}
    , & \text{if } \tlc_+^* \geq \tlc_-^* + 2\ep\|\tlw_2^*\|_1
    , \\
    \min\{
      \tlR_1(\tlw_1^*,\tlc^*,\tlc^*),
      \inf_{c \in \bbR} \tlR_2(\tlw_2^*,c + 2\ep\|\tlw_2^*\|_1,c)
    \}
    , & \text{otherwise}
    ,
  \end{cases}
  \\
  &\qquad
  =
  \begin{cases}
    \min\{\tlR_1(\tlw_1^*,\tlc^*,\tlc^*), \tlR_2(\tlw_2^*,\tlc_+^*,\tlc_-^*)\}
    , & \text{if } \tlc_+^* \geq \tlc_-^* + 2\ep\|\tlw_2^*\|_1
    , \\
    \tlR_1(\tlw_1^*,\tlc^*,\tlc^*)
    , & \text{otherwise}
    ,
  \end{cases}
\end{align*}
where the final equality follows from the observation that
\begin{equation*}
  \tlR_2(\tlw_2^*,c + 2\ep\|\tlw_2^*\|_1,c)
  = \tlR_1(\tlw_2^*,c + 2\ep\|\tlw_2^*\|_1,c)
  \geq \tlR_1(\tlw_1^*,\tlc^*,\tlc^*)
  .
\end{equation*}
Hence, we have optimal \revone{linear} interval classifiers given by two cases:
i) $\tlw_1^*$ and $\tlc^*$,
or ii) $\tlw_2^*$ and $\tlc_\pm^*$.
Note that (ii) remains a valid/feasible choice
so long as $\tlc_+^* \geq \tlc_-^*$
even if $\tlc_+^* < \tlc_-^* + 2\ep\|\tlw_2^*\|_1$;
it may just be sub-optimal in that case.
Finally, noting that weights and thresholds can be scaled,
i.e.,
\begin{align*}
  \intclass(x;\tlw_1^*,\tlc^*,\tlc^*)
  &= \intclass(x;
    \tlw_1^*\|\eta_\ep(\mu)\|_2,
    \tlc^*\|\eta_\ep(\mu)\|_2,
    \tlc^*\|\eta_\ep(\mu)\|_2
  )
  , \\
  \intclass(x;\tlw_2^*,\tlc_+^*,\tlc_-^*)
  &= \intclass(x;
    \tlw_2^*\|\eta_{2\ep}(\mu)\|_2,
    \tlc_+^*\|\eta_{2\ep}(\mu)\|_2,
    \tlc_-^*\|\eta_{2\ep}(\mu)\|_2
  )
  ,
\end{align*}
and simplifying completes the proof.
\qed


\section{Proofs for finite sample analysis}

\subsection{Proof of \cref{prop:empirical:risk:opt:linloss}}
The first portion of this proof follows that of \revone{\cite[Lemma 10]{chen2020more}, in that we first consider the following problem:}
\begin{align*}
    w_n^* &\in \argmin_{\norm{w}_{\revone{2}} \leq 1} \sum_{i=1}^n \max_{\norm{\delta_i}_{\infty} \leq \ep} -y_i\left(\langle x_i + \delta_i, w\rangle\right) = \argmax_{\norm{w}_{\revone{2}}\leq 1} \sum_{i=1}^n \min_{\norm{\delta_i}_\infty \leq \ep} y_i\left(\langle x_i + \delta_i, w\rangle\right).
\end{align*}
\revone{In} the inner minimization problem, \revone{it holds that $\min_{\norm{\delta_i}_\infty \leq \ep} y_i\langle x_i + \delta_i, w\rangle = y_i\langle x_i, w\rangle - \ep\norm{w}_1$} by the definition of the dual norm.  Therefore the original problem takes the form
\begin{align*}
    w_n^* &\in \argmax_{\norm{w}_{\revone{2}}\leq 1} \sum_{i=1}^n y_i\langle x_i, w\rangle - \ep\norm{w}_1
    = \argmax_{\norm{w}_{\revone{2}}\leq 1} n\left(\langle u, w\rangle - \ep\norm{w}_1 \right)
\end{align*}
where we have defined $u \coloneqq \frac{1}{n}\sum_{i=1}^n y_ix_i$.  Now if we let $w(j)$ and $u(j)$ denote the $j^{\text{th}}$ components of the vectors $w$ and $u$ respectively, we have
\begin{align*}
    w_n^*\in \argmax_{\norm{w}_{\revone{2}}\leq 1} \sum_{j=1}^d u(j)w(j) - \ep|w(j)|
\end{align*}
Notice that if $u(j) \neq 0$, then $\sign(u(j)) = \sign(w_n^*(j))$ as flipping the signs will only make the $j^{\text{th}}$ term smaller.  On the other hand, if $u(j) = 0$, then the maximum is achieved when $w_n^*(j) = 0$.  Thus $\sign(u) = \sign(w_n^*)$.  Now in a similar way to what was done in the proof of \cref{linflin}, let us assume WLOG that $u \succeq 0$, which implies that $w_n^* \succeq 0$ as well.  Then we wish to solve
\revone{%
\begin{align*}
    \argmax_w \quad\hspace{1pt} \langle u - \ep \mathbbm{1}, w\rangle \quad \text{subject to} \:\: \norm{w}_2 \leq 1 , w\succeq 0.
\end{align*}
It} follows that $\revone{w_n^\star} = \eta(u, \ep) / \norm{\eta(u, \ep)}$ where $\eta$ is the soft-thresholding operator.  
\qed

\subsection{Proof of \cref{1dfs2}}

The formula of the robust risk for a classifier $\hat y$ is
$$R(\hat y,\ep)
=
P(y=1)P_{x|y=1}(S_{-1}+B_\ep)
+P(y=-1)P_{x|y=-1}(S_{1}+B_\ep).
$$
This expression holds for any classification problem, and the set $S_{1}$ (resp. $S_{-1}$) denotes the set of all $x \in \mathbb{R}^p$ which are classified to $+1$ (resp. $-1$) by the classifier $\hat{y}$. When $\hat{y}$ is a linear classifier, both sets $S_{1}$ and  $S_{-1}$ are half-spaces, e.g.\revone{,}
$S_{1} = \{x \in \mathbb{R}^p: w^T x  - c \geq 0 \} $. Furthermore, it is easy to see that the sets $S_{+1} + B_\ep$ and $S_{-1} + B_\ep$ are also half-spaces. E.g.\revone{,} we have 
$S_{1} + B_\ep = \{x \in \mathbb{R}^p: w^T x - c + \ep \| w \|_{*} \geq 0\}$ where $\|\cdot\|_{*}$ is the dual norm. In other words, we can interpret $S_{1} + B_ep$ as the set of all the points that are classified as $+1$ by a slightly shifted \emph{linear} classifier $(w, c - \ep\| w\|_{*})$. Hence, the term $P_{x|y=-1}(S_{1}+B_\ep)$ is the probability that the the new linear classifier $(w, c - \ep\| w\|_{*})$ labels a point $x$ as $+1$ while $x$ is generated conditioned on $y = -1$.  

Let now $(x_i,y_i)$ for $i=1,\ldots,n$ be sampled iid from a joint distribution $P_{x,y}$ for $i=1,\ldots,n$. Let the fraction of 1-s be $\pi_n\in[0,1]$. Let $P_{n\pm}$ be the empirical distributions of $x_i$ given $y_i=1$ and $-1$, respectively. We can write the finite sample robust risk as 
\begin{equation} \label{R_emp_formula}
R_n(\hat y,\ep)
=
\pi_n \cdot P_{n+}(S_{-1}+B_\ep)
+(1-\pi_n) \cdot P_{n-}(S_{1}+B_\ep).
\end{equation}
As explained above, for any linear classifier $(w,c)$ the sets $S_{1} + B_\ep$ and $S_{-1} + B_\ep$ are equivalent to half-spaces created by slightly shifted linear classifiers. Hence, considering the hypothesis class of all linear classifiers, the complexity of the sets $S_{1}$ (resp. $S_{-1}$)  is the same as the complexity of the sets $S_{+1} + B_\ep$ (resp. $S_{-1} + B_\ep$).  Now, by using standard arguments from uniform-convergence theory and PAC learning, and noting that the class of halfspaces has VC-dimension $p+1$, we conclude that 
For any $\delta > 0$,
\begin{equation*}
  \Pr\Big\{
    \forall_{(w,c) \in \bbR^p \times \bbR} \quad
    \Big| P_{n+}(S_{1} + B_\ep) - P_{x|y=1}(S_{-1}+B_\ep) \Big|
    \leq \delta
  \Big\} \geq 1- \exp(C( p - n\delta^2 )),
\end{equation*}
where $C$ is a constant independent of $n,p$. A similar result can be obtained for uniform concentration on the sets $S_{-1} + B_\ep$. We also note (using\revone{,} e.g.\revone{,} Hoeffding's inequality) that  $ \text{Pr} ( | \pi_n - P(y=1) | < \delta ) \geq 1-  2\exp(- n \delta^2)$. The result of the theorem now follows by incorporating the bounds obtained above into \eqref{R_emp_formula} and choosing $C$ sufficiently large but independent of $n,p$.    
\qed

\end{document}